\newcommand{\iid}{\textit{i.i.d.}\xspace} 
\newcommand{\EE}{\mathds{E}}
\DeclarePairedDelimiterX{\infdivx}[2]{(}{)}{
  #1\;\delimsize\|\;#2}
\def \Paren#1{{\left({#1}\right)}}
\DeclareMathOperator*{\Exp}{\EE}
\def \Ceil#1{{\left\lceil{#1}\right\rceil}}
\newcommand{\indic}{\mathds{1}}
\newtheorem{theorem}{Theorem}
\newtheorem{Theorem}{Theorem}
\newtheorem{Theorem*}{Theorem}
\newtheorem{Lemma}[Theorem]{Lemma}
\newtheorem{Lemma*}[Theorem]{Lemma}
\title{On Learning Markov Chains}
\author{
  Yi~HAO\\
  Dept. of Electrical and Computer Engineering \\
 University of California, San Diego\\
 La Jolla, CA 92093\\
  \texttt{yih179@ucsd.edu} \\
 \And
Alon~Orlitsky \\
 Dept. of  Electrical and Computer Engineering\\
 University of California, San Diego\\
 La Jolla, CA 92093\\
  \texttt{alon@ucsd.edu} \\
 \AND
Venkatadheeraj Pichapati\\
 Dept. of  Electrical and Computer Engineering\\
 University of California, San Diego\\
 La Jolla, CA 92093\\
  \texttt{dheerajpv7@ucsd.edu} \\
}
\begin{document}

\maketitle
\begin{abstract}
The problem of estimating an unknown discrete distribution from its samples is a fundamental tenet of statistical learning. Over the past decade, it attracted significant research effort and has been solved for a variety of divergence measures.  Surprisingly, an equally important problem, estimating an unknown Markov chain from its samples, is still far from understood. We consider two problems related to the min-max risk (expected loss) of estimating an unknown $k$-state Markov chain from its $n$ sequential samples: \emph{predicting} the conditional distribution of the next sample with respect to the KL-divergence, and \emph{estimating} the transition matrix with respect to a natural loss induced by KL or a more general $f$-divergence measure.

For the first measure, we determine the min-max prediction risk to within a linear factor in the alphabet size, showing it is $\Omega(k\log\log n/n)$ and $\mathcal{O}(k^2\log\log n/n)$. For the second, if the transition probabilities can be arbitrarily small, then only trivial uniform risk upper bounds can be derived. We therefore consider transition probabilities that are bounded away from zero, and resolve the problem for essentially all sufficiently smooth $f$-divergences, including KL-, $L_2$-, Chi-squared, Hellinger, and Alpha-divergences.
\end{abstract}
\section{Introduction}\label{sec1}

Many natural phenomena are inherently probabilistic.
With past observations at hand, probabilistic models can therefore
help us predict, estimate, and understand, future outcomes and trends. 
The two most fundamental probabilistic models for sequential data
are \iid processes and Markov chains. In an \iid\ process, for
each $i\geq{1}$, a sample $X_i$ is generated independently
according to the same underlying distribution.
In Markov chains, for each $i\ge2$, the distribution of sample $X_i$
is determined by just the value of $X_{i-1}$.

Let us confine our discussion to random processes over finite
alphabets, without loss of generality, assumed to be 
$[k]:=\{1,2,\ldots,k\}$. 
An \iid\ process is defined by a single distribution $p$ over $[k]$,
while a Markov chain is characterized by a transition probability matrix $M$ over $[k]\times[k]$.
We denote the initial and stationary distributions of a Markov model by $\mu$ and $\pi$, respectively. 
For notational consistency let $P=(p)$ denote an \iid model
and $P=(M)$ denote a Markov model.

Having observed a sample sequence $X^n:= X_1, \ldots, X_n$ from an
\emph{unknown} \iid process or Markov chain, a natural problem is to \emph{predict} the
next sample point $X_{n+1}$. 
Since $X_{n+1}$ is a random variable, this task is typically
interpreted as estimating the conditional probability distribution
$P_{x^n}:=\text{Pr}{(X_{n+1}=\cdot|X^n = x^n)}$ of the next sample point ${X_{n+1}}$. 

Let $[k]^*$ denote the collection of all finite-length sequences over
$[k]$. 

Therefore, conditioning on $X^n=x^n$, our first objective
is to estimate the conditional distribution
To be more precise, we would like to find an
\emph{estimator} $\hat{P}$, that associates with every sequence
$x^n\in [k]^*$ a distribution $\hat{P}_{x^n}$ over $[k]$ that
approximates $P_{x^n}$ in a suitable sense.

Perhaps a more classical problem is \emph{parameter estimation},
which describes the underlying process. An \iid\ process is 
completely characterized by $P_{x^n}=p$, hence this problem coincides
with the previous one. For Markov chains, we seek to estimate the
transition matrix $M$. Therefore, instead of producing a probability
distribution $\hat P_{x^n}$, the estimator $\hat{M}$ maps every sequence $x^n\in
[k]^*$ to a transition matrix $\hat M_{x^n}$ over $[k]\times[k]$.  

For two distributions $p$ and $q$ over $[k]$, let $L(p,q)$ be
the \emph{loss} when $p$ is approximated by $q$.
For the prediction problem, we measure the performance of an estimator
$\hat P$ in terms of its
\emph{prediction risk}, 
\[
\rho_{n}^{L}(P,\hat{P})
:=
\Exp_{X^n\sim P}[L(P_{X^n},\hat{P}_{X^n})]
=\sum_{x^n\in [k]^n} P(x^n)L(P_{x^n},\hat{P}_{x^n}),
\]
the expected loss with respect to the sample
sequence $X^n$, where $P(x^n):=\text{Pr}{(X^n=x^n)}$.

For the estimation problem, we quantify the performance of the
estimator by \emph{estimation risk}. We first consider the expected
loss of $\hat M$ with respect to a single state $i\in [k]$:
\[
\Exp_{X^n\sim (M)}[L(M(i,\cdot), \hat M_{X^n}(i,\cdot))].
\]
We then define the estimation risk of $\hat{M}$ given sample sequence $X^n$ as the maximum expected loss over all states,
\[
{\varepsilon}_n^L(M,\hat{M}) := \max_{i\in [k]}\Exp_{X^n\sim (M)}[L(M(i,\cdot), \hat{M}_{X^n}(i,\cdot))].
\]

While the process $P$ we are trying to learn is unknown, it often
belongs to a known collection $\mathscr{P}$. 
The worst prediction risk of an estimator $\hat{P}$ over all
distributions in $\mathscr{P}$ is
\[
\rho_n^L(\mathscr{P}, \hat{P}):= \max_{P\in \mathscr{P}} \rho_n^L(P,\hat{P}).
\]
The minimal possible worst-case prediction risk, or simply the
\emph{minimax prediction risk}, incurred by any estimator is
\[
\rho_n^L(\mathscr{P}):= \min_{\hat{P}}\rho_n^L(\mathscr{P}, \hat{P})=\min_{\hat{P}}\max_{P\in \mathscr{P}} \rho_n^L(P,\hat{P}).
\]
The \emph{worst-case estimation risk} ${\varepsilon}_n^L(\mathscr{P},\hat{M})$
and the \emph{minimax estimation risk} ${\varepsilon}_n^L(\mathscr{P})$
are defined similarly. 
Given $\mathscr{P}$, our goals are 
to approximate the minimax prediction/estimation risk to a universal
constant-factor, and to devise estimators that achieve this performance.

An \emph{alternative} definition of the estimation risk, considered in~\cite{Moein16N}
and mentioned by a reviewer, is 
\[
\tilde{{\varepsilon}}_n^L(M,\hat{M}) := \sum_{i\in [k]} \pi_i \cdot \Exp_{X^n\sim (M)}[L(M(i,\cdot), \hat{M}_{X^n}(i,\cdot))].
\]
We denote the corresponding \emph{minimax estimation risk} by $\tilde{{\varepsilon}}_n^L(\mathscr{P})$.

{\em Let $o(1)$ represent a quantity that vanishes as $n\rightarrow \infty$. In the following, we use $a\lesssim b$ to denote $a\leq b(1+o(1))$, and $a\asymp b$ to denote $a\leq b(1+o(1))$ and $b\leq a(1+o(1))$.}

For the collection $\mathds{IID}^k$ of all the \iid\ processes over
$[k]$, the above two formulations coincide and the problem is
essentially the classical discrete distribution estimation
problem. The problem of determining $\rho_n^L(\mathds{IID}^k)$ was
introduced by~\cite{gilbert}
and studied in a sequence of
papers~\cite{cov72,kri81,Braess02,Pan04,BraessS04}.
For fixed $k$ and KL-divergence loss, as $n$ goes to
infinity, ~\cite{BraessS04} showed that
\[
\rho_n^{\text{KL}}(\mathds{IID}^k)
\asymp
\frac{k-1}{2n}.
\]
KL-divergence and many other important similarity measures between
two distributions can be expressed as
$f$-divergences~\cite{Csi67}. Let $f$ be a convex function with
$f(1)=0$, the $f$-divergence between two distributions $p$ and $q$
over $[k]$, whenever well-defined, is $D_f(p,
q):=\sum_{i\in[k]}q(i)f\Paren{{p(i)}/{q(i)}}$.  Call an $f$-divergence
\emph{ordinary} if $f$ is thrice continuously differentiable over
$(0,\infty)$, sub-exponential, namely,
$\lim_{x\rightarrow\infty}|f(x)|/e^{cx}=0$ for all $c>0$, and
satisfies $f''(1)\ne 0$.

Observe that all the following notable measures are ordinary $f$-divergences:
Chi-squared divergence~\cite{Fran14} from $f(x)=(x-1)^2$,
KL-divergence~\cite{Solo51} from $f(x)=x\log x$, Hellinger
divergence~\cite{Mik01} from $f(x)=(\sqrt{x}-1)^2$, and
Alpha-divergence~\cite{Gav17} from
$f_{\alpha}(x):={4}(1-x^{(1+\alpha)/2})/{(1-\alpha^2)}$,  where
$\alpha\not=\pm 1$.
\paragraph{Related Work} For any $f$-divergence, we denote the
corresponding minimax prediction risk for an $n$-element sample over
set $\mathscr{P}$ by $\rho^f_n(\mathscr{P})$.
Researchers in~\cite{KamathOPS15} considered the problem of determining $\rho^f_n(\mathds{IID}^k)$ for the ordinary $f$-divergences.
Except the above minimax formulation, recently, researchers also considered formulations that are more adaptive to the underlying \iid\ processes~\cite{gt15}~\cite{va16}.
Surprisingly, while the min-max risk of \iid\ processes was addressed in a large
body of work, the risk of Markov chains, which frequently arise in practice, was
not studied until very recently.

Let $\mathds{M}^k$ denote the collection of all the Markov chains over $[k]$.
For prediction with KL-divergence,~\cite{Moein16} showed that
$\rho_n^{\text{KL}}(\mathds{M}^k)=\Theta_k\left({\log\log n}/{n}\right)$,
but did not specify the dependence on $k$.
For estimation,~\cite{Geoffrey18} considered 
the class of Markov Chains whose pseudo-spectral gap is bounded away from
0 and approximated the $L_1$ estimation risk to within a $\log n$
factor. Some of their techniques, in particular the
lower-bound construction in their displayed equation $(4.3)$, are 
of similar nature and were derived independently of results in
Section~\ref{est_low} in our paper. 

Our first main result determines the dependence of
$\rho_n^{\text{KL}}(\mathds{M}^k)$ on both $k$ and $n$,
to within a factor of roughly $k$:
\begin{theorem}
The minimax KL-prediction risk of Markov chains satisfies
\[
\frac{(k-1)\log\log n}{4en}\lesssim \rho_n^{\text{KL}}(\mathds{M}^k)\lesssim\frac{2k^2\log\log n}{n}.
\]
\end{theorem}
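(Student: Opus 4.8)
The plan is to prove the two inequalities by separate arguments: the upper bound via a state‑wise analysis of a plug‑in estimator with sample‑adaptive smoothing, and the lower bound via an explicit ``remote‑hub'' family of chains together with a mixture (Bayes) argument.

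\textbf{Upper bound.} I would use the add‑$\beta$ predictor applied to each row, but with a smoothing level that shrinks as the relevant state is seen more often. Given $X^n=x^n$, write $N_i:=|\{t\le n-1:x_t=i\}|$ for the number of transitions observed out of state $i$ and $N_{ij}$ for those landing in $j$, and predict $\hat P_{x^n}(j):=(N_{x_n,j}+\beta_{N_{x_n}})/(N_{x_n}+k\beta_{N_{x_n}})$ with $\beta_m\asymp 1/\log\!\big(\tfrac{n}{m+1}\big)$. The risk decomposes as $\rho_n^{\mathrm{KL}}(M,\hat P)=\sum_{i}\EE\big[D\big(M(i,\cdot)\,\big\|\,\hat P_{X^n}(\cdot)\big)\,\indic\{X_n=i\}\big]$, and for each $i$ I would condition on $N_i=m$; by the strong Markov property the transition counts out of $i$ are (conditionally, up to an off‑by‑one) multinomial, so a standard add‑$\beta$ estimate gives a per‑$m$ bound whose only non‑benign part is the ``underestimation'' term of order $\tfrac{(k-1)\beta_m}{m}$ coming from rows in which a small transition probability has not yet been observed (capped by $O(\log k)$ for tiny $m$). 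The decisive step is then to bound $\Pr(X_n=i,\,N_i=m)$ — using the deterministic fact that each state can be ``fresh'' (seen $\le m$ times) only during its first $m+1$ visits — and to sum over $m$: because $\beta_m\asymp 1/\log(n/m)$, the dangerous sum is $\sum_{m\le n}\tfrac{1}{m\log(n/m)}$, which telescopes (substitute $t=\log(n/m)$) to $\Theta(\log\log n)$. This is exactly where the double logarithm appears, and the point of letting $\beta_m$ depend on $n$ and $m$ is to balance under‑smoothing (which costs a $\log(n/m)$ factor for slowly‑visited states, and would give only $\log n/n$ with a constant $\beta$) against over‑smoothing (which biases the row toward uniform). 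Summing the resulting $\frac{(k-1)\log\log n}{n}$‑type estimate over the $k$ states, and absorbing the ordinary $\frac{k-1}{2n\pi_i}$ contributions, yields $\rho_n^{\mathrm{KL}}(\mathds M^k)\lesssim \frac{2k^2\log\log n}{n}$.

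\textbf{Lower bound.} I would construct a $k$‑state chain that (i) migrates to a distinguished ``hub'' state at a roughly uniform random time in $[0,n]$, so the hub is visited a number $N_0$ of times that is spread over $\{1,\ldots,\Theta(n)\}$, and (ii) has, out of the hub, $k-1$ small transition probabilities. Concretely: from state $1$ self‑loop with probability $1-\tfrac1n$ and move to the hub $0$ with probability $\tfrac1n$; from $0$ self‑loop with probability $1-\sum_{j\ge2}p_j$ and move to satellite $j$ with probability $p_j$; each satellite returns to $0$ deterministically; start at $1$. With constant probability $X_n=0$, and conditioned on this, $N_0$ is (essentially) uniform on $[0,\Theta(n)]$ while no satellite transition out of $0$ has yet been seen once $N_0=m$ is not much larger than $\log n$. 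Placing a prior on each $p_j$ spread over a range like $[\tfrac1n,\tfrac1{\log n}]$ (so that, given $N_{0j}=0$ and $N_0=m$, the posterior on $p_j$ is only pinned down to the scale $\tfrac1{m\log(n/m)}$), any estimator incurs a row‑$0$ loss of order $\tfrac{k-1}{m\log(n/m)}$ for each such $m$; averaging against $\Pr(X_n=0,N_0=m)\asymp\tfrac1n$ and summing over $m$ reproduces $\sum_m \tfrac1{m\log(n/m)}\asymp\log\log n$, hence $\rho_n^{\mathrm{KL}}(\mathds M^k)\gtrsim\frac{(k-1)\log\log n}{4en}$; the $e$ comes from optimizing the ``stay‑at‑state‑$1$'' probability (the $\lambda e^{-\lambda}$ at $\lambda=1$), the factor $k-1$ from the $k-1$ independent satellite coordinates, and the $\tfrac14$ from optimizing the prior spread.

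\textbf{Main obstacle.} The hard part is not setting up either construction but making the \emph{upper‑bound} per‑state analysis tight: the naive bound ``conditional loss given $m$ observations is at most $(k-1)/(2m)$'' is worst‑case over the row and, summed against the spread‑out visit count, yields only $\log n/n$; one must instead exploit that each row of $M$ is \emph{fixed} (so that, once a state is seen often, its row is localized far better than any fixed smoothing level would suggest) and argue that the residual cost, summed over $m$, contracts to $\log\log n$. Getting this contraction and matching it on the lower‑bound side — while also tracking the $k$‑dependence carefully enough to land between $k$ and $k^2$ — is the crux of the argument.
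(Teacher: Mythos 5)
Your high-level plan shares the paper's core intuition — that the $\log\log n$ factor arises from the visit count of the current state being spread over a doubly-exponential range of scales, so that the residual uncertainty in a row observed $m$ times sits at scale $\tfrac{1}{m\log(n/m)}$ — but the concrete implementations diverge in ways that matter, and the upper-bound argument as written has a real gap.

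\textbf{Where the approaches genuinely differ.} For the upper bound, you decompose by conditioning on the total visit count $N_{X_n}=m$ and use an adaptive smoothing level $\beta_m\asymp 1/\log(n/(m+1))$. The paper instead splits $[k]^n$ into $\mathscr{K}_n$ (sequences ending in a pure run ${\bar i}^{\,n-\ell}i^\ell$) and its complement: on $\overline{\mathscr{K}_n}$ a plain add-$\tfrac12$ estimator already gives $O_k(1/n)$, and on $\mathscr{K}_n$ the estimator assigns $1-\tfrac{1}{\ell\log n}$ to the repeated state for $\ell\le n/2$ and $1-\tfrac1\ell$ for $\ell>n/2$. So the paper conditions on the \emph{terminal run length} $\ell$, not the total visit count, and switches to a coarser smoothing level when the run occupies more than half the sequence — precisely because $1/(\ell\log(n/\ell))$ blows up as $\ell\uparrow n$. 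For the lower bound, your remote-hub chain (a single hub with $k-1$ small outgoing probabilities) is a different construction from the paper's $k/2$ interlocking pairs with one small probability $p_i\in V_n:=\{\log^{-t}n\}$ each; the paper's choice lets the Bayes-optimal posterior factor exactly across pairs and be computed in closed form, which your single-hub design would not permit as cleanly.

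\textbf{The gap.} Your per-$m$ bound must be weighted by $\Pr(X_n=i,\,N_i=m)$, and the assertion that this is controlled by the ``deterministic fact that each state can be fresh only during its first $m+1$ visits'' does not deliver a usable bound: that observation gives $\sum_{t\le n}\Pr(X_t=i,\,N_i(t)=m)\le 1$, but says nothing about the fixed time $t=n$. The paper closes exactly this hole with the Norris--Peres--Zhai surprise-probability bound (Lemma~\ref{lemma6}): $\Pr_i(\tau(j)=n)\le k/n$ for $n>k$, which after the run-length decomposition gives $h_{i,n-\ell}\le 2k/n$ for $\ell\le n/2$. Without this (or an equivalent), your ``dangerous sum'' $\sum_m \frac{1}{m\log(n/m)}\,\Pr(X_n=i,N_i=m)$ is not bounded by $\Theta(\log\log n)/n$; and as a purely analytic matter, $\sum_{m\le n}\frac{1}{m\log(n/m)}$ diverges near $m=n$, so the estimator needs the same two-regime cap the paper uses. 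On the lower-bound side, the prior on each $p_j$ must be (log-)uniform over scales $\{\log^{-t}n\}$ — uniform on the interval $[1/n,1/\log n]$ concentrates near the top scale and would not leave $\Theta(\log\log n)$ residual uncertainty after $m$ observations. With those two repairs (invoke a hitting-time bound and use a log-uniform prior), the rest of your plan is a plausible, if less explicitly computable, alternative route to the same bounds.
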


Depending on $M$, some states may be observed very
infrequently, or not at all. This does not drastically affect 
the prediction problem as these states will be also have small
impact on $\rho_n^{\text{KL}}(\mathds{M}^k)$ in the prediction risk $\rho_{n}^{L}(P,\hat{P})$. 
For estimation, however, rare and unobserved states still
need to be well approximated, hence
$\varepsilon_n^L(\mathds{M}^k)$
does not decrease with $n$, and for example
$\varepsilon_n^{\text{KL}}(\mathds{M}^k)=\log k$ for all $n$.

We therefore parametrize the risk by the lowest probability
in the transition matrix.
For $\delta>0$ let 
\[
\mathds{M}^k_{\delta}
:=\{(M): M_{i,j}\ge\delta,\ \forall i,j\},
\]
be the collection of Markov chains whose lowest transition
probability exceeds $\delta$. Note that $\mathds{M}^k_{\delta}$ is trivial if $\delta\ge 1/k$, we only consider $\delta\in(0,1/k)$.
We characterize the minimax estimation risk of $\mathds{M}^k_{\delta}$
almost precisely. 
\begin{theorem}\label{thm2}
For all ordinary $f$-divergences and all $\delta\in(0,1/k)$,
\[
\tilde{{\varepsilon}}_n^f(\mathds{M}^k_{\delta})\asymp\frac{(k-1)kf''(1)}{2n}
\]
and 
\[
(1-\delta)\frac{(k-2)f''(1)}{2n\delta}\lesssim \varepsilon_n^{\text{f}}(\mathds{M}^k_{\delta})\lesssim\frac{(k-1)f''(1)}{2n\delta}.
\]
\end{theorem}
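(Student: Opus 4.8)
The plan is to reduce the Markov estimation problem, state by state, to the classical i.i.d.\ distribution estimation problem with $f$-divergence loss, for which a sharp asymptotic risk of $(k-1)f''(1)/(2m)$ on $m$ samples is known (this is the ``local'' second-order behavior that underlies the $\rho_n^{\mathrm{KL}}(\mathds{IID}^k)\asymp (k-1)/(2n)$ result of \cite{BraessS04} and its $f$-divergence generalization in \cite{KamathOPS15}). Fix a chain $M\in\mathds{M}^k_\delta$ and a state $i$. Conditioned on the set of times $t\le n$ at which $X_t=i$, the next-symbol values $X_{t+1}$ are i.i.d.\ draws from the row $M(i,\cdot)$. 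So if $N_i:=|\{t< n: X_t=i\}|$ is the visit count, the conditional risk of the ``add-constant'' (or Braess--Sauer-type) row estimator equals the i.i.d.\ risk with sample size $N_i$, which is $(1+o(1))(k-1)f''(1)/(2N_i)$ provided $N_i$ is large. The whole argument then comes down to controlling $N_i$: its typical size and the contribution of the event that $N_i$ is small.

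For the \emph{upper bounds} I would: (i) show that for any $M\in\mathds{M}^k_\delta$ the stationary probability of each state is at least $\delta$ (since every row has all entries $\ge\delta$, $\pi_j=\sum_i\pi_iM(i,j)\ge\delta$), and the chain mixes fast (Doeblin/uniform ergodicity with coefficient $1-k\delta$ is immediate from $M_{i,j}\ge\delta$), so $\EE[N_i]=(1+o(1))n\pi_i\ge(1+o(1))n\delta$ and $N_i$ concentrates around its mean; (ii) plug $N_i\approx n\pi_i$ into the i.i.d.\ risk to get, for the $\pi$-weighted risk, $\tilde\varepsilon_n^f\lesssim \sum_i\pi_i\cdot\frac{(k-1)f''(1)}{2n\pi_i}=\frac{(k-1)kf''(1)}{2n}$, and for the max-risk, $\varepsilon_n^f\lesssim\max_i\frac{(k-1)f''(1)}{2n\pi_i}\le\frac{(k-1)f''(1)}{2n\delta}$; (iii) handle the low-probability tail $\{N_i\text{ small}\}$: because $f$ is sub-exponential, $L(M(i,\cdot),\hat M(i,\cdot))=D_f$ is at most some fixed polynomial in $1/\delta$ on this domain (entries of $\hat M$ can be kept bounded below by flooring at, say, $\delta/2$), while $\Pr[N_i\le \varepsilon n\delta]$ is exponentially small in $n$ by the concentration in (i), so this term is $o(1/n)$ and does not affect the leading constant. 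The constants $k-1$ (dimension of a row minus one), $k$ (number of rows, absorbed via $\sum_i\pi_i\cdot\pi_i^{-1}=k$), and $1/\delta$ (worst-case smallest stationary probability) all enter exactly here.

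For the \emph{lower bounds} I would use a two-point / Bayesian construction local to a fixed chain. For the $\pi$-weighted bound, perturb each row independently around a base point (e.g.\ near-uniform rows), so that estimating row $i$ is exactly a hardest i.i.d.\ estimation instance on $\approx n\pi_i$ effective samples, giving per-row risk $\gtrsim (k-1)f''(1)/(2n\pi_i)$ and hence $\tilde\varepsilon_n^f\gtrsim\sum_i\pi_i(k-1)f''(1)/(2n\pi_i)=(k-1)kf''(1)/(2n)$; matching van Trees / Assouad-type arguments as in the i.i.d.\ lower bounds of \cite{KamathOPS15} supply the sharp constant $f''(1)/2$ via the local Fisher information $f''(1)$ of $D_f$. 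For the max-risk bound I would instead make a \emph{single} state $i^\star$ as rarely visited as allowed — take a chain all of whose transition probabilities into $i^\star$ equal $\delta$, forcing $\pi_{i^\star}\approx\delta$ and $N_{i^\star}\approx n\delta$ — and perturb only row $i^\star$ among its $k-1$ ``free'' off-$\delta$ coordinates (hence the $k-2$ rather than $k-1$, and the extra factor $1-\delta$ accounting for the mass already pinned at $\delta$), yielding $\varepsilon_n^f\gtrsim(1-\delta)(k-2)f''(1)/(2n\delta)$.

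The main obstacle I anticipate is making the reduction to the i.i.d.\ risk \emph{sharp in the constant} rather than merely order-correct: the random sample size $N_i$ is itself data-dependent and fluctuates, so one must show that averaging the i.i.d.\ risk $\frac{(k-1)f''(1)}{2N_i}$ over the distribution of $N_i$ still gives $\frac{(k-1)f''(1)}{2\EE[N_i]}(1+o(1))$ — i.e.\ that $\EE[1/N_i]=(1+o(1))/\EE[N_i]$ — which needs both concentration of $N_i$ (from fast mixing) and control of the event $N_i$ small (from sub-exponentiality of $f$ plus flooring of $\hat M$), and on the lower-bound side one needs the perturbation size to be tuned to the \emph{typical} $N_i$ while the Bayesian risk computation remains clean. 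Establishing uniform mixing and the stationary-probability lower bound $\pi_i\ge\delta$ from $M_{i,j}\ge\delta$ is routine; splicing it together with the i.i.d.\ constants is where the care goes.
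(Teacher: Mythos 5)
Your plan is essentially aligned with the paper's strategy at a high level: reduce the per-row estimation problem to an i.i.d.\ distribution-estimation problem, control the visit counts $N_i$ via the Doeblin/uniform-ergodicity consequence of $M_{ij}\ge\delta$, and handle the tail via sub-exponentiality of $f$. The lower-bound sketch (perturbing rows around near-uniform; making one state rare by pinning all incoming transitions at $\delta$) is also the paper's construction in spirit — the paper fixes all rows but the last to the same vector $p^*$ and perturbs the last row inside an $L_\infty$ ball $B_{k-1}(1/n')$, then invokes Lemma 11 (an explicit bijection reducing to i.i.d.\ estimation over $\Delta_{k-1}$ on $\approx n\pi^*$ samples) and the KamathOPS15 i.i.d.\ lower bound, which is exactly what you are gesturing at.

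However, there is a genuine gap in your reduction for the \emph{upper bound}: the claim that ``conditioned on the set of times $t\le n$ at which $X_t=i$, the next-symbol values $X_{t+1}$ are i.i.d.\ draws from $M(i,\cdot)$'' is false. Conditioning on the visit times determines several of those transitions outright (if $t$ and $t+1$ are both visit times then $X_{t+1}=i$ deterministically; if $t$ is a visit time but $t+1$ is not then $X_{t+1}$ is conditioned to be $\ne i$). Even the weaker statement — conditioned on $N_i=m$, the counts $(N_{i1},\dots,N_{ik})$ are multinomial$(m,M(i,\cdot))$ — is false for Markov chains: $N_i$ is correlated with the excursion lengths and hence with which transitions occurred, so the conditional law of the transition counts given the state counts is not a product of multinomials. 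You anticipate a difficulty in ``averaging $\frac{(k-1)f''(1)}{2N_i}$ over $N_i$,'' but the deeper issue is that the conditional risk given $N_i$ does \emph{not} equal the i.i.d.\ risk with sample size $N_i$; asserting it assumes precisely the point that needs proof. The paper sidesteps this entirely: instead of any conditioning argument, it concentrates $N_i$ and $N_{ij}$ via Lemma 17, Taylor-expands $D_f$ around $h(x)=f(1/(1+x))$, and then computes the relevant second moments $\EE N_i^2$ and $\EE N_{ij}^2$ directly from the chain's two-time marginals. The key cancellation happens there: the cross-time correlation terms $\sum_{t_1<t_2}\Pr(X_{t_2}=i\mid X_{t_1+1}=j)$ appear in both $\EE N_i^2$ (weighted by $M_{ij}$ and summed over $j$) and $\sum_j\frac{1}{M_{ij}}\EE N_{ij}^2$, and they cancel, leaving $-\EE N_i^2+\sum_j\frac{1}{M_{ij}}\EE N_{ij}^2\le(k-1)n'\pi_i+\mathcal{O}(k)$, which \emph{is} the multinomial answer to leading order — but this is a conclusion of the computation, not an assumption. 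On the lower-bound side, the i.i.d.\ reduction in Lemma 11 does go through exactly, but only because the prior $\mathscr{M}_S$ is built so that the $k\to k$ transition probability is constant over the prior (equal to $\pi^*$), which decouples the likelihood ratio from the visit-time structure; it is not a general fact about Markov chains. To repair your argument you would either need to (a) prove a quantitative statement that the conditional law of the transition counts given $N_i$ matches the multinomial to the required $o(1/n)$ accuracy (e.g.\ via Whittle's formula or an edge-count argument), or (b) abandon the conditioning and do the direct moment calculation as the paper does.
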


We can further refine the estimation-risk bounds by partitioning $\mathds{M}^k_{\delta}$
based on the smallest probability in the chain's stationary distribution $\pi$.
Clearly, $\min_{i\in[k]}\pi_i\le 1/k$. 
For $0<\pi^*\le 1/k$ and $0<\delta<1/k$, let
\[
\mathds{M}^k_{\delta,\pi^*}
:=\{(M): (M)\in \mathds{M}^k_{\delta} \text{ and } 
\min_{i\in[k]}\pi_i=\pi^*\}
\]
be the collection of all Markov chains in $ \mathds{M}^k_{\delta}$
whose lowest stationary probability is $\pi^*$.  
We determine the minimax estimation risk over
$\mathds{M}^k_{\delta,\pi^*}$ nearly precisely. 

\begin{theorem}\label{thm3}
For all ordinary $f$-divergences, 
\[
(1-\pi^*)\frac{(k-2)kf''(1)}{2n}\lesssim\tilde{{\varepsilon}}_n^f(\mathds{M}^k_{\delta,\pi^*})\lesssim\frac{(k-1)kf''(1)}{2n}
\]
and 
\[
(1-\pi^*)\frac{(k-2)f''(1)}{2n\pi^*}\lesssim \varepsilon_n^{\text{f}}(\mathds{M}^k_{\delta,\pi^*})\lesssim\frac{(k-1)f''(1)}{2n\pi^*}.
\]
\end{theorem}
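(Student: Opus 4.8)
The plan is to refine the proof of Theorem~\ref{thm2} by tracking the role of the smallest stationary probability $\pi^*$ rather than bounding it crudely by $1/k$. The key observation is a local-expansion heuristic: for a Markov chain $(M)\in\mathds{M}^k_{\delta,\pi^*}$, the number of visits $N_i$ to state $i$ before time $n$ concentrates around $n\pi_i$, and conditioned on $N_i=m$, the empirical next-state distribution from state $i$ behaves like $m$ i.i.d.\ samples from the row $M(i,\cdot)$. Since every row lies in the interior of the simplex (entries $\ge\delta$), no boundary effects arise, and for any ordinary $f$-divergence the single-row risk of the add-constant (or plug-in) estimator based on $m$ samples satisfies $\Exp[D_f(M(i,\cdot),\hat M(i,\cdot))]\asymp (k-1)f''(1)/(2m)$, by the same second-order Taylor argument used for $\mathds{IID}^k$ in~\cite{BraessS04,KamathOPS15} together with the hypothesis $f''(1)\ne 0$. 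I would state this per-row estimate as a lemma (or cite it from the Theorem~\ref{thm2} development) and then assemble the two risk notions from it.

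For the averaged risk $\tilde{{\varepsilon}}_n^f$, I would take expectations over $N_i$: $\sum_i \pi_i\Exp[D_f(M(i,\cdot),\hat M_{X^n}(i,\cdot))]\asymp \sum_i \pi_i\cdot (k-1)f''(1)/(2n\pi_i)=(k-1)kf''(1)/(2n)$, giving the upper bound. For the lower bound I would restrict to a hard subfamily: fix one state to carry stationary mass exactly $\pi^*$ and distribute the remaining $1-\pi^*$ mass over the other $k-1$ states, then apply a Bayes/Assouad-type argument (a product prior on each row that is near-uniform, as in the i.i.d.\ lower bounds) so that each of the $k-2$ "well-visited" rows contributes $\pi_i\cdot(k-1)f''(1)/(2n\pi_i)$; summing over those $k-2$ rows and using $\sum \pi_i \le 1-\pi^*$ on that block yields $(1-\pi^*)(k-2)kf''(1)/(2n)$. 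For the worst-state risk $\varepsilon_n^f=\max_i\Exp[\cdots]$, the relevant row is the one visited least, i.e.\ the state achieving $\pi^*$; its visit count is $\approx n\pi^*$, so the plug-in estimator achieves $\lesssim (k-1)f''(1)/(2n\pi^*)$, and the matching lower bound $(1-\pi^*)(k-2)f''(1)/(2n\pi^*)$ comes from the same product-prior construction applied to that single row, with the factor $(1-\pi^*)$ absorbing the probability that $N_i$ deviates below its mean and the loss from using $k-2$ rather than $k-1$ free directions.

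The main obstacle is making the concentration of $N_i$ around $n\pi_i$ quantitatively compatible with the $\asymp$ (i.e.\ $1+o(1)$) precision demanded by the theorem: I cannot simply condition on $N_i=n\pi_i$, because the fluctuations of $N_i$ are of order $\sqrt{n}$ and the risk is a convex (roughly $1/m$) function of the visit count, so Jensen-type corrections must be shown to be lower order. This requires a mixing-time or spectral-gap input — but note that $\mathds{M}^k_{\delta}$ is uniformly ergodic with mixing time bounded in terms of $\delta$ and $k$ only, so the correlations decay geometrically and $\mathrm{Var}(N_i)=O(n)$ with the implied constant independent of $n$; hence $\Exp[1/N_i]=(1+o(1))/(n\pi_i)$ by a standard truncation (discard the event $N_i< n\pi_i/2$, which has exponentially small probability, and Taylor-expand $1/N_i$ around $n\pi_i$ on the complement). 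A secondary technical point is handling the initial-distribution dependence and the $n$th sample's missing transition, both of which contribute only $O(1/n)$ and are swallowed by $o(1/n)$. Once these two estimates are in place, the upper and lower bounds follow by the bookkeeping above, and the gap between them is exactly the $(k-1)$ versus $(1-\pi^*)(k-2)$ discrepancy inherited from the i.i.d.\ per-row bounds.
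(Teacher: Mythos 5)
Your proposal follows the same high-level strategy as the paper: add-constant estimator plus concentration of visit counts for the upper bound, and a hard sub-family with a Bayes argument that reduces to i.i.d.\ distribution estimation (\`{a} la Kamath et al.) for the lower bound, with the prediction $\varepsilon_n^f\asymp (k-1)f''(1)/(2n\pi^*)$ and $\tilde\varepsilon_n^f\asymp (k-1)kf''(1)/(2n)$ emerging from a per-row estimate $\approx (k-1)f''(1)/(2m)$ at visit count $m\approx n\pi_i$. So the outline is right. But there is one genuine gap at the center of the argument.

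The per-row heuristic ``conditioned on $N_i=m$, the empirical next-state distribution from state $i$ behaves like $m$ i.i.d.\ samples from $M(i,\cdot)$'' is not correct for a generic Markov chain: $N_i$ is itself a function of the transitions out of state $i$ (a return to $i$ is more likely if $M_{ii}$ is large, and the event $N_i=m$ tilts the sample), so conditioning on $N_i$ changes the law of those transitions. The paper handles this differently on each side. For the lower bound it constructs a hard family in which the first $k-1$ rows of $M$ are \emph{fixed} and the $k$-th diagonal entry is fixed at $\pi^*$; the only free parameters sit in $M(k,[k-1])$. With that structure the probability of the full visit-time set $\mathds{I}(X^n)$ is the \emph{same} for every chain in the family, and, conditional on $\mathds{I}(X^n)$, the transitions out of $k$ that land in $[k-1]$ really are i.i.d.\ with law $p'$ (Lemma~\ref{lemma11}). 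The i.i.d.\ reduction is thus exact, not heuristic, but only because the prior was engineered to make it so. For the upper bound the paper does not condition at all; after the Taylor expansion of $D_f$ via $h(x)=f(1/(1+x))$ the risk reduces to the combination $-\EE N_i^2 + \sum_j \EE N_{ij}^2/M_{ij}$, and a direct computation shows the Markov cross-time correlation terms in $\EE N_i^2$ and $\EE N_{ij}^2$ cancel exactly in that combination, leaving $(k-1)n'\pi_i+O(k)$. Your ``truncate $N_i$ away from $0$ and Taylor-expand $1/N_i$'' step would quietly rely on the conditional law of the $N_{ij}$'s given $N_i$ being multinomial, which is the very thing that fails; you need either the cancellation computation or the stronger (visit-set) conditioning to make the argument tight to $1+o(1)$. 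A secondary issue: your bookkeeping for the $\tilde\varepsilon$ lower bound (``each of the $k-2$ well-visited rows contributes $\pi_i\cdot(k-1)f''(1)/(2n\pi_i)$, summing and using $\sum\pi_i\le 1-\pi^*$'') does not actually produce $(1-\pi^*)(k-2)kf''(1)/(2n)$ — the per-row contribution you wrote is independent of $\pi_i$, so the sum is $(k-2)(k-1)f''(1)/(2n)$, and an upper bound on $\sum\pi_i$ cannot strengthen a lower bound. The paper's prior puts all rows in the same $L_\infty$ ball and invokes the $L_2$-to-$D_f$ comparison (Lemma~\ref{lemma15}), from which the claimed constants come out; you should redo this accounting.
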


For $L_2$-distance corresponding to the squared Euclidean norm, we prove the following risk bounds.
\begin{theorem}
For all $\delta\in(0,1/k)$,
\[
\tilde{{\varepsilon}}_n^{L_2}(\mathds{M}^k_{\delta})\asymp\frac{k-1}{n}
\]
and
\[
(1-\delta)^2\frac{1-\frac{1}{k-1}}{n\delta}
\lesssim
\varepsilon_n^{L_2}(\mathds{M}^k_{\delta})
\lesssim
\frac{1-\frac{1}{k}}{n\delta}.
\]

\end{theorem}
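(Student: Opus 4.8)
The plan is to reduce both risks to the classical problem of $L_2$-estimating a distribution from \iid samples, exactly as in the proofs of Theorems~\ref{thm2}--\ref{thm3} but with the $L_2$ loss in place of an $f$-divergence. Fix an estimator $\hat M$ and let $N_i:=|\{t\le n-1:X_t=i\}|$. Whenever the chain sits at state $i$ the next symbol is an independent draw from $M(i,\cdot)$, so, conditioning on $N_i$, learning $M(i,\cdot)$ is---up to lower-order corrections coming from the endpoints of the trajectory---the problem of learning a distribution over $[k]$ from $N_i$ \iid samples, independently across $i$. For the empirical transition-frequency estimator (uniform when $N_i=0$), the expected $L_2$ loss of estimating $p$ from $m\ge1$ \iid samples is exactly $(1-\|p\|_2^2)/m$, and $1-\|p\|_2^2\le 1-1/k$ by Cauchy--Schwarz. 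So everything comes down to controlling $N_i$ and to sharp Bayesian lower bounds for the \iid problem.

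For the upper bounds: since every entry of $M\in\mathds{M}^k_\delta$ is at least $\delta$ we have $\pi_i=\sum_j\pi_jM_{j,i}\ge\delta$, $N_i$ stochastically dominates $\mathrm{Bin}(n-1,\delta)$, and the Dobrushin coefficient of every such $M$ is at most $1-k\delta$, so $N_i/n$ concentrates at $\pi_i$ uniformly over $\mathds{M}^k_\delta$. Hence $\EE[\mathds{1}\{N_i\ge1\}/N_i]\le(1+o(1))/(\pi_i n)\le(1+o(1))/(\delta n)$ uniformly, with $\{N_i=0\}$ contributing only $O((1-\delta)^n)=o(1/n)$ as the loss is bounded. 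Then $\EE\|M(i,\cdot)-\hat M_{X^n}(i,\cdot)\|_2^2\le(1-\|M(i,\cdot)\|_2^2)\,\EE[\mathds{1}\{N_i\ge1\}/N_i]+o(1/n)$; maximizing over $i$ and using $1-\|M(i,\cdot)\|_2^2\le1-1/k$ gives $\varepsilon_n^{L_2}(\mathds{M}^k_\delta)\lesssim(1-1/k)/(n\delta)$, while the $\pi_i$-weighted sum cancels the factor $1/(\pi_in)$ and gives $\tilde\varepsilon_n^{L_2}(\mathds{M}^k_\delta)\lesssim\frac1n\sum_i(1-\|M(i,\cdot)\|_2^2)\le(k-1)/n$.

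For the lower bounds we use that the minimax risk is at least the Bayes risk for any prior. For $\tilde\varepsilon$, draw the $k$ rows of $M$ independently, each from a prior concentrated near uniform on $[k]$ whose per-coordinate spread tends to $0$ but is much larger than $n^{-1/2}$, and which stays $\ge\delta$ since $\delta<1/k$; then $\pi$ is near-uniform, each state is visited $\approx n/k$ times, the rows remain a posteriori essentially independent, and learning the matrix decouples into $k$ \iid distribution-estimation problems over $[k]$, each with $\approx n/k$ samples, whose $L_2$ Bayes risk is $\gtrsim(1-1/k)/(n/k)=(k-1)/n$; weighting by $\pi_i$ and summing yields $\tilde\varepsilon_n^{L_2}(\mathds{M}^k_\delta)\gtrsim(k-1)/n$, matching the upper bound. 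For $\varepsilon$, force state $1$ to the least possible stationary probability: $\pi_1=\delta$ forces $M_{j,1}=\delta$ for all $j$, so $M_{1,1}=\delta$ and $N_1$ is essentially $\mathrm{Bin}(n-1,\delta)$, and we put a near-uniform prior, with all coordinates bounded below---feasible since $1/(k-1)>\delta/(1-\delta)$, which is equivalent to $\delta<1/k$---on $q:=(M_{1,2},\dots,M_{1,k})/(1-\delta)$, fixing the remaining rows. Of the $\approx\delta n$ transitions out of state $1$, about $(1-\delta)\delta n$ inform $q$, so the $L_2$ Bayes risk for $q$ is $\gtrsim(1-\tfrac1{k-1})/((1-\delta)\delta n)$; since $\|M(1,\cdot)-\hat M(1,\cdot)\|_2^2\ge(1-\delta)^2\|q-\hat q\|_2^2$, this gives $\varepsilon_n^{L_2}(\mathds{M}^k_\delta)\gtrsim(1-\delta)(1-\tfrac1{k-1})/(n\delta)$, which is in particular at least the stated bound $(1-\delta)^2(1-\tfrac1{k-1})/(n\delta)$.

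The main obstacle is the two Bayesian lower bounds, where we need the Bayes risk of estimating a distribution---possibly subject to the constraint that every coordinate exceeds a fixed constant---from $m$ \iid samples to equal $(1-\|p\|_2^2)/m$ up to a $1+o(1)$ factor, not merely up to a constant. This calls for a van Trees / Bayesian Cram\'er--Rao argument with a prior whose per-coordinate fluctuation is large enough to swamp the $O(m^{-1/2})$ sampling error yet small enough that the $\ge\delta$ constraint stays inactive and $\pi$ stays near its unperturbed value; and the randomness of $N_i$ must be propagated throughout, using its concentration so that $\EE[1/N_i]\sim1/\EE[N_i]$ and that conditioning on $N_i$ leaks essentially nothing about $M(i,\cdot)$. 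Running these same sharp \iid estimates in the reverse direction is what upgrades the upper-bound constants from order-correct to $(1+o(1))$-tight.
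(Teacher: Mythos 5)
Your lower-bound blueprint matches the paper's: bound the minimax risk by the Bayes risk under a prior supported on an $L_\infty$ ball of radius $\gg n^{-1/2}$ around uniform (cf.\ the paper's $\mathscr{M}_S$ built from $B_{k-1}(1/n')$ with $n'\approx n^{1/5}$), make the relevant $\pi_i$ equal $\delta$ or $\approx 1/k$ as needed, and invoke the sharp iid-simplex Bayes risk $\asymp(1-\frac{1}{k-1})/m$ from~\cite{KamathOPS15}. For the upper bound, however, you take a genuinely different route: condition on $N_i$ and reduce to the iid problem. The paper never makes that reduction; it instead computes $\EE N_i^2$, $\EE N_{ij}^2$, $\EE N_iN_{ij}$ directly for the chain, exhibits the cross-term cancellations, and analyzes the add-$\sqrt{N_i}/k$ estimator rather than the empirical one with a uniform fallback.

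The gap you flag yourself is real and is the crux. Conditional on $N_i$, the transitions out of state $i$ are \emph{not} iid draws from $M(i,\cdot)$: $N_i$ is determined in part by whether those very transitions returned to $i$, so conditioning on it tilts them (a small $N_i$ forces proportionally more of the observed out-of-$i$ transitions to leave $i$). Making ``leaks essentially nothing'' precise to the $(1+o(1))$ accuracy the theorem's $\lesssim$ and $\asymp$ demand is exactly what the paper's machinery supplies: the moment computation on the upper side, and on the lower side Lemma~\ref{lemma11}, which conditions not on $N_k$ but on the full index set $\mathds{I}(X^n)=\{t:X_t=k\}$ together with the non-$k$ remainder of the trajectory. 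Because every row of the prior's matrices has $k$-th entry exactly $\pi^*$, that conditioning event has probability independent of the unknown row, so the conditional law of the out-of-$k$ transitions is \emph{exactly} iid and the posterior over the unknown row is exactly the prior. Your $\pi_1=\delta$ construction, fixing $M_{j,1}=\delta$ for every $j$ including $j=1$, enjoys the same invariance and would admit the same exact conditioning, but the argument must say so; without it, invoking the iid Bayes lower bound is only heuristic. For the $\tilde{\varepsilon}$ bound with all $k$ rows random, this invariance is lost (visits to $i$ now depend on $p'_i$ through the $i\to i$ entry), so your van~Trees route has genuine extra work to do there --- the paper itself only sketches that case.
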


\begin{theorem}
For all $\delta\in(0,1/k)$ and $\pi^*\in(0,1/k]$, 
\[
(1-\pi^*)^2\frac{k-\frac{k}{k-1}}{n}\lesssim\tilde{{\varepsilon}}_n^{L_2}(\mathds{M}^k_{\delta,\pi^*})\lesssim\frac{k-1}{n}
\]
and
\[
(1-\pi^*)^2\frac{1-\frac{1}{k-1}}{n\pi^*}
\lesssim
\varepsilon_n^{L_2}(\mathds{M}^k_{\delta,\pi^*})
\lesssim
\frac{1-\frac{1}{k}}{n\pi^*}.
\]
\end{theorem}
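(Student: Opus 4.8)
\noindent The plan is to prove matching upper and lower bounds (up to the $1+o(1)$ factors absorbed by $\lesssim$), following the scheme used for the preceding $L_2$ theorem on $\mathds{M}^k_\delta$ but tracking the individual stationary weights $\pi_i$ rather than the crude bound $\pi_i\ge\delta$.

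For the upper bounds I would use the plug-in estimator: $\hat M_{X^n}(i,\cdot)$ is the empirical distribution of the symbols immediately following the visits of $X^n$ to state $i$ (uniform if $i$ is never left before time $n$). By the strong Markov property, conditioned on the number $\tilde N_i$ of observed out-transitions from $i$, those transitions are i.i.d.\ $\sim M(i,\cdot)$, so $\Exp[L_2(M(i,\cdot),\hat M_{X^n}(i,\cdot))\mid\tilde N_i]=(1-\|M(i,\cdot)\|_2^2)/\tilde N_i$ on $\{\tilde N_i\ge1\}$. Every chain in $\mathds{M}^k_\delta$ satisfies a Doeblin minorization with parameter $k\delta$, hence is uniformly ergodic with an $O(1)$ mixing time (for fixed $\delta,k$), so $\tilde N_i$ concentrates around $\Exp[\tilde N_i]\asymp n\pi_i\ge n\pi^*$; routine tail bounds give $\Exp[1/\tilde N_i]\le(1+o(1))/(n\pi_i)$ and $\Pr[\tilde N_i=0]=e^{-\Omega(n)}$, so the fallback case costs only $o(1/n)$. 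Bounding each term by $(1-\|M(i,\cdot)\|_2^2)/(n\pi_i)\le(1-1/k)/(n\pi^*)$ and maximizing over $i$ gives $\varepsilon_n^{L_2}\lesssim(1-1/k)/(n\pi^*)$, while $\sum_i\pi_i(1-\|M(i,\cdot)\|_2^2)/(n\pi_i)=\frac1n\sum_i(1-\|M(i,\cdot)\|_2^2)\le(k-1)/n$ gives the $\tilde\varepsilon_n^{L_2}$ bound.

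For the lower bounds I would build a perturbed family inside $\mathds{M}^k_{\delta,\pi^*}$. Start from a skeleton in which state $1$ is the unique minimizer of $\pi$: put $M(1,1)=\delta$, set $M(j,1)=a:=\pi^*(1-\delta)/(1-\pi^*)$ for every $j\ge2$ (this is $\ge\delta$ exactly when $\pi^*\ge\delta$, which is forced by $\mathds{M}^k_{\delta,\pi^*}\ne\varnothing$, and it pins $\pi_1=\pi^*$ while leaving $\pi_j=(1-\pi^*)/(k-1)>\pi^*$ for $j\ge2$ when $\pi^*<1/k$), and spread the remaining mass of each row uniformly over $\{2,\dots,k\}$. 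For the $\varepsilon_n$ bound, randomly flip the $k-1$ free entries of row $1$ by $\pm\epsilon$; these flips leave $\pi_1=\pi^*$ and, once $\epsilon$ is below a fixed $(\delta,k)$-threshold, do not disturb $\min_i\pi_i=\pi^*$, so the family stays in the class. Conditioning on the visit set reduces estimating row $1$ to estimating a $(k-1)$-supported near-uniform distribution in $L_2$ from $\tilde N_1\approx n\pi^*$ i.i.d.\ samples, whose minimax $L_2$ risk is $(1-1/(k-1))/m$; carrying the row's normalization and the upper fluctuation of $\tilde N_1$ through the bound yields the factor $(1-\pi^*)^2$, hence $\varepsilon_n^{L_2}\gtrsim(1-\pi^*)^2(1-1/(k-1))/(n\pi^*)$. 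For $\tilde\varepsilon_n$, flip the $k-1$ free entries of all $k$ rows simultaneously; each state contributes $\pi_i(1-1/(k-1))/(n\pi_i)=(1-1/(k-1))/n$, and summing over the $k$ states and absorbing lower-order terms into $(1-\pi^*)^2$ produces $(1-\pi^*)^2(k-k/(k-1))/n$.

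The main obstacle is the lower-bound bookkeeping. Two things need care: first, making the Markov-to-i.i.d.\ reduction rigorous — that conditioned on the visit set the out-transitions of a state are i.i.d.\ and are the \emph{only} information about its row, with the random sample size $\tilde N_i$ and the transient effects of the initial distribution and of $X_n$ controlled; second, certifying that every perturbed chain lies exactly in $\mathds{M}^k_{\delta,\pi^*}$, i.e.\ that the smallest stationary probability stays \emph{equal} to $\pi^*$ (this is precisely why $M(1,1)$ must be pinned, which is in turn why the lower bounds carry $1-1/(k-1)$ rather than $1-1/k$). The $(1-\pi^*)^2$ factors trade a little tightness for a clean closed form, and the boundary case $\pi^*=1/k$ needs a minor separate modification of the skeleton. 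The upper bounds are routine once the uniform ergodicity of $\mathds{M}^k_\delta$ is in hand.
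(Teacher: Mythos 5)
Your upper-bound plan and the paper's share the same skeleton (discard a burn-in prefix, exploit uniform ergodicity/Doeblin minorization to concentrate $N_i$ near $(n-1)\pi_i\ge(n-1)\pi^*$, then sum or maximize the per-row risk). But your key conditional identity is false: conditioned on $\tilde N_i$, the observed out-transitions of state $i$ are \emph{not} i.i.d.\ $\sim M(i,\cdot)$. Conditioning on the exact number of visits biases the empirical out-distribution; a two-state example with $M(1,1)=p$, $M(2,1)=1$ already shows the conditional law of $(N_{11},N_{12})$ given $N_1$ is not multinomial. The asymptotic answer $(1-\sum_j M_{ij}^2)/(n\pi_i)$ nevertheless turns out to be right, but for a different reason: the increments $\indic_{X_t=i,X_{t+1}=j}-M_{ij}\indic_{X_t=i}$ form a martingale-difference array, so $\Exp\bigl[(N_{ij}-M_{ij}N_i)^2\bigr]=M_{ij}(1-M_{ij})(n-1)\pi_i(1+o(1))$, exactly as if the samples were i.i.d. The paper makes this second-moment computation explicit (for an add-$\sqrt{N_i}/k$ estimator that additionally equalizes the risk to $(1-1/k)/(n\pi_i)$, independent of $M(i,\cdot)$), which is the robust way to justify the $(1-1/k)/(n\pi^*)$ and $(k-1)/n$ upper bounds.

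The lower bound is where the proposal genuinely breaks. You pin the self-loop of the rare state to $\delta$; the paper pins it to $\pi^*$ and, crucially, makes the \emph{entire} column into the rare state constant equal to $\pi^*$, i.e.\ $M(i,k)=\pi^*$ for every $i$. This one design choice buys three things at once: (i) $\pi_k=\sum_i\pi_i M(i,k)=\pi^*$ automatically, regardless of the perturbation of the other entries; (ii) the indicator of ``enter state $k$ at step $t$'' becomes an i.i.d.\ Bernoulli$(\pi^*)$ coin independent of all the out-transitions, so $N_k\sim\mathrm{Binomial}(n,\pi^*)$ and is \emph{independent} of the observed successors of $k$ --- this is exactly what makes the reduction to an i.i.d.\ $(k-1)$-alphabet estimation problem with $\approx n\pi^*$ samples rigorous; and (iii) the free mass in row $k$ is $1-\pi^*$, so the $L_2$ loss scales by $(1-\pi^*)^2$, matching the theorem. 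In your construction $M(1,1)=\delta\ne a=M(j,1)$ for $j\ge2$, so the ``go to state $1$'' probability depends on the current state, hence depends on the out-transitions of state $1$ themselves; $N_1$ is then neither Binomial nor independent of the perturbed row, and the conditioning-on-$\tilde N_1$ reduction you rely on is no longer valid. Moreover the free mass in your row $1$ is $1-\delta$, not $1-\pi^*$, so the scale factor you would actually obtain is $(1-\delta)^2$; the claim that this ``yields the factor $(1-\pi^*)^2$'' does not follow from the construction you wrote. The fix is precisely what the paper does: set the column entering the rare state identically to $\pi^*$ (including the diagonal), perturb the other $k-1$ entries of that row by $(1-\pi^*)p'$ with $p'$ drawn from an $L_\infty$ ball around uniform on $[k-1]$, invoke the Binomial concentration of $N_k$ to control the effective i.i.d.\ sample size at $(1+\epsilon)n\pi^*$, and read off $(1-\pi^*)^2(1-\frac{1}{k-1})/(n\pi^*)$. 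For the $\tilde\varepsilon$ bound, perturbing all $k$ rows independently in this fashion and summing $\pi_i\cdot(1-\pi^*)^2(1-\frac1{k-1})/(n\pi_i)$ over $i$ gives the $(1-\pi^*)^2(k-\frac{k}{k-1})/n$ lower bound.
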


The rest of the paper is organized as follows.
Section~\ref{def} introduces add-constant estimators 
and additional definitions and notation for Markov chains.
Note that each of the above 
results consists of a lower bound and an upper bound.
We prove the lower bound by constructing a suitable prior distribution over the
relevant collection of processes.
Section~\ref{pred_low} and ~\ref{est_low} 
describe these prior distributions for the
prediction and estimation problems, respectively. 
The upper bounds are derived via simple variants of the standard
add-constant estimators.
Section~\ref{pred_upp} and ~\ref{est_upp} describe 
the estimators for the prediction and estimation bounds, respectively.
 For space considerations, we relegate
all the proofs to Section~\ref{sec:9} to~\ref{sec4}.

\section{Definitions and Notation}\label{def}
\subsection{Add-constant estimators}
Given a sample sequence $X^n$ from an \iid\ process $(p)$, let $N_i'$
denote the number of times symbol $i$ appears in $X^n$. The
classical \emph{empirical estimator} estimates $p$ by 

\[
\hat{p}_{X^n}(i) := \frac{N_i'}{n},\ \forall i\in[k].
\]
The empirical estimator performs poorly for loss measures such as 
KL-divergence. For example, if $p$ assigns
a tiny probability to a symbol so that it is unlikely to
appear in $X^n$, then with high probability the KL-divergence between
$p$ and $\hat{p}_{X^n}$ will be infinity.

A common solution applies the Laplace smoothing technique~\cite{Chung12}
that assigns to each symbol $i$ a probability proportional to
$N'_i+\beta$, where $\beta>0$ is a fixed constant.
The resulting \emph{add-$\beta$} estimator, is denoted by
$\hat{p}^{+\beta}$.
Due to their simplicity and
effectiveness, add-$\beta$ estimators are widely used in various
machine learning algorithms such as naive Bayes
classifiers~\cite{Bish06}. As shown in~\cite{BraessS04},
for the \iid\ processes, a variant of the add-${3}/{4}$ estimator
achieves the minimax estimation risk
$\rho_n^{\text{KL}}(\mathds{IID}^k)$.

Analogously, given a sample sequence $X^n$ generated by a Markov
chain, let $N_{ij}$ denote the number of times symbol $j$ appears
right after symbol $i$ in $X^n$, and let $N_{i}$ denote the number of
times that symbol $i$ appears in $X^{n-1}$. We define the
add-$\beta$ estimator $\hat{M}^{+\beta}$ as 
\[
\hat{M}^{+\beta}_{X^n}(i,j) := \frac{N_{ij}+{\beta}}{N_{i}+k\beta},\ \forall i,j\in[k].
\]

\subsection{More on Markov chains}
Adopting notation in~\cite{Yuv17}, let $\Delta_k$ denote the
collection of discrete distributions over $[k]$. Let $[k]^e$ and
$[k]^o$ be the collection of even and odd integers in $[k]$,
respectively. By convention, for a Markov chain over $[k]$, we call
each symbol $i\in[k]$ a \emph{state}. 
Given a Markov chain, the \emph{hitting time} $\tau(j)$ is the
first time the chain reaches state $j$.
We denote by $\text{Pr}_i({\tau(j)=t})$
the probability that starting from $i$, the hitting time of $j$ is
exactly $t$. For a Markov chain $(M)$, we denote by $P^t$ the
distribution of $X_t$ if we draw $X^t\sim (M)$.  Additionally,
for a fixed Markov chain $(M)$, the \emph{mixing time} $t_{mix}$ denotes the
smallest index $t$ such that ${L_1}(P^{t},\pi)<{1}/{2}$.  Finally, for
notational convenience, we write $M_{ij}$ instead of $M(i,j)$ whenever
appropriate.

\section{Minimax prediction: lower bound}\label{pred_low}
A standard lower-bound argument for minimax prediction risk uses the
fact that
\[
\rho_n^{\text{KL}}(\mathscr{P})
=
\min_{\hat{P}}\max_{P\in \mathscr{P}} \rho_n^{\text{KL}}(P,\hat{P})
\geq
\min_{\hat{P}}\Exp_{P\sim\Pi} [\rho_n^{\text{KL}}(P,\hat{P})]
\]
for any prior distribution $\Pi$ over $\mathscr{P}$. 
One advantage of this approach is that 
the optimal estimator that minimizes
$\Exp_{P\sim\Pi} [\rho_n^{\text{KL}}(P,\hat{P})]$ can often be computed
explicitly. 

Perhaps the simplest prior is the uniform distribution
$U(\mathscr{P}_S)$ over a subset $\mathscr{P}_S\subset\mathscr{P}$. Let
$\hat{P}^*$ be the optimal estimator minimizing $\Exp_{P\sim
  U(\mathscr{P}_S)} [\rho_n^{\text{KL}}(P,\hat{P})]$. 
Computing $\hat{P}^*$ for all the possible sample sequences $x^n$ may
be unrealistic. Instead, let ${\mathscr{K}}_n$ be an arbitrary subset
of $[k]^n$, we can lower bound 

\[
\rho_n^{\text{KL}}(P,\hat{P}) = \Exp_{X^n\sim P}[D_{\text{KL}}(P_{X^n},\hat{P}_{X^n})]
\]
by
\[
\rho_n^{\text{KL}}(P,\hat{P}; \mathscr{K}_n) := \Exp_{X^n\sim P}[D_{\text{KL}}(P_{X^n},\hat{P}_{X^n})\indic_{X^n\in \mathscr{K}_n}].
\]
Hence,
\[
\rho_n^{\text{KL}}(\mathscr{P})\geq{ \min_{\hat{P}}\Exp_{P\sim U(\mathscr{P}_S)} [\rho_n^\text{KL}(P,\hat{P}; \mathscr{K}_n)]}.
\]
The key to applying the above arguments is to find a proper pair $(\mathscr{P}_S, \mathscr{K}_n)$.

Without loss of generality, assume that $k$ is even.
Let $a:=\frac{1}{n}$ and $b:=1-\frac{k-2}{n}$, and define
\newcommand{\fon}{a}
\newcommand{\omf}{b-a}
\newcommand{\omfs}[1]{
b\!\!-\!\!p_{#1}
}
\[
M_n(p_2, p_4, \ldots, p_k) := 
\begin{bmatrix}
    \omf & \fon & \fon &  \fon & \dots  & \fon & \fon\\
    p_2 & \omfs2 & \fon & \fon & \dots  & \fon & \fon\\
    \fon & \fon & \omf & \fon & \dots  & \fon & \fon\\
    \fon & \fon & p_4 & \omfs4 &\dots  & \fon & \fon\\
    \vdots & \vdots & \vdots & \vdots & \ddots & \vdots &\vdots\\
    \fon & \fon & \fon & \fon & \dots  & \omf & \fon\\
    \fon & \fon & \fon & \fon & \dots  & p_k & \omfs k
\end{bmatrix}.
\]
In addition, let 
\[
V_n:= \left\{\frac{1}{\log^t n}: t\in \mathds{N} \text{ and } 1\leq t \leq \frac{\log n}{2\log\log n} \right\},
\]
and let $u_k$ denote the uniform distribution over $[k]$.
Finally, given $n$, define
\[
\mathscr{P}_S=\{(M)\in  \mathds{M}^k: \mu=u_{k} \text{ and }M=M_n(p_2, p_4, \ldots, p_k), \text{ where } p_i\in V_n, \forall i\in[k]^e\}.
\]
Next, let $\mathscr{K}_n$ be the collection of sequences
$x^n\in[k]^n$ whose last appearing state
didn't transition to any other state.
For example, 3132, or 31322, but not 21323.
In other words, for any state $i\in[k]$, let $\bar{i}$
represent an arbitrary state in $[k]\setminus \{i\}$, then

\[
\mathscr{K}_n=\{x^n\in[k]^n: x^n={\bar{i}}^{n-\ell} i^\ell: i\in[k], n-1\geq \ell\geq{1}\}.
\]
\section{Minimax prediction: upper bound}\label{pred_upp}
For the $\mathscr{K}_n$ defined in the last section, 
\[
\rho_n^{\text{KL}}(P,\hat{P}; \mathscr{K}_n) = \sum_{x^n\in \mathscr{K}_n}P(x^n)D_{\text{KL}}(P_{x^n},\hat{P}_{x^n}).
\]
We denote the \emph{partial minimax prediction risk} over $\mathscr{K}_n$ by
\[
\rho_n^{\text{KL}}(\mathscr{P};  \mathscr{K}_n):=\min_{\hat{P}}\max_{P\in \mathscr{P}}\rho_n^{\text{KL}}(P,\hat{P}; \mathscr{K}_n).
\]
Let $\overline{\mathscr{K}_n}:=[k]^n\setminus\mathscr{K}_n$.
Define $\rho_n^{\text{KL}}(P,\hat{P}; \overline{\mathscr{K}_n})$ and $\rho_n^{\text{KL}}(\mathscr{P};  \overline{\mathscr{K}_n})$ in the same manner. As the consequence of $\hat{P}$ being a function from $[k]^n$ to $\Delta_k$, we have the following triangle inequality,
\[
\rho_n^{\text{KL}}(\mathscr{P}) \leq  \rho_n^{\text{KL}}(\mathscr{P}; \overline{\mathscr{K}_n}) + \rho_n^{\text{KL}}(\mathscr{P}; \mathscr{K}_n).
\]
Turning back to Markov chains, let $\hat{P}^{+\frac{1}{2}}$ denote the estimator that maps $X^n\sim (M)$ to $\hat{M}^{+\frac{1}{2}}(X_{n}, \cdot)$, one can show that
\[
\rho_n^{\text{KL}}(\mathds{M}^k; \overline{\mathscr{K}_n})\leq\max_{P\in \mathds{M}^k}\rho_n^{\text{KL}}(P, \hat{P}^{+\frac{1}{2}};  \overline{\mathscr{K}_n})\leq{\mathcal{O}_k\left(\textstyle{\frac{1}{n}}\right)}.
\]

Recall the following lower bound
\[
\rho_n^{\text{KL}}(\mathds{M}^k)=\Omega_k\Paren{\frac{\log\log n}{n}}.
\]
This together with the above upper bound on $\rho_n^{\text{KL}}(\mathds{M}^k; \overline{\mathscr{K}_n})$ and the triangle inequality shows that an upper bound on $\rho_n^{\text{KL}}(\mathds{M}^k; \mathscr{K}_n)$ also suffices to bound the leading term of $\rho_n^{\text{KL}}(\mathds{M}^k)$. The following construction yields such an upper bound. 

We partition $\mathscr{K}_n$ according to the last appearing state and
the number of times it transitions to itself, 
\[
\mathscr{K}_n=\cup_{\ell=1}^{n-1}K_\ell(i), \text{ where } K_\ell(i):= \{x^n\in[k]^n: x^n={\bar{i}}^{n-\ell} i^\ell\}.
\]
For any $x^n\in \mathscr{K}_n$, there is a unique $K_\ell(i)$ such that $x^n\in K_\ell(i)$. Consider the following estimator 
\[ 
\hat{P}_{x^n}(i)
:=
\begin{cases}  
      1-\frac{1}{\ell\log n}& \ell\leq\frac{n}{2}\\ 
      1-\frac{1}{\ell}&  \ell>\frac{n}{2}
   \end{cases}
\]
and 
\[ 
\hat{P}_{x^n}(j)
:=\frac{1-\hat{P}_{x^n}(i)}{k-1},\ \forall j\in[k]\setminus\{i\},
\]
we can show that
\[
\rho_n^{\text{KL}}(\mathds{M}^k;  \mathscr{K}_n)\leq \max_{P\in \mathds{M}^k}\rho_n^{\text{KL}}(P,\hat{P}; \mathscr{K}_n)\lesssim\frac{2k^2\log\log n}{n}.
\]
The upper-bound proof applies the following lemma that
uniformly bounds the hitting probability of any $k$-state Markov chain.
\begin{Lemma}~\cite{sup17}\label{lemma6}
For any Markov chain over $[k]$ and any two states $i,j\in[k]$, if $n>k$, then
\[
\textrm{Pr}_i{(\tau(j)=n)}\leq \frac{k}{n}.
\]
\end{Lemma}

\section{Minimax estimation: lower bound}\label{est_low}
Analogous to Section~\ref{pred_low}, we use the following standard argument to lower bound the minimax risk
\[
\varepsilon_n^L(\mathscr{M})=\min_{\hat{M}}\max_{(M)\in \mathscr{M}} \varepsilon_n^L(M,\hat{M})\geq{ \min_{\hat{M}}\Exp_{(M)\sim U(\mathscr{M}_S)} [\varepsilon_n^L(M,\hat{M})]},
\]
where $\mathscr{M}_S\subset \mathscr{M}$ and $U(\mathscr{M}_S)$ is the uniform distribution over $\mathscr{M}_S$. Setting $\mathscr{M}=\mathds{M}^k(\delta, \pi^*)$, we outline the construction of $\mathscr{M}_S$ as follows.

Let $u_{k-1}$ be the uniform distribution over $[k-1]$. 
As in~\cite{KamathOPS15}, denote the $L_{\infty}$ ball of radius $r$
around $u_{k-1}$ by
\[
B_{k-1}(r):=\{p\in \Delta_{k-1}: {L_\infty}(p, u_{k-1})<r \},
\]
where ${L_\infty}(\cdot,\cdot)$ is the $L_\infty$ distance between two distributions.
Define
\newcommand{\nor}{\frac{\bar{\pi}^*}{k-1}}
\[
p':=(p_1,\ p_2,\ \ldots,\ p_{k-1}),
\]
\[
p^*:= \Paren{\nor,\ \nor,\  \dots\ \nor,\ \pi^*},
\]
and
\[
M_n(p') 
:=
\begin{bmatrix}
    \nor & \nor  &  \dots  & \nor  & \pi^*\\
    \nor & \nor  &  \dots  & \nor & \pi^*\\
    \vdots & \vdots  & \ddots & \vdots &\vdots\\
    \nor & \nor   & \dots  & \nor & \pi^*\\
   \bar{\pi}^*p_1 &  \bar{\pi}^*p_2 & \dots  & \bar{\pi}^*p_{k-1} & \pi^*
\end{bmatrix},
\]
where $\bar{\pi}^*=1-\pi^*$ and $\sum_{i=1}^{k-1} p_i=1$.

Given $n$ and $\epsilon\in(0,1)$, let $n':=(n(1+\epsilon)\pi^*)^{1/5}$. We set 
\[
\mathscr{M}_S =\{(M)\in \mathds{M}^k(\delta, \pi^*): \mu= p^* \text{ and }M=M_n({p'}), \text{ where } p'\in B_{k-1}(1/n')\}.
\]
Noting that the uniform distribution over $\mathscr{M}_S$, $U(\mathscr{M}_S)$, is induced by $U(B_{k-1}(1/n'))$, the uniform distribution over $B_{k-1}(1/n')$ and thus is well-defined.

An important property of the above construction is that for a sample sequence $X^n\sim (M)\in \mathscr{M}_S$, $N_k$,  the number of times that state $k$ appears in $X^n$, is a binomial random variable with parameters $n$ and $\pi^*$. Therefore, by the following lemma, $N_k$ is highly concentrated around its mean $n\pi^*$. 
\begin{Lemma}~\cite{com06}\label{lemma9}
Let $Y$ be a binomial random variable with parameters $m\in\mathds{N}$ and $p\in[0,1]$, then for any $\epsilon\in(0,1)$,
\[
\textrm{Pr}{(Y\geq(1+\epsilon) mp)}\leq \exp\left(-\epsilon^2mp/3\right).
\]
\end{Lemma}

In order to prove the lower bound on $\tilde{{\varepsilon}}_n^f(\mathds{M}^k_{\delta,\pi^*})$, we only need to modify the above  construction as follows. Instead of drawing the last row of the transition matrix $M_n(p')$ uniformly from the distribution induced by $U(B_{k-1}(1/n'))$, we draw all rows independently in the same fashion. The proof is omitted due to similarity.

\section{Minimax estimation: upper bound}\label{est_upp}
The proof of the upper bound relies on a concentration inequality for Markov chains in $\mathds{M}^k_{\delta}$, which can be informally expressed as
\[
\textrm{Pr}(|N_i-(n-1)\pi(i)|>t)\leq \Theta_\delta(\exp(\Theta_\delta(-t^2/n))).
\]
Note that this inequality is very similar to the Hoeffding's inequality for \iid\ processes. 

The difficulty in analyzing the performance of the original
add-$\beta$ estimator is that the chain's initial distribution could
be far away from its stationary distribution and finding a simple
expression for $\Exp[N_i]$ and $\Exp[N_{ij}]$ could be hard. To overcome
this difficulty, we ignore the first few sample points and construct a
new add-$\beta$ estimator based on the remaining sample points. 
Specifically, let $X^n$ be a length-$n$ sample sequence drawn from
the Markov chain $(M)$. Removing the first $m$ sample points,
$X^n_{m+1}:=X_{m+1},\ldots, X_n$ can be viewed as a
length-$(n\!\!-\!\!m)$ sample sequence drawn from $(M)$ whose initial
distribution $\mu'$ satisfies
\[
{L_1}(\mu', \pi)<2(1-\delta)^{m-1}.
\]
Let $m=\sqrt{n}$. For sufficiently large $n$, ${L_1}(\mu', \pi)\ll
1/n^2$ and $\sqrt{n}\ll n$. 
Hence without loss of generality, we assume that the original initial
distribution $\mu$ already satisfies ${L_1}(\mu, \pi)<1/n^2$.
If not, we can simply replace $X^n$ by $X^n_{\sqrt{n}+1}$.

To prove the desired upper bound for ordinary $f$-divergences, it suffices to use the add-$\beta$ estimator 
\[
\hat{M}^{+\beta}_{X^n}(i,j)
:=
\frac{N_{ij}+{\beta}}{N_{i}+k\beta},\ \forall i,j\in[k].
\]
For the $L_2$-distance, instead of an add-constant estimator, we apply an add-${\sqrt{N_{i}}}/{k}$ estimator
\[
\hat{M}^{+{\sqrt{N_{i}}}/{k}}_{X^n}(i,j) := \frac{N_{ij}+{\sqrt{N_{i}}}/{k}}{N_{i}+{\sqrt{N_{i}}}},\ \forall i,j\in[k].
\]

\section{Experiments}
\label{exp}
We augment the theory with experiments 
that demonstrate the efficacy of our proposed estimators
and validate the functional form of the derived bounds.

We briefly describe the experimental setup.
For the first three figures, $k=6$, $\delta=0.05$, and $10,000\leq n\leq 100,000$. 
For the last figure, $\delta=0.01$, $n=100,000$, and $4\le k \le 36$.
In all the experiments, the initial distribution $\mu$ of the Markov chain is drawn from the
$k$-Dirichlet($1$) distribution. For the transition matrix $M$, we
first construct a transition matrix $M'$ where each row is 
drawn independently from the $k$-Dirichlet($1$) distribution.
To ensure that each element of $M$ is at least $\delta$,
let ${J}_k$ represent the $k\times k$ all-ones matrix, 
and set $M=M'(1-k\delta)+\delta {J}_k$. 
We generate a new Markov chain for each curve in
the plots. And each data point on the curve shows the average loss of
$100$ independent restarts of the same Markov chain. 

The plots use the following abbreviations:
Theo for theoretical minimax-risk values;
Real for real experimental results: 
using the estimators described in Sections~\ref{pred_upp} and~\ref{est_upp};
Pre for average prediction loss and Est for average estimation loss; 
Const for add-constant estimator;
Prop for proposed add-${\sqrt{N_{i}}}/{k}$ estimator described in
Section~\ref{est_upp};
Hell, Chi, and Alpha(c) for Hellinger divergence, Chi-squared divergence, 
and Alpha-divergence with parameter $c$.
In all three graphs, the theoretical min-max curves are precisely the upper
bounds in the corresponding theorems, except that in the prediction curve in
Figure~\ref{fig:sub1} the constant factor 2 in the upper bound is
adjusted to $1/2$ to better fit the experiments.
Note the excellent fit between the theoretical bounds and experimental
results. 

Figure~\ref{fig:sub1} shows the decay of the experimental and
theoretical KL-prediction and KL-estimation losses with
the sample size $n$. 
Figure~\ref{fig:sub2} compares the $L_2$-estimation losses of our proposed
estimator and the add-one estimator, and the theoretical minimax
values. Figure~\ref{fig:sub3} compares the experimental
estimation losses and the theoretical minimax-risk values for
different loss measures. Finally, figure~\ref{fig:sub4}  presents an 
experiment on KL-learning losses that scales $k$ up while $n$ is fixed.
All the four plots demonstrate that our
theoretical results are accurate and can be used to estimate
the loss incurred in learning Markov chains. Additionally,
Figure~\ref{fig:sub2} shows that
our proposed add-${\sqrt{N_{i}}}/{k}$ estimator is uniformly better
than the traditional add-one estimator for different values of sample
size $n$. We have also considered add-constant estimators with different
constants varying from $2$ to $10$ and our proposed estimator
outperformed all of them.

\vspace{-0.08em}
\begin{figure}[h]
\centering
\begin{subfigure}{.5\textwidth}
  \centering
  \captionsetup{width=1\linewidth}
  \includegraphics[width=\linewidth]{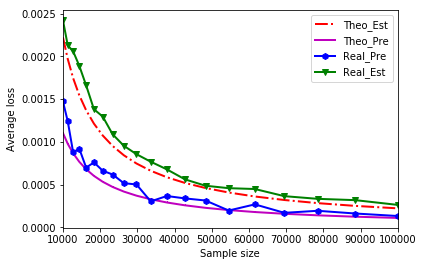}
  \caption{KL-prediction and estimation losses}
  \label{fig:sub1}
\end{subfigure}%
\begin{subfigure}{.5\textwidth}
  \centering
  \captionsetup{width=1\linewidth}
  \includegraphics[width=\linewidth]{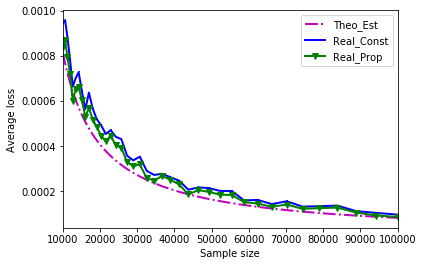}
  \caption{$L_2$-estimation losses for different estimators}
  \label{fig:sub2}
\end{subfigure}
\begin{subfigure}{.5\textwidth}
  \centering
  \captionsetup{width=1\linewidth}
  \includegraphics[width=\linewidth]{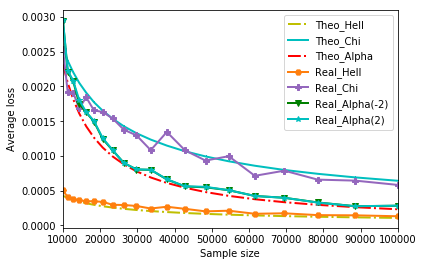}
  \caption{Hellinger, Chi-squared, and Alpha- estimation losses}
  \label{fig:sub3}
\end{subfigure}%
\begin{subfigure}{.5\textwidth}
  \centering
  \captionsetup{width=1\linewidth}
  \includegraphics[width=\linewidth]{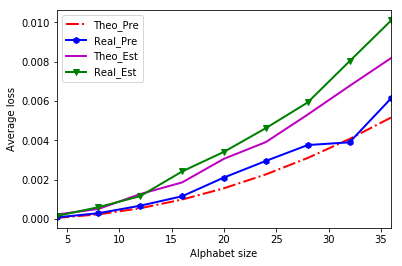}
  \caption{Fixed $n$ and varying $k$}
  \label{fig:sub4}
\end{subfigure}%
\caption{Experiments}
\label{fig:test}
\end{figure}
\vspace{-0.15em}

\section{Conclusions}
We studied the problem of learning an unknown $k$-state Markov chain
from its $n$ sequential sample points.  We considered two
formulations: prediction and estimation. 
For prediction, we determined the minimax risk up to a multiplicative factor
of $k$. 
For estimation, when the transition probabilities are bounded away from
zero, we obtained nearly matching lower and upper bounds on
the minimax risk for $L_2$ and ordinary $f$-divergences. 
The effectiveness of our proposed estimators
was verified through experimental simulations.
Future directions include
closing the gap in the prediction problem in Section~\ref{sec1},
extending the results on the min-max
estimation problem to other classes of Markov chains,
and extending the work from the classical setting $k\ll n$,
to general $k$ and $n$. 
\vfill
\pagebreak


%

\medskip


\vfill
\pagebreak

\section{Minimax prediction: lower bound}\label{sec:9}
A standard argument for lower bounding the minimax prediction risk is 
\[
\rho_n^{\text{KL}}(\mathscr{P})=\min_{\hat{P}}\max_{P\in \mathscr{P}} \rho_n^{\text{KL}}(P,\hat{P})\geq{ \min_{\hat{P}}\EE_{P\sim\Pi} [\rho_n^{\text{KL}}(P,\hat{P})]},
\]
where $\Pi$ is a prior distribution over $\mathscr{P}$. The advantage of this approach is that the optimal estimator that minimizes $\EE_{P\sim\Pi} [\rho_n^{\text{KL}}(P,\hat{P})]$ can often be computed explicitly. 

Perhaps the simplest prior is the uniform distribution over some subset of $\mathscr{P}$. Consider the uniform distribution over $\mathscr{P}_S\subset\mathscr{P}$, say $U(\mathscr{P}_S)$, the following lemma shows an explicit way of computing the optimal estimator for $\EE_{P\sim U(\mathscr{P}_S)} [\rho_n^{\text{KL}}(P,\hat{P})]$ when $\mathscr{P}_S$ is finite.
\begin{Lemma}\label{lemma1}
Let $\hat{P}^*$ be the optimal estimator that minimizes $\EE_{P\sim U(\mathscr{P}_S)} [\rho_n^{\text{KL}}(P,\hat{P})]$, then for any $x^n\in[k]^n$ and any symbol $i\in[k]$,
\[
\hat{P}^*_{x^n}(i)=\sum_{P\in\mathscr{P}_S}\frac{P({x^n})}{\sum_{P'\in\mathscr{P}_S}P'({x^n})} P_{x^n}(i).
\]
\end{Lemma}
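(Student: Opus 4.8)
The plan is to exploit the fact that an estimator is nothing more than a function $x^n \mapsto \hat P_{x^n} \in \Delta_k$, so the Bayes risk $\EE_{P\sim U(\mathscr{P}_S)}[\rho_n^{\text{KL}}(P,\hat P)]$ splits into $|[k]^n|$ independent scalar optimization problems, one per string $x^n$. First I would unfold the definitions and swap the order of the two finite sums, obtaining
\[
\EE_{P\sim U(\mathscr{P}_S)}\bigl[\rho_n^{\text{KL}}(P,\hat P)\bigr]
=\frac{1}{|\mathscr{P}_S|}\sum_{x^n\in[k]^n}\sum_{P\in\mathscr{P}_S}P(x^n)\,D_{\text{KL}}(P_{x^n},\hat P_{x^n}),
\]
and observe that the value $\hat P_{x^n}$ may be chosen separately for each $x^n$. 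Thus it suffices to minimize $g_{x^n}(q):=\sum_{P\in\mathscr{P}_S}P(x^n)\,D_{\text{KL}}(P_{x^n},q)$ over $q\in\Delta_k$ for each fixed $x^n$; strings with $\sum_{P\in\mathscr{P}_S}P(x^n)=0$ contribute nothing and may be ignored.

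Second, for a fixed $x^n$ with $W:=\sum_{P\in\mathscr{P}_S}P(x^n)>0$ I would write $D_{\text{KL}}(P_{x^n},q)=\sum_i P_{x^n}(i)\log P_{x^n}(i)-\sum_i P_{x^n}(i)\log q(i)$, drop the $q$-independent entropy terms, and reduce the problem to maximizing $\sum_{i\in[k]}w_i\log q(i)$ over the simplex, where $w_i:=\sum_{P\in\mathscr{P}_S}P(x^n)P_{x^n}(i)$ (only summands with $P(x^n)>0$ matter, and there $P_{x^n}$ is well-defined). Since $\sum_i w_i=\sum_P P(x^n)\sum_j P_{x^n}(j)=W$, setting $\bar w_i:=w_i/W$ gives $\sum_i w_i\log q(i)=W\bigl(\textstyle\sum_i\bar w_i\log\bar w_i-D_{\text{KL}}(\bar w,q)\bigr)$, which by Gibbs' inequality (nonnegativity of KL-divergence, with equality iff the arguments coincide) is maximized exactly at $q=\bar w$. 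Identifying $\hat P^*_{x^n}(i)=\bar w_i$ and substituting the definitions of $w_i$ and $W$ yields the claimed formula.

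I do not expect any serious obstacle: the only points needing care are the routine justification that the per-string minimizations decouple (immediate from estimators being arbitrary functions) and the bookkeeping of degenerate strings $x^n$ with $\sum_P P(x^n)=0$, together with the attendant $0\cdot\infty$ conventions when some conditional $P_{x^n}$ is undefined. The one genuinely substantive line is the reduction of a probability-weighted mixture of KL-divergences to a single KL-divergence against the normalized weight vector $\bar w$, after which Gibbs' inequality finishes the argument and simultaneously establishes both existence and the explicit form of $\hat P^*$.
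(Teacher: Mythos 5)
Your proposal is correct, and it is the standard derivation of the Bayes-optimal predictor under logarithmic (KL) loss: decouple across strings $x^n$, drop the $q$-independent entropy term, normalize the weights $w_i$, and invoke Gibbs' inequality to identify the minimizer as the posterior-weighted mixture. The paper states Lemma~\ref{lemma1} without proof, treating it as a known fact, and your argument supplies exactly the expected justification, including the correct handling of the degenerate $\sum_P P(x^n)=0$ case.
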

Clearly, computing $\hat{P}^*$ for all the possible sample sequences $x^n$ may be unrealistic. Instead, let ${\mathscr{K}}_n$ be an arbitrary subset of $[k]^n$, we can lower bound 
\[
\rho_n^{\text{KL}}(P,\hat{P}) = \EE_{X^n\sim P}[D_{\text{KL}}(P_{X^n},\hat{P}_{X^n})]
\]
by
\[
\rho_n^{\text{KL}}(P,\hat{P}; \mathscr{K}_n) := \EE_{X^n\sim P}[D_{\text{KL}}(P_{X^n},\hat{P}_{X^n})\indic_{X^n\in \mathscr{K}_n}].
\]
This yields
\[
\rho_n^{\text{KL}}(\mathscr{P})\geq{ \min_{\hat{P}}\EE_{P\sim U(\mathscr{P}_S)} [\rho_n^\text{KL}(P,\hat{P}; \mathscr{K}_n)]}.
\]
The key to apply the above arguments is to find a proper pair $(\mathscr{P}_S, \mathscr{K}_n)$. The rest of this section is organized as follows. In Subsection~\ref{prior}, we present our construction of $\mathscr{P}_S$ and $\mathscr{K}_n$. In Subsection~\ref{optest}, we find the exact form of the optimal estimator using Lemma~\ref{lemma1}. Then we analyze its prediction risk over $\mathscr{K}_n$ in Subsection~\ref{analysis}, where we further partition $\mathscr{K}_n$ into smaller subsets $K_\ell(i)$, and lower bound the KL-divergence over $K_\ell(i)$ and the probability $P(X^n\in K_\ell(i))$ in Lemma~\ref{lemma4} and~\ref{lemma5}, respectively. Finally, we consolidate all the previous results and prove the desired lower bound on $\rho_n^{\text{KL}}(\mathscr{P})$.
\subsection{Prior construction}\label{prior}
Without loss of generality, we assume that $k$ is an even integer. For notational convenience, we denote by $u_k$ the uniform distribution over $[k]$ and define
\[
M_n(p_2, p_4, \ldots, p_k) := 
\begin{bmatrix}
    \omf & \fon & \fon &  \fon & \dots  & \fon & \fon\\
    p_2 & \omfs2 & \fon & \fon & \dots  & \fon & \fon\\
    \fon & \fon & \omf & \fon & \dots  & \fon & \fon\\
    \fon & \fon & p_4 & \omfs4 &\dots  & \fon & \fon\\
    \vdots & \vdots & \vdots & \vdots & \ddots & \vdots &\vdots\\
    \fon & \fon & \fon & \fon & \dots  & \omf & \fon\\
    \fon & \fon & \fon & \fon & \dots  & p_k & \omfs k
\end{bmatrix},
\]
where $a:=\frac{1}{n}$ and $b:=1-\frac{k-2}{n}$. In addition, let 
\[
V_n:= \left\{\frac{1}{\log^t n}: t\in \mathds{N} \text{ and } 1\leq t \leq \frac{\log n}{2\log\log n} \right\}.
\]
Given $n$, we set 
\[
\mathscr{P}_S=\{(M)\in {\mathds{M}}^k: \mu=u_{k} \text{ and }M=M_n(p_2, p_4, \ldots, p_k), \text{ where } p_i\in V_n, \forall i\in[k]^e\}.
\]
Then, we choose $\mathscr{K}_n$ to be the collection of sequences $x^n\in[k]^n$ whose last appearing state didn't transition to any other symbol. In other words, for any state $i\in[k]$, let $\bar{i}$ represent an arbitrary state other than $i$, then
\[
\mathscr{K}_n=\{x^n\in[k]^n: x^n={\bar{i}}^{n-\ell} i^\ell: i\in[k], n-1\geq \ell\geq{1}\}.
\]
According to both the last appearing state and the number of times it transitions to itself, we can partition $\mathscr{K}_n$ as
\[
\mathscr{K}_n=\cup_{\ell=1}^{n-1}K_\ell(i), \text{ where } K_\ell(i):= \{x^n\in[k]^n: x^n={\bar{i}}^{n-\ell} i^\ell\}.
\]

\subsection{The optimal estimator}\label{optest}
Let $\hat{P}^*$ denote the optimal estimator that minimizes $\EE_{P\sim U(\mathscr{P}_S)} [\rho_n^\text{KL}(P,\hat{P}; \mathscr{K}_n)]$. The following lemma presents the exact form of $\hat{P}^*$.
\begin{Lemma}\label{lemma2}
For any $x^n\in \mathscr{K}_n$, there exists a unique $K_\ell(i)$ that contains it. Consider $\hat{P}^*_{x^n}$, we have:

\begin{enumerate}
\item If $i\in[k]^e$, then
\[ 
\hat{P}^*_{x^n}(j)
:=
\begin{cases}  
      a & j>i \text{ or } j<i-1\\ 
      \sum_{v\in V_n}(b-v)^\ell/\sum_{v\in V_n}(b-v)^{\ell-1} & j=i\\
      \sum_{v\in V_n}(b-v)^{\ell-1}v/\sum_{v\in V_n}(b-v)^{\ell-1} & j=i-1
   \end{cases}
\]
\item If $i\in[k]^o$, then
\[ 
\hat{P}^*_{x^n}(j)
:=
\begin{cases}  
      a & j>i \text{ or } j<i\\ 
      b-a & j=i
   \end{cases}
\]
\end{enumerate}
\end{Lemma}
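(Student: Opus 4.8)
The plan is to invoke Lemma~\ref{lemma1} in the form adapted to the truncated objective $\EE_{P\sim U(\mathscr{P}_S)}[\rho_n^{\text{KL}}(P,\hat P;\mathscr{K}_n)]$. First I would note that this objective equals $\tfrac{1}{|\mathscr{P}_S|}\sum_{x^n\in\mathscr{K}_n}\sum_{P\in\mathscr{P}_S}P(x^n)\,D_{\text{KL}}(P_{x^n},\hat P_{x^n})$, which decouples across sequences $x^n\in\mathscr{K}_n$; for each such $x^n$ the inner minimization over $\hat P_{x^n}\in\Delta_k$ is exactly the convex problem solved inside the proof of Lemma~\ref{lemma1}, so (all entries of every $M_n(\cdot)\in\mathscr{P}_S$ being positive for large $n$, hence every $P(x^n)>0$) the optimal estimator on $\mathscr{K}_n$ is the posterior mixture
\[
\hat P^*_{x^n}(j)=\sum_{P\in\mathscr{P}_S}\frac{P(x^n)}{\sum_{P'\in\mathscr{P}_S}P'(x^n)}\,P_{x^n}(j),\qquad j\in[k],
\]
with $P_{x^n}(j)=M_{ij}$ since $x_n=i$ for $x^n\in K_\ell(i)$. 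It then remains to evaluate these mixtures on each $K_\ell(i)$, and the structural fact I would exploit throughout is that the parameter $p_i$ enters $M_n(p_2,\ldots,p_k)$ only through row $i$ (entries $M_{i,i-1}=p_i$ and $M_{ii}=b-p_i$), while every $x^n\in K_\ell(i)$ avoids state $i$ outside its terminal run of length $\ell$.

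For odd $i$ the claim is immediate: row $i$ of $M_n$ carries no parameter, so $P_{x^n}(j)=M_{ij}$ equals $b-a$ for $j=i$ and $a$ otherwise, independently of $P$, and the mixture collapses to the same constants — this is case~2.

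For even $i$, fix $x^n=\bar i^{\,n-\ell}i^\ell\in K_\ell(i)$ and write $P(x^n)=u_k(x_1)\prod_{t=2}^n M_{x_{t-1}x_t}=\tfrac1k\prod_{t=2}^n M_{x_{t-1}x_t}$. I would split this product into the $n-\ell-1$ prefix transitions among states in $[k]\setminus\{i\}$, the single transition $M_{x_{n-\ell},i}$ into the terminal run, and the $\ell-1$ self-loops $M_{ii}=b-p_i$. No prefix factor can involve $p_i$ since $p_i$ lives in row $i$ and no coordinate of the prefix equals $i$; and I would check using the even/odd alternation of the construction that $M_{r,i}=a$ for every $r\ne i$ (the only nonconstant entry $p_r$ of an even row $r$ sits in the odd column $r-1$, which can never equal the even index $i$, and odd rows are constant off the diagonal), so the into-run factor is exactly $a$. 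Hence $P(x^n)=\tfrac{a}{k}\,C(x^n)\,(b-p_i)^{\ell-1}$, where $C(x^n)$ depends on the fixed sequence and on the parameters $p_{i'}$ with $i'\ne i$, but not on $p_i$. Because $\mathscr{P}_S$ is the product set over all choices $p_{i'}\in V_n$, the sum over $\mathscr{P}_S$ factorizes, and the common factor $\tfrac{a}{k}\sum_{(p_{i'})_{i'\ne i}}C(x^n)$ cancels between the numerator and denominator of the mixture. Substituting $P_{x^n}(i)=b-p_i$, $P_{x^n}(i-1)=p_i$, and $P_{x^n}(j)=a$ for $j\notin\{i-1,i\}$ then leaves precisely the ratios $\sum_{v\in V_n}(b-v)^{\ell}/\sum_{v\in V_n}(b-v)^{\ell-1}$, $\sum_{v\in V_n}(b-v)^{\ell-1}v/\sum_{v\in V_n}(b-v)^{\ell-1}$, and $a$, which is case~1.

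I expect the only genuine obstacle to be the verification, via the parity structure in the definition of $M_n$, that every transition landing in state $i$ has weight exactly $a$ (and that the prefix transitions are free of $p_i$), so that $p_i$ enters $P(x^n)$ solely through the factor $(b-p_i)^{\ell-1}$. Once that observation is nailed down, the factorization of the sum over $\mathscr{P}_S$, the cancellation, and the matching with the claimed closed forms are routine; the case $\ell=1$ is handled by the same formulas with the convention that an empty product equals $1$.
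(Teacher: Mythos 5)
Your proposal is correct and follows essentially the same route as the paper: both start from the posterior-mean formula of Lemma~\ref{lemma1}, observe that for $x^n\in K_\ell(i)$ the parameter $p_i$ enters $P(x^n)$ only through the factor $(b-p_i)^{\ell-1}$ (because the prefix avoids state $i$ and every transition into $i$ from another state has weight $a$), then factorize the sum over the product set $\mathscr{P}_S$ and cancel the $p_i$-independent factor between numerator and denominator. The paper packages the $p_i$-independent terms as a constant $C(x^n,k)$ and tracks the counts $N_{t(t-1)},N_{tt}$ over even $t$, while you instead decompose the trajectory into prefix, into-run transition, and self-loops, but the underlying observation and the cancellation are identical.
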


\begin{proof}
Given $(M)\in \mathscr{P}_S$, consider $X^n\sim (M)$, 
\begin{align*}
\Pr(X^n=x^n)&= \frac{1}{k}{\displaystyle \prod_{i_1\in[k]} {\prod_{j_1\in[k]}} M_{i_1j_1}^{N_{i_1j_1}}}.
\end{align*}

By Lemma~\ref{lemma1}, for any $x^n\in{K_\ell(i)}$ and $j\in[k]$, $\hat{P}^*_{x^n}(j)$ evaluates to
\begin{align*}
\hat{P}^*_{x^n}(j)
&=\frac{\displaystyle\sum_{(M)\in \mathscr{P}_S}{M_{ij}}\prod_{i_1\in[k]} {\prod_{j_1\in[k]}} M_{i_1j_1}^{N_{i_1j_1}}}{\displaystyle\sum_{(M)\in \mathscr{P}_S}\prod_{i_1\in[k]} {\prod_{j_1\in[k]}} M_{i_1j_1}^{N_{i_1j_1}}}.
\end{align*}

Noting that $x^n\in{K_\ell(i)}$ implies $N_{ii}=\ell-1$ and $N_{ij}=0, \forall j\not=i$. Besides, for any $j_1\in[k]$ and $i_1\in [k]\setminus{\{j_1,j_1+1\}}$, $M_{i_1j_1}$ is uniquely determined by $i_1$ and $j_1$ for all $(M)\in \mathscr{P}_S$.

Thus, for $s=0$ or $1$, we can rewrite ${M_{ij}}^s\prod_{i_1\in[k]} {\prod_{j_1\in[k]}} M_{i_1j_1}^{N_{i_1j_1}}$ as
\begin{center}
$C(x^n,k){\displaystyle {M_{ij}^s}\prod_{\substack{t=2\\ t\ even}}^{k}} \left[M_{t(t-1)}\right]^{N_{t(t-1)}}[M_{tt}]^{N_{tt}}$,
\end{center}
where $C(x^n,k)$ is a constant that only depends on $x^n$ and $k$. 

Hence, for any $x^n\in{K_\ell(i)}$,
\begin{align*}
\hat{P}^*_{x^n}(j)
&=\frac{\displaystyle\sum_{(M)\in \mathscr{P}_S}{\displaystyle{M_{ij}}\prod_{\substack{t=2\\ t\ even}}^{k}} \left[M_{t(t-1)}\right]^{N_{t(t-1)}}[M_{tt}]^{N_{tt}}}{\displaystyle\sum_{(M)\in \mathscr{P}_S}{\displaystyle\prod_{\substack{t=2\\ t\ even}}^{k}} \left[M_{t(t-1)}\right]^{N_{t(t-1)}}[M_{tt}]^{N_{tt}}}.
\end{align*}

Below we show how to evaluate $\hat{P}^*_{x^n}(j)$ for $j=i\in[k]^e$, and other cases can be derived similarly.

Combining ${M_{jj}}^{N_{jj}}$ with $M_{jj}$ in the nominator,
\begin{align*}
\hat{P}^*_{x^n}(j)
&=
\frac{\displaystyle\sum_{(M)\in \mathscr{P}_S}{\displaystyle{\left[M_{jj}^{\ell}\right]}\prod_{\substack{t=2\\ t\ even\\t\not=j}}^{k}} \left[M_{t(t-1)}\right]^{N_{t(t-1)}}[M_{tt}]^{N_{tt}}}{\displaystyle\sum_{(M)\in \mathscr{P}_S}{\displaystyle{\left[M_{jj}^{\ell-1}\right]}\prod_{\substack{t=2\\ t\ even\\t\not=j}}^{k}} \left[M_{t(t-1)}\right]^{N_{t(t-1)}}[M_{tt}]^{N_{tt}}}
\\&=
\frac{\displaystyle\sum_{\substack{v\in{V_{n}}\\v'\in{V_{n}}}}{\displaystyle{(b-v')^{\ell}}\prod_{\substack{t=2\\ t\ even\\t\not=j}}^{k}} v^{N_{t(t-1)}}(b-v)^{N_{tt}}}{\displaystyle\sum_{\substack{v\in{V_{n}}\\v'\in{V_{n}}}}{\displaystyle{(b-v')^{\ell-1}}\prod_{\substack{t=2\\ t\ even\\t\not=j}}^{k}} v^{N_{t(t-1)}}(b-v)^{N_{tt}}}
\\&=
\frac{{\left[\sum_{\substack{v'\in{V_{n}}}}(b-v')^{\ell}\right]}\displaystyle\sum_{\substack{v\in{V_{n}}}} {\displaystyle\prod_{\substack{t=2\\ t\ even\\t\not=j}}^{k}}v^{N_{t(t-1)}}(b-v)^{N_{tt}}}{{\left[\sum_{\substack{v'\in{V_{n}}}}(b-v')^{\ell-1}\right]}\displaystyle\sum_{\substack{v\in{V_{n}}}} {\displaystyle\prod_{\substack{t=2\\ t\ even\\t\not=j}}^{k}}v^{N_{t(t-1)}}(b-v)^{N_{tt}}}
\\&=
\frac{\sum_{\substack{v\in{V_{n}}}}(b-v)^{\ell}}{\sum_{\substack{v\in{V_{n}}}}(b-v)^{\ell-1}}.
\end{align*}
This completes the proof.
\end{proof}

\subsection{Analysis}\label{analysis}
Next, for any $x^n\in{K_\ell(i)}$, we lower bound $D_{\text{KL}}(P_{x^n}, \hat{P}^*_{x^n})$ in terms of $M_{i(i-1)}$ and $ \hat{P}^*_{x^n}(i-1)$.

\begin{Lemma}\label{lemma3}
For any $(M)\in \mathscr{P}_S$ and $x^n\in{K_\ell(i)}$,
\[
D_{\text{KL}}(P_{x^n}, \hat{P}^*_{x^n})\geq M_{i(i-1)}\Paren{-1+\log\frac{M_{i(i-1)}}{\hat{P}^*_{x^n}(i-1)}}.
\]
\end{Lemma}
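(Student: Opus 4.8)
The plan is to reduce the $k$-ary divergence $D_{\text{KL}}(P_{x^n},\hat{P}^*_{x^n})$ to a binary one by lumping together all coordinates except $j=i-1$, and then to throw away the contribution of the lumped block at the cost of an additive $-M_{i(i-1)}$ term. Since $x^n\in K_\ell(i)$ ends in state $i$, the Markov property gives $P_{x^n}=M(i,\cdot)$, so $D_{\text{KL}}(P_{x^n},\hat{P}^*_{x^n})=\sum_{j\in[k]}M_{ij}\log\tfrac{M_{ij}}{\hat{P}^*_{x^n}(j)}$, and all the logarithms in sight are finite because the explicit formulas in Lemma~\ref{lemma2} show $\hat{P}^*_{x^n}(i-1)\in(0,1)$ (it is either $a=1/n$ or a convex combination of elements of $V_n$, each strictly below $1$) and $M_{i(i-1)}\in(0,1)$.

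First I would apply the log-sum inequality to the terms with $j\neq i-1$, which yields
\[
\sum_{j\neq i-1}M_{ij}\log\frac{M_{ij}}{\hat{P}^*_{x^n}(j)}\ \geq\ \Paren{1-M_{i(i-1)}}\log\frac{1-M_{i(i-1)}}{1-\hat{P}^*_{x^n}(i-1)},
\]
equivalently, coarsening the partition of $[k]$ into $\{i-1\}$ and its complement can only decrease KL (data processing). Adding back the $j=i-1$ term, this gives $D_{\text{KL}}(P_{x^n},\hat{P}^*_{x^n})\geq M_{i(i-1)}\log\tfrac{M_{i(i-1)}}{\hat{P}^*_{x^n}(i-1)}+(1-M_{i(i-1)})\log\tfrac{1-M_{i(i-1)}}{1-\hat{P}^*_{x^n}(i-1)}$, i.e. at least the binary KL divergence between $M_{i(i-1)}$ and $\hat{P}^*_{x^n}(i-1)$.

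It then remains to prove the one-variable estimate: for $x\in[0,1)$ and $y\in(0,1)$, one has $(1-x)\log\tfrac{1-x}{1-y}\geq -x$. Indeed $\log(1-y)\leq -y\leq 0$, so $-(1-x)\log(1-y)\geq (1-x)y\geq 0$ and it suffices to check $(1-x)\log(1-x)\geq -x$; the function $g(x):=(1-x)\log(1-x)+x$ has $g(0)=0$ and $g'(x)=-\log(1-x)\geq 0$ on $[0,1)$, hence $g\geq 0$. Taking $x=M_{i(i-1)}$, $y=\hat{P}^*_{x^n}(i-1)$ and substituting into the previous display gives $D_{\text{KL}}(P_{x^n},\hat{P}^*_{x^n})\geq M_{i(i-1)}\log\tfrac{M_{i(i-1)}}{\hat{P}^*_{x^n}(i-1)}-M_{i(i-1)}$, which is exactly the claim; note the argument is uniform in $i$ (for odd $i$ the right-hand side is negative and the bound is vacuous). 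There is no serious obstacle here — the only point needing a moment's care is checking that the grouping step loses rather than gains divergence, and that the denominators $1-\hat{P}^*_{x^n}(i-1)$ stay bounded away from $0$, both of which follow immediately from Lemma~\ref{lemma2}; the rest is the short calculus fact above.
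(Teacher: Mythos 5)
Your proof is correct, and it takes a genuinely (if subtly) different route than the paper's. The paper leans on the algebraic structure of $\mathscr{P}_S$: for $x^n\in K_\ell(i)$ with $i$ even, $M(i,j)=\hat P^*_{x^n}(j)=a$ for every $j\notin\{i-1,i\}$, so the $k$-ary KL collapses \emph{exactly} to the two-term expression $M_{ii}\log\frac{M_{ii}}{\hat P^*_{x^n}(i)}+M_{i(i-1)}\log\frac{M_{i(i-1)}}{\hat P^*_{x^n}(i-1)}$; moreover Lemma~\ref{lemma2} gives the identity $\hat P^*_{x^n}(i)+\hat P^*_{x^n}(i-1)=b=M_{ii}+M_{i(i-1)}$, from which $M_{ii}\log\frac{M_{ii}}{\hat P^*_{x^n}(i)}\ge M_{ii}-\hat P^*_{x^n}(i)=\hat P^*_{x^n}(i-1)-M_{i(i-1)}\ge -M_{i(i-1)}$ via $\log(1+u)\ge u/(1+u)$. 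You instead coarsen the alphabet to $\{i-1\}$ versus its complement via the log-sum (data-processing) inequality, obtaining the binary KL between $M_{i(i-1)}$ and $\hat P^*_{x^n}(i-1)$, and then finish with the elementary bound $(1-x)\log\frac{1-x}{1-y}\ge -x$. Your route does not rely on the off-coordinate cancellation $M_{ij}=\hat P^*_{x^n}(j)$ nor on the identity $\hat P^*_{x^n}(i)=b-\hat P^*_{x^n}(i-1)$, so it would survive a prior where the estimator did not match $M$ so exactly; the price is a slightly looser intermediate step (you lower-bound $(b-x)\log\frac{b-x}{b-y}$ by $(1-x)\log\frac{1-x}{1-y}$, which the paper keeps equal), though both collapse to the same final inequality. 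Your observation that the bound is vacuous for odd $i$ is also right, and harmless, since the downstream analysis in Section~\ref{sec:9} sums only over $i\in[k]^e$.
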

\begin{proof}
By the previous lemma, 
\[
D_{\text{KL}}(P_{x^n}, \hat{P}^*_{x^n})=M_{ii}\log\frac{M_{ii}}{\hat{P}^*_{x^n}(i)}+M_{i(i-1)}\log\frac{M_{i(i-1)}}{\hat{P}^*_{x^n}(i-1)}.
\]
Noting that $\frac{x}{x+1}\leq \log(x+1)$ for all $x>-1$,
\begin{align*}
M_{ii}\log\frac{M_{ii}}{\hat{P}^*_{x^n}(i)}
&=M_{ii}\log\Paren{\frac{M_{ii}-\hat{P}^*_{x^n}(i)}{\hat{P}^*_{x^n}(i)}+1}\\
&\geq M_{ii}-\hat{P}^*_{x^n}(i)\\
&= \Paren{b-M_{i(i-1)}}- \Paren{b-\hat{P}^*_{x^n}{(i-1)}}\\
&\geq -M_{i(i-1)}.
\end{align*}
This completes the proof.
\end{proof}

Let $V'_n:=\{\frac{1}{(\log n)^t}\ \mid t\in\mathds{N}, 1\leq{t}\leq{\frac{\log n}{4\log \log n}\}}$ be a subset of $V_n$ whose size is $\frac{1}{2}|V_n|$. For $M_{i(i-1)}\in V'_n$, we further lower bound ${M_{i(i-1)}}/{\hat{P}^*_{x^n}(i-1)}$ in terms of $n$. 

Let $\ell_1(M):=\frac{1}{M_{i(i-1)}}\frac{1}{\log\log n}$ and $\ell_2(M):= {\frac{1}{M_{i(i-1)}}\log\log n}$, we have 
\begin{Lemma}\label{lemma4}
For any $(M)\in \mathscr{P}_S$, $x^n\in{K_\ell(i)}$ where $i\in[k]^e$, $M_{i(i-1)}=\frac{1}{(\log n)^m}\in V'_n$, and sufficiently large $n$, if 
\[
\ell_1(M)\leq \ell \leq \ell_2(M),
\]
then,
\[
\frac{M_{i(i-1)}}{\hat{P}^*_{x^n}(i-1)}\gtrsim \frac{\log n}{8\log\log n}(1-o(1)).
\]
\end{Lemma}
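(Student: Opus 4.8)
The plan is to work directly from the closed form for $\hat{P}^*_{x^n}(i-1)$ supplied by Lemma~\ref{lemma2}. For $x^n\in K_\ell(i)$ with $i\in[k]^e$ that lemma gives
\[
\hat{P}^*_{x^n}(i-1)=\frac{\sum_{v\in V_n}(b-v)^{\ell-1}v}{\sum_{v\in V_n}(b-v)^{\ell-1}},
\]
a quantity depending on $x^n$ only through $\ell$; write $\mathrm{Num}_\ell$ and $\mathrm{Den}_\ell$ for its numerator and denominator. Since $M_{i(i-1)}=p$ with $p=(\log n)^{-m}$, the number $p$ is itself the $m$-th point of the grid $V_n$, and the hypothesis $p\in V'_n$ means $m\le\tfrac{\log n}{4\log\log n}$, which in turn gives $n^{-1/4}\le p\le 1/\log n$. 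So it suffices to lower bound $p\,\mathrm{Den}_\ell/\mathrm{Num}_\ell$ uniformly over $\ell\in[\ell_1(M),\ell_2(M)]=[\tfrac{1}{p\log\log n},\tfrac{\log\log n}{p}]$; I will in fact show $\mathrm{Den}_\ell\gtrsim\tfrac{\log n}{4\log\log n}$ and $\mathrm{Num}_\ell\lesssim p$, which yields the claim with a factor of $2$ to spare.

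For the denominator I would keep only the grid points $v_t=(\log n)^{-t}$ with $t\ge m+1$. Because $m\le\tfrac{\log n}{4\log\log n}$ and the top index of $V_n$ is $\lfloor\tfrac{\log n}{2\log\log n}\rfloor$, there are at least $\tfrac{\log n}{4\log\log n}(1-o(1))$ of them. Each satisfies $v_t\le p/\log n$, so with $\ell\le\ell_2=\tfrac{\log\log n}{p}$ we get $(\ell-1)v_t\le\log\log n/\log n\to 0$ and hence, using $b-v_t\ge(1-\tfrac{k-2}{n})(1-v_t)$,
\[
(b-v_t)^{\ell-1}\ \ge\ \Big(1-\tfrac{k-2}{n}\Big)^{\ell_2}(1-v_t)^{\ell_2}\ \ge\ 1-o(1),
\]
where $(1-\tfrac{k-2}{n})^{\ell_2}=1-o(1)$ follows from $\ell_2\le n^{1/4}\log\log n$, itself a consequence of $p\ge n^{-1/4}$. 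Summing over the surviving $t$ yields $\mathrm{Den}_\ell\ge\tfrac{\log n}{4\log\log n}(1-o(1))$.

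For the numerator I would use $b-v\le 1-v\le e^{-v}$ to bound each summand by $g(v_t):=v_t\,e^{-(\ell-1)v_t}$, and split the sum around the grid point $v_m=p$. The single term $t=m$ is at most $g(p)=p\,e^{-(\ell-1)p}\le p$. For $t>m$ the crude bound $g(v_t)\le v_t$ gives $\sum_{t>m}g(v_t)\le\sum_{j\ge1}p(\log n)^{-j}=o(p)$. For $t<m$, write $v_t=p(\log n)^{j}$ with $j\ge 1$; using $\ell\ge\ell_1$ so that $(\ell-1)p\ge\tfrac{1}{\log\log n}-p\ge\tfrac{1-o(1)}{\log\log n}$, we obtain
\[
g(v_t)\ \le\ p(\log n)^{j}\exp\!\Big(-\tfrac{(1-o(1))(\log n)^{j}}{\log\log n}\Big),
\]
and these terms decay super-geometrically in $j$, so the tail is dominated by $j=1$, whose value $p(\log n)\exp\!\big(-\tfrac{(1-o(1))\log n}{\log\log n}\big)=o(p)$ because $\log n/\log\log n$ dwarfs $\log\log n$. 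Hence $\mathrm{Num}_\ell\le p(1+o(1))$.

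Combining, $\hat{P}^*_{x^n}(i-1)\le\tfrac{4p\log\log n}{\log n}(1+o(1))$, so $M_{i(i-1)}/\hat{P}^*_{x^n}(i-1)\ge\tfrac{\log n}{4\log\log n}(1-o(1))\ge\tfrac{\log n}{8\log\log n}(1-o(1))$. The delicate step is the numerator bound: I must confirm that among all points of $V_n$ only $v_m=p$ contributes at order $p$. This is exactly where the geometric spacing of $V_n$ with common ratio $\log n$ is used — one step up the grid multiplies $v$ by $\log n$, and since $\ell\ge\ell_1\asymp\tfrac{1}{p\log\log n}$ this multiplies the exponent $(\ell-1)v$ by $\log n$ as well, so the exponential factor $e^{-(\ell-1)v}$ crushes the mild polynomial-in-$v$ growth; below the grid point the terms are negligible simply because $v_t$ is tiny. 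The remaining thing to watch is uniformity of the $o(1)$'s over $p\in V'_n$ and over the whole window $\ell\in[\ell_1,\ell_2]$; the binding cases are $p\approx n^{-1/4}$ for $\mathrm{Den}_\ell$ and $\ell\approx\ell_1$ for $\mathrm{Num}_\ell$, which are precisely the bounds written out above.
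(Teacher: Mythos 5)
Your proof is correct in substance and follows essentially the same route as the paper's. The paper writes $F_\ell := M_{i(i-1)}/\hat P^*_{x^n}(i-1)$, replaces the exponent $\ell-1$ by $\ell$, and decomposes the resulting ratio as $(A_\ell+X_\ell+C_\ell)/(B_\ell+X_\ell+D_\ell)$, where $C_\ell$ collects the terms with $v<p$ (shown to each exceed $1/2$, giving $C_\ell\gtrsim\frac{\log n}{8\log\log n}$), $B_\ell$ collects the terms with $v>p$ weighted by $(\log n)^{m-t}$ (shown to be $O(1/\log n)$), and $X_\ell,D_\ell$ are bounded by $1$ and $C_\ell/\log n$ respectively. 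Your $\mathrm{Den}_\ell$ bound is exactly the paper's $C_\ell$ bound and your $\mathrm{Num}_\ell$ bound corresponds to $p\cdot(B_\ell+X_\ell+D_\ell)$, so the two arguments are the same decomposition in different notation; you just get a marginally sharper constant ($\frac{\log n}{4\log\log n}$ rather than $\frac{\log n}{8\log\log n}$) by bounding each surviving denominator term by $1-o(1)$ instead of $1/2$.

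One small algebraic slip: the inequality $b-v_t\ge\bigl(1-\frac{k-2}{n}\bigr)(1-v_t)$ goes the wrong way, since $\bigl(1-\frac{k-2}{n}\bigr)(1-v_t)=1-\frac{k-2}{n}-v_t+\frac{(k-2)v_t}{n}>b-v_t$. This does not endanger the conclusion: what you actually need is $(b-v_t)^{\ell-1}\ge 1-o(1)$ uniformly for $t\ge m+1$ and $\ell\le\ell_2$, and this follows directly from $(1-x)^{\ell-1}\ge 1-(\ell-1)x$ with $x=\frac{k-2}{n}+v_t$, together with $\ell_2 v_t\le\frac{\log\log n}{\log n}\to 0$ and $\ell_2\frac{k-2}{n}\le\frac{(k-2)\log\log n}{n^{3/4}}\to 0$ (the latter is precisely where the restriction $p\in V_n'$, i.e.\ $p\ge n^{-1/4}$, is used). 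So the statement is right; just fix that one line.
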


\begin{proof}
Consider $M_{i(i-1)}=\frac{1}{(\log n)^m}\in V'_n$, where $m\in [1, \frac{\log n}{4\log \log n}]$.

Note that for $x^n\in{K_\ell(i)}$, the value of $\hat{P}^*_{x^n}(i-1)$ only depends on $\ell$, we can define
\[
F_\ell:=\frac{M_{i(i-1)}}{\hat{P}^*_{x^n}(i-1)}.
\]
We have
\[
F_\ell\geq\frac{A_\ell+X_\ell+C_\ell}{B_\ell+X_\ell+D_\ell},
\]
where
\begin{align*}
&X_\ell:=\Paren{1-\frac{k-2}{n}-\frac{1}{(\log{n})^m}}^\ell,\\&
A_\ell:=\sum\limits_{i=1}^{m-1}{\Paren{1-\frac{k-2}{n}-\frac{1}{(\log{n})^i}}^\ell},\\&
C_\ell:=\sum\limits_{i=m+1}^{\frac{\log n}{2\log \log n}}{\Paren{1-\frac{k-2}{n}-\frac{1}{(\log{n})^i}}^\ell},\\&
B_\ell:=\sum\limits_{i=1}^{m-1}{\Paren{1-\frac{k-2}{n}-\frac{1}{(\log{n})^i}}^\ell}{(\log n)^{m-i}},\\&
\text{ and }D_\ell:=\sum\limits_{i=m+1}^{\frac{\log n}{2\log \log n}}{\Paren{1-\frac{k-2}{n}-\frac{1}{(\log{n})^i}}^\ell}{(\log n)^{m-i}}.
\end{align*}

We have the following bounds on these quantities.

\subsection*{Bounds for $X_\ell$}
\[
0\leq X_\ell = \Paren{1-\frac{k-2}{n}-\frac{1}{(\log{n})^m}}^\ell\leq 1.
\]

\subsection*{Bounds for $A_\ell$}
\[
0\leq A_\ell = \sum\limits_{i=1}^{m-1}{\Paren{1-\frac{k-2}{n}-\frac{1}{(\log{n})^i}}^\ell}.
\]

\subsection*{Bounds for $D_\ell$}
\[
0\leq D_\ell \leq{\sum\limits_{i=m+1}^{\frac{\log n}{2\log \log n}}{\Paren{1-\frac{k-2}{n}-\frac{1}{(\log{n})^i}}^\ell}{\frac{1}{\log n}}}={\frac{1}{\log n}C_\ell}.
\]

\subsection*{Bounds for $C_\ell$}

Note that
\[
\frac{(\log n)^m}{\log\log n}\leq \ell \leq {(\log n)^m \log\log n}
\]
and 
\[
(\log n)^m\leq \sqrt{n}.
\]

Consider a single term of $C_\ell$, we have
\begin{align*}
\Paren{1-\frac{k-2}{n}-\frac{1}{(\log n)^i}}^\ell
&\geq{\Paren{1-\frac{k-2}{n}-\frac{1}{(\log n)^i}}^{(\log n)^m\log \log n}}\\
&={\Paren{1-\frac{k-2}{n}-\frac{1}{(\log n)^i}}^{\frac{1}{\frac{k-2}{n}+\frac{1}{(\log n)^i}}\Paren{\frac{k-2}{n}+\frac{1}{(\log n)^i}}\Paren{\log n}^m\log \log n}}\\
&\geq{\left[\left(1-\frac{k-2}{n}-\frac{1}{(\log n)^i}\right)^{\frac{1}{\frac{k-2}{n}+\frac{1}{(\log n)^i}}}\right]^{(\frac{k-2}{\sqrt{n}}+\frac{1}{\log n})\log\log n}}\\
&\geq{\Paren{\frac{1}{4}}^{\frac{(k-2)\log\log n}{\sqrt{n}}+\frac{\log \log n}{\log n}}}\\
&\geq{\Paren{\frac{1}{4}}^\frac{1}{2}}=\frac{1}{2},
\end{align*}
where we use the inequality $i\geq m+1$ and $(1-\frac{1}{x})^x\geq{\frac{1}{4}}$ for $x\geq{2}$. 

Hence,
\[
{\frac{\log n}{8\log \log n}}={\frac{\log n}{4\log \log n}\cdot\frac{1}{2}}\leq C_\ell = \sum\limits_{i=m+1}^{\frac{\log n}{2\log \log n}}{\Paren{1\!\!-\!\!\frac{k-2}{n}\!\!-\!\!\frac{1}{(\log{n})^i}}^\ell} \leq{\sum\limits_{i=m+1}^{\frac{\log n}{2\log \log n}}{1}}\leq{\frac{\log n}{2\log \log n}}.
\]

\subsection*{Bounds for $B_\ell$}
Similarly, consider a single term of $B_\ell$ without the factor $(\log n)^{m-i}$, 
\begin{align*}
\Paren{1-\frac{k-2}{n}-\frac{1}{(\log n)^i}}^\ell
&\leq{\Paren{1-\frac{k-2}{n}-\frac{1}{(\log{n})^i}}}^{\frac{(\log{n})^m}{\log{\log{n}}}}\\
&\leq{\Paren{1-\frac{k-2}{n}-\frac{1}{(\log{n})^i}}}^{\frac{1}{\frac{1}{(\log)^i}+\frac{k-2}{n}}\Paren{\frac{1}{(\log)^i}+\frac{k-2}{n}}\frac{(\log{n})^m}{\log{\log{n}}}}\\
&\leq{\left[\left(1-\frac{k-2}{n}-\frac{1}{(\log{n})^i}\right)^{\frac{1}{\frac{1}{(\log)^i}+\frac{k-2}{n}}}\right]^{\Paren{\frac{1}{(\log)^i}+\frac{k-2}{n}}\frac{(\log{n})^m}{\log{\log{n}}}}}\\
&\leq{\Paren{\frac{1}{e}}^\frac{(\log n)^{m-i}}{\log{\log{n}}}}\\
&=\Paren{\frac{1}{n}}^\frac{(\log n)^{m-i-1}}{\log{\log{n}}},
\end{align*}
where we use the inequality $(1-\frac{1}{x})^x\leq{\frac{1}{e}}$ for $x\geq{2}$. 
\vspace{+1em}

\vfill
\pagebreak
Hence,
\begin{align*}
B_\ell
&=\sum\limits_{i=1}^{m-1}{\Paren{1-\frac{k-2}{n}-\frac{1}{(\log{n})^i}}^\ell}{(\log n)^{m-i}}\\
&\leq{\sum\limits_{i=1}^{m-1}\Paren{\frac{1}{n}}^\frac{(\log n)^{m-i-1}}{\log{\log{n}}}{(\log n)^{m-i}}}\\
&=\sum\limits_{i=1}^{m-1}\Paren{\frac{1}{n}}^\frac{(\log n)^{m-i-1}}{3\log{\log{n}}}{(\log n)^{m-i}}\Paren{\frac{1}{n}}^\frac{2(\log n)^{m-i-1}}{3\log{\log{n}}}\\
&={\Paren{\frac{1}{n}}^\frac{1}{\log{\log{n}}}\log n+\sum\limits_{i=1}^{m-2}\Paren{\frac{1}{n}}^\frac{(\log n)^{m-i-1}}{3\log{\log{n}}}{(\log n)^{m-i}}\Paren{\frac{1}{n}}^\frac{2(\log n)^{m-i-1}}{3\log{\log{n}}}}\\
&\leq{\Paren{\frac{1}{n}}^\frac{1}{\log{\log{n}}}\log n+\sum\limits_{i=1}^{m-2}\Paren{\frac{1}{n}}^\frac{\log n}{3\log{\log{n}}}{(\log n)^{m}}\Paren{\frac{1}{n}}^\frac{2\log n}{3\log{\log{n}}}}\\
&\leq{\Paren{\frac{1}{n}}^\frac{1}{\log{\log{n}}}\log n+\sum\limits_{i=1}^{m-2}\Paren{\frac{1}{n}}^\frac{\log n}{3\log{\log{n}}}{(\log n)^{\frac{\log n}{4\log{\log{n}}}}}\Paren{\frac{1}{n}}^\frac{2\log n}{3\log{\log{n}}}}\\
&\leq{\Paren{\frac{1}{n}}^\frac{1}{\log{\log{n}}}\log n+\frac{\log n}{4\log{\log{n}}}\Paren{\frac{1}{n}}^\frac{\log n}{3\log{\log{n}}}{(\log n)^{\frac{\log n}{4\log{\log{n}}}}}\Paren{\frac{1}{n}}^\frac{2\log n}{3\log{\log{n}}}}
\\
&\leq{\Paren{\frac{1}{n}}^\frac{1}{\log{\log{n}}}\log n+\Paren{\frac{1}{n}}^\frac{\log n}{3\log{\log{n}}}{(\log n)^{\frac{\log n}{4\log{\log{n}}}+1}}\Paren{\frac{1}{n}}^\frac{2\log n}{3\log{\log{n}}}}\\
&\leq{\Paren{\frac{1}{n}}^\frac{1}{\log{\log{n}}}e^{\log\log n}+\Paren{\frac{\log{n}}{n}}^\frac{\log n}{3\log{\log{n}}}\Paren{\frac{1}{n}}^\frac{2\log n}{3\log{\log{n}}}}\\
&\leq{e^{\frac{-\log{n}+(\log{\log{n}})^2}{\log\log n}}+\frac{1}{n}}\\
&\leq{e^{-\frac{2(\log{\log{n}})^2+(\log{\log{n}})^2}{\log\log n}}+\frac{1}{n}}\\
&\leq{e^{-\log{\log{n}}}+\frac{1}{n}}\\
&\leq{\frac{2}{\log{n}}},
\end{align*}
where we use the inequality $x-2(\log{x})^2\geq{0}$ for $x\geq{1}$.

Putting everything together:
\begin{align*}
F_\ell&=\frac{A_\ell+X_\ell+C_\ell}{B_\ell+X_\ell+D_\ell}\geq{\frac{0+\frac{\log n}{8\log \log n}}{\frac{2}{\log{n}}+1+\frac{1}{{2\log \log n}}}}\asymp\frac{\log n}{8\log \log n}.
\end{align*}
This completes the proof.
\end{proof}

Another quantity that will be appear later is $\Pr(X^n\in K_{\ell}(i))$ where $X^n\sim (M)\in \mathscr{P}_S$.
We need the following lower bound.
\begin{Lemma}\label{lemma5}
For $X^n\sim (M)\in \mathscr{P}_S$ and $i\in[k]^e$,
\[
\Pr(X^n\in K_{\ell}(i))\gtrsim \frac{k-1}{ek}\frac{1}{n}\Paren{1-\frac{k-2}{n}-M_{i(i-1)}}^{l-1}.
\]
\end{Lemma}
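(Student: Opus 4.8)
The plan is to evaluate $\Pr(X^n\in K_\ell(i))$ almost exactly and then bound a single elementary factor from below. Recall that every sequence in $K_\ell(i)$ has the shape $x^n = x_1\cdots x_{n-\ell}\,\underbrace{i\cdots i}_{\ell}$ with $x_1,\dots,x_{n-\ell}\in[k]\setminus\{i\}$. Since $\mu=u_k$, for any such $x^n$,
\[
\Pr(X^n=x^n)=\frac1k\Big(\prod_{t=1}^{n-\ell-1}M_{x_tx_{t+1}}\Big)M_{x_{n-\ell},i}\,M_{ii}^{\,\ell-1},
\]
so I would sum this expression over all admissible $x_1,\dots,x_{n-\ell}$.

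The key structural fact --- and the reason the lemma is stated only for even $i$ --- is that in $M_n(p_2,\dots,p_k)$ every row $j\neq i$ satisfies $M_{j,i}=a=1/n$: for odd $j$ this is immediate, and for even $j\neq i$ the only non-$a$ entries of row $j$ lie in columns $j-1$ and $j$, neither of which equals the even index $i$. Consequently $\sum_{j'\neq i}M_{j,j'}=1-1/n$ for every $j\neq i$ as well. Using this I would collapse the path sum by peeling transitions from the right: summing over $x_{n-\ell}\neq i$ extracts the factor $M_{x_{n-\ell},i}=1/n$; each of the $n-\ell-1$ remaining interior summations then yields a factor $1-1/n$; and summing over $x_1\neq i$ yields the count $k-1$. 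This gives the exact identity
\[
\Pr(X^n\in K_\ell(i))=\frac{k-1}{kn}\Big(1-\frac1n\Big)^{n-\ell-1}M_{ii}^{\,\ell-1},
\]
and since $i$ is even, $M_{ii}=b-M_{i(i-1)}=1-\frac{k-2}{n}-M_{i(i-1)}$, precisely the base appearing in the claimed bound.

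It then remains only to lower bound $(1-1/n)^{n-\ell-1}$. Because $1\le\ell\le n-1$ we have $0\le n-\ell-1\le n-1$, so $(1-1/n)^{n-\ell-1}\ge(1-1/n)^{n-1}$, and the elementary inequality $\ln(1-x)\ge -x/(1-x)$ applied at $x=1/n$ gives $(1-1/n)^{n-1}\ge 1/e$. Substituting yields $\Pr(X^n\in K_\ell(i))\ge\frac{k-1}{ekn}\big(1-\frac{k-2}{n}-M_{i(i-1)}\big)^{\ell-1}$, which is the claim (in fact with exact inequality, no $o(1)$ slack needed). There is no genuine obstacle here: the only thing that has to be set up correctly is the uniform one-step escape probability $1/n$ into an even state, which is exactly what makes the path sum telescope; note in particular that Lemma~\ref{lemma6} points the wrong way (it is an upper bound on the hitting probability) and is not used.
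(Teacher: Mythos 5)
Your proof is correct and follows the same path as the paper: compute $\Pr(X^n\in K_\ell(i))$ exactly as the product $\frac{k-1}{k}\cdot\frac{1}{n}\cdot(1-\frac1n)^{n-\ell-1}\cdot M_{ii}^{\ell-1}$ (using exactly the same four transition-probability observations the paper lists), then lower bound the $(1-1/n)$ factor by $1/e$. The only cosmetic differences are that you verify the uniform escape probability $M_{j,i}=1/n$ for all $j\neq i$ in more explicit detail, and you bound $(1-1/n)^{n-\ell-1}\ge(1-1/n)^{n-1}\ge 1/e$ exactly where the paper uses $(1-1/n)^{n-\ell-1}\ge(1-1/n)^n\asymp 1/e$.
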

\begin{proof}
By our construction of $ \mathscr{P}_S$, for $X^n\sim (M)\in \mathscr{P}_S$ and $i\in[k]^e$, we have the following observations.
\begin{enumerate}
\item The probability that the initial state is not $i$ is $\frac{k-1}{k}$.
\item The probability of transitioning from some state $j\not=i$ to some state that is not $i$ is $1-\frac{1}{n}$.
\item The probability of transitioning from some state $j\not=i$ to state $i$ is $\frac{1}{n}$.
\item The probability of transitioning from state $i$ to itself is $1-\frac{k-2}{n}-M_{i(i-1)}$.
\end{enumerate}
Therefore,
\begin{align*}
\Pr(X^n\in K_{\ell}(i))
&=\frac{k-1}{k} \Paren{1-\frac{1}{n}}^{n-\ell-1}\frac{1}{n}\Paren{1-\frac{k-2}{n}-M_{i(i-1)}}^{\ell-1}\\
&\geq \frac{k-1}{k} \Paren{1-\frac{1}{n}}^{n}\frac{1}{n}\Paren{1-\frac{k-2}{n}-M_{i(i-1)}}^{\ell-1}\\
&\asymp \frac{k-1}{ek}\frac{1}{n}\Paren{1-\frac{k-2}{n}-M_{i(i-1)}}^{\ell-1}.
\end{align*}
This completes the proof.
\end{proof}

Now we turn back to $\rho_n^{\text{KL}}(\mathscr{P})$. According to the previous derivations,
\begin{align*}
\rho_n^{\text{KL}}(\mathscr{P})
&\geq{ \min_{\hat{P}}\EE_{P\sim U(\mathscr{P}_S)} [\rho_n^\text{KL}(P,\hat{P}; \mathscr{K}_n)]}\\
&=\EE_{P\sim U(\mathscr{P}_S)} \left[ \sum_{x^n\in \mathscr{K}_n}\Pr_{X^n\sim P}(X^n=x^n) D_{\text{KL}}(P_{x^n},\hat{P}^*_{x^n}) \right]\\
&=\frac{1}{|\mathscr{P}_S|}\sum\limits_{(M)\in \mathscr{P}_S}\sum\limits_{l=1}^{n-1}\sum\limits_{i\in [k]}\sum\limits_{x^n\in{K_\ell(i)}}\left[\Pr_{X^n\sim P}(X^n=x^n) D_{\text{KL}}(P_{x^n},\hat{P}^*_{x^n})\right]\\
&\geq \frac{1}{|\mathscr{P}_S|}\sum\limits_{(M)\in \mathscr{P}_S}\sum\limits_{\ell=\ell_1(M)}^{\ell_2(M)}\sum\limits_{i\in [k]^e}\sum\limits_{x^n\in{K_\ell(i)}}\left[\Pr_{X^n\sim P}(X^n=x^n) D_{\text{KL}}(P_{x^n},\hat{P}^*_{x^n})\right].
\end{align*}
Noting that all $x^n\in{K_\ell(i)}$ have the same $P_{x^n}$ and $\hat{P}^*_{x^n}$, thus, the last formula can be written as
\[
\frac{1}{|\mathscr{P}_S|}\sum\limits_{(M)\in \mathscr{P}_S}\sum\limits_{\ell=\ell_1(M)}^{\ell_2(M)}\sum\limits_{i\in [k]^e}\left[\Pr_{X^n\sim P}(X^n\in{K_\ell(i)}) D_{\text{KL}}(P_{x^n},\hat{P}^*_{x^n}; x^n \in{K_\ell(i)})\right].
\]
By Lemma~\ref{lemma3} and~\ref{lemma4}, for $\ell_1(M)\leq \ell\leq \ell_2(M)$ and $M_{i(i-1)}\in V'_n$, 
\begin{align*}
D_{\text{KL}}(P_{x^n},\hat{P}^*_{x^n}; x^n \in{K_\ell(i)})
&\geq M_{i(i-1)}\Paren{-1+\log\frac{M_{i(i-1)}}{\hat{P}^*_{x^n}(i-1)}}\\
&\gtrsim M_{i(i-1)}\Paren{-1+\log\Paren{\frac{\log n}{8\log\log n}}}\\
&\asymp M_{i(i-1)}\log\log n.
\end{align*}
By Lemma~\ref{lemma5},
\[
\Pr(X^n\in K_{\ell}(i))\gtrsim \frac{k-1}{ek}\frac{1}{n}\Paren{1-\frac{k-2}{n}-M_{i(i-1)}}^{\ell-1}.
\]
Therefore,
\begin{align*}
\rho_n^{\text{KL}}(\mathscr{P})
&\geq \frac{1}{|\mathscr{P}_S|}\sum\limits_{(M)\in \mathscr{P}_S}\sum\limits_{\ell=\ell_1(M)}^{\ell_2(M)}\sum\limits_{i\in [k]^e}\left[\Pr_{X^n\sim P}(X^n\in{K_\ell(i)}) D_{\text{KL}}(P_{x^n},\hat{P}^*_{x^n}; x^n \in{K_\ell(i)})\right]\\
&\gtrsim \frac{(k-1)\log\log n}{enk}\sum\limits_{i\in [k]^e} \frac{1}{|\mathscr{P}_S|}\!\!\!\!\sum\limits_{\substack{(M)\in \mathscr{P}_S\\ \text{ and } M_{i(i-1)}\in V_n'}}\sum\limits_{\ell=\ell_1(M)}^{\ell_2(M)}\Paren{1\!\!-\!\!\frac{k-2}{n}\!\!-\!\!M_{i(i-1)}}^{\ell-1}\!\!\!\!M_{i(i-1)}\\
&\geq \frac{(k-1)\log\log n}{enk}\sum\limits_{i\in [k]^e}\frac{1}{|V_n|}\sum_{v\in V'_n}\sum\limits_{\ell=\frac{1}{v}\frac{1}{\log\log n}}^{\frac{1}{v}\log\log n}\Paren{1\!\!-\!\!\frac{k-2}{n}\!\!-\!\!v}^{\ell-1}v,
\end{align*}
where the last step follows by symmetry.

Next, we show that for any $v=\frac{1}{(\log n)^m}\in V'_n$,
\[
T_m:= \sum\limits_{\ell=\frac{1}{v}\frac{1}{\log\log n}}^{\frac{1}{v}\log\log n}\Paren{1-\frac{k-2}{n}-v}^{\ell-1}v\gtrsim 1.
\]

Noting that $T_m$ is simply the summation of a geometric sequence, we can compute it as follows
\begin{equation*}\label{eq:U_regret47}
\begin{split}
T_m
&=\frac{1}{(\log n)^m}\sum\limits_{\ell=\frac{(\log n)^m}{\log \log n}}^{(\log n)^m\log \log n}\left[\left(1-\frac{k-2}{n}-\frac{1}{(\log n)^m}\right)^{\ell-1}\right]\\
&=\frac{1}{(\log n)^m}\frac{\left(1\!\!-\!\!\frac{k-2}{n}\!\!-\!\!\frac{1}{(\log n)^m}\right)^{\frac{(\log n)^m}{\log \log n}-1}-\left(1\!\!-\!\!\frac{k-2}{n}\!\!-\!\!\frac{1}{(\log n)^m}\right)^{(\log n)^m\log \log n}}{1-\left(1\!\!-\!\!\frac{k-2}{n}\!\!-\!\!\frac{1}{(\log n)^m}\right)}\\
&=\frac{1}{\frac{(k-2)(\log{n})^m}{n}+1}\left[\left(1-\frac{k-2}{n}-\frac{1}{(\log n)^m}\right)^{\frac{(\log n)^m}{\log \log n}-1}\right.\\& \left.-\left(1-\frac{k-2}{n}-\frac{1}{(\log n)^m}\right)^{(\log n)^m\log \log n}\right].
\end{split}
\end{equation*}

To provide a lower bound for $T_m$, we use the following inequalities:
 
\begin{align*}
\frac{1}{\frac{(k-2)(\log{n})^m}{n}+1}\geq{\frac{1}{\frac{(k-2)(\log{n})^{\frac{\log n}{4\log \log n}}}{n}+1}}=\frac{1}{\frac{(k-2)n^{\frac{1}{4}}}{n}+1}\asymp 1,
\end{align*}

\begin{align*}
\left(1\!\!-\!\!\frac{k-2}{n}\!\!-\!\!\frac{1}{(\log n)^m}\right)^{\frac{(\log n)^m}{\log \log n}-1}
&\geq\left[{\left(1\!\!-\!\!\frac{k-2}{n}\!\!-\!\!\frac{1}{(\log{n})^m}\right)}^{\frac{1}{\frac{1}{(\log)^m}+\frac{k-2}{n}}}\right]^{\left(1+\frac{(k-2)(\log{n})^m}{n}\right)\frac{1}{\log{\log{n}}}}\\
&\geq{\Paren{\frac{1}{4}}^{(1+\frac{(k-2)\sqrt{n}}{n})\frac{1}{\log{\log{n}}}}}\geq{\Paren{\frac{1}{4}}^{2\frac{1}{\log{\log{n}}}}}\asymp 1,
\end{align*}
and
\begin{align*}
&\left(1\!\!-\!\!\frac{k-2}{n}\!\!-\!\!\frac{1}{(\log n)^m}\right)^{(\log n)^m\log \log n}\\
&=\left[\left(1\!\!-\!\!\frac{k-2}{n}\!\!-\!\!\frac{1}{(\log n)^m}\right)^{\frac{1}{\frac{k-2}{n}+\frac{1}{(\log n)^m}}}\right]^{\left(\frac{(k-2)(\log{n})^m}{n}+1\right)\log \log n}\\
&\leq{\left(\frac{1}{e}\right)^{\log \log n}}=\frac{1}{\log{n}}.
\end{align*}

\noindent Consolidating these three inequalities, the sum $T_m$ can be lower bounded by
\[
T_m\gtrsim{{1}}{(1-\frac{1}{\log{n}})}\asymp1.
\]
Finally,
\begin{align*}
\rho_n^{\text{KL}}(\mathscr{P})
&\gtrsim \frac{(k-1)\log\log n}{enk}\sum\limits_{i\in [k]^e}\frac{1}{|V_n|}\sum_{v\in V'_n}(1-o(1))\\
&= \frac{(k-1)\log\log n}{enk}\frac{k}{2}\frac{|V'_n|}{|V_n|}\\
&= \frac{(k-1)\log\log n}{4en}.
\end{align*}
\section{Minimax prediction: upper bound}
The proof makes use of the following lemma, which provides a uniform upper bound for the hitting probability of any $k$-state Markov chain.
\begin{Lemma}~\cite{sup17}\label{lemma6}
For any Markov chain over $[k]$ and any two states $i,j\in[k]$, if $n>k$, then
\[
\textrm{Pr}_i{(\tau(j)=n)}\leq \frac{k}{n}.
\]
\end{Lemma}
Let $\mathscr{K}_n$ be the same as is in the previous section. Recall that
\[
\rho_n^{\text{KL}}(P,\hat{P}; \mathscr{K}_n) = \sum_{x^n\in \mathscr{K}_n}P(x^n)D_{\text{KL}}(P_{x^n},\hat{P}_{x^n}),
\]
we denote the \emph{partial minimax prediction risk over $\mathscr{K}_n$} by
\[
\rho_n^{\text{KL}}(\mathscr{P};  \mathscr{K}_n):=\min_{\hat{P}}\max_{P\in \mathscr{P}}\rho_n^{\text{KL}}(P,\hat{P}; \mathscr{K}_n).
\]
Let $\overline{\mathscr{K}_n}:=[k]^n\setminus\mathscr{K}_n$, we define $\rho_n^{\text{KL}}(P,\hat{P}; \overline{\mathscr{K}_n})$ and $\rho_n^{\text{KL}}(\mathscr{P};  \overline{\mathscr{K}_n})$ in the same manner. As the consequence of $\hat{P}$ being a function from $[k]^n$ to $\Delta_k$, we have the following triangle inequality,
\[
\rho_n^{\text{KL}}(\mathscr{P}) \leq  \rho_n^{\text{KL}}(\mathscr{P}; \overline{\mathscr{K}_n}) + \rho_n^{\text{KL}}(\mathscr{P}; \mathscr{K}_n).
\]
Turning back to Markov chains, the next lemma upper bounds $\rho_n^{\text{KL}}({\mathds{M}}^k; \overline{\mathscr{K}_n})$.
\begin{Lemma}\label{lemma7}
Let $\hat{P}^{+\frac{1}{2}}$ denote the estimator that maps $X^n\sim (M)$ to $\hat{M}^{+\frac{1}{2}}(X_{n}, \cdot)$, then
\[
\max_{P\in{\mathds{M}}^k}\rho_n^{\text{KL}}(P, \hat{P}^{+\frac{1}{2}};  \overline{\mathscr{K}_n})\leq{\mathcal{O}_k\left(\frac{1}{n}\right)},
\]
which implies
\[
\rho_n^{\text{KL}}({\mathds{M}}^k; \overline{\mathscr{K}_n})\leq{\mathcal{O}_k\left(\frac{1}{n}\right)}.
\]
\end{Lemma}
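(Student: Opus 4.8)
The plan is to bound $\rho_n^{\text{KL}}(P,\hat{P}^{+\frac12};\overline{\mathscr{K}_n})$ state by state. Since $\hat{P}^{+\frac12}_{X^n}=\hat{M}^{+\frac12}_{X^n}(X_n,\cdot)$, and the $i$-th row of $\hat{M}^{+\frac12}$ depends on $X^n$ only through the outgoing counts $(N_{ij})_{j\in[k]}$, whose sum is $N_i$, we have $\rho_n^{\text{KL}}(P,\hat{P}^{+\frac12};\overline{\mathscr{K}_n})=\sum_{i\in[k]}\Exp_{X^n\sim P}\big[D_{\text{KL}}(M(i,\cdot),\hat{M}^{+\frac12}_{X^n}(i,\cdot))\,\indic_{\{X_n=i,\,X^n\in\overline{\mathscr{K}_n}\}}\big]$. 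A first remark is that $\{X_n=i\}\cap\overline{\mathscr{K}_n}\subseteq\{N_i\ge1\}$: if $X_n=i$ and $N_i=0$, then $X^n=\bar{i}^{\,n-1}i\in\mathscr{K}_n$. So every surviving summand sees at least one recorded transition out of $i$.

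Next I would expose an \iid structure via excursions. Fix $i$; let $\tau_1<\tau_2<\cdots$ be the successive times $t$ with $X_t=i$, and set $Y_r:=X_{\tau_r+1}$. By the strong Markov property $Y_1,Y_2,\dots$ are \iid $\sim M(i,\cdot)$. On $\{X_n=i,\,N_i=m\}$ one has $\tau_{m+1}=n$, the recorded counts $(N_{ij})_j$ are exactly the type of $Y_{1:m}:=(Y_1,\dots,Y_m)$, and $X^n\in\overline{\mathscr{K}_n}$ precisely when $Y_{1:m}\neq(i,\dots,i)$ (the chain actually left $i$ at least once). Thus the $i$-th summand equals $\sum_{m\ge1}\Exp\big[D_{\text{KL}}(M(i,\cdot),\hat{M}^{+\frac12}(\mathrm{type}(Y_{1:m})))\,\indic_{\{\tau_{m+1}=n\}}\,\indic_{\{Y_{1:m}\neq(i,\dots,i)\}}\big]$. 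I would then use two facts about the add-$\tfrac12$ estimator on $m$ \iid draws from a $p\in\Delta_k$: a Braess--Sauer-type risk bound $\Exp_{Y_1,\dots,Y_m\sim p}[D_{\text{KL}}(p,\hat{p}^{+\frac12})]\le C_k/m$ for every $m\ge1$ (with leading term $\tfrac{k-1}{2m}$), and the crude deterministic bound $D_{\text{KL}}(p,\hat{p}^{+\frac12})\le\log(2m+k)$, which holds because each coordinate of $\hat{p}^{+\frac12}$ is at least $\tfrac1{2m+k}$.

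The heart of the matter --- and the step I expect to be hardest --- is that $N_i$ (equivalently the event $\{\tau_{m+1}=n\}$ fixing $N_i=m$) is coupled to $Y_1,Y_2,\dots$ through the excursion lengths, so conditionally on the events we are forced to keep, $Y_{1:N_i}$ is no longer an \iid sample; and, unlike the companion estimate over $\mathscr{K}_n$, we cannot simply discard those events, since that reintroduces the heavy $\log\log n$ tail. I would disentangle this by further conditioning on $r^\ast:=\max\{r\le N_i:Y_r\neq i\}$ (well defined on $\overline{\mathscr{K}_n}$): writing $\ell$ for the length of the final run of $i$, so $N_i=r^\ast+\ell-1$, on $\{r^\ast=r\}$ the chain is at $i$ at time $\tau_r$, departs $i$, performs a genuine excursion inside $[k]\setminus\{i\}$, returns to $i$ at $\tau_{r+1}=n-\ell+1$, and then self-loops $\ell-1$ times. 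Here Lemma~\ref{lemma6} does the work: applied to the first hitting time of $i$ it gives $\Pr(\tau_1(i)=t)\le k/t$, and applied to the excursion it bounds the probability that the chain first returns to $i$ at the one prescribed time by $(1-M_{ii})\,k/(n-\ell-\tau_r)$. Feeding these, together with the identity $\sum_r\Pr(\tau_r(i)=t)=P^t(i)$ and the ``time-budget'' identity $\sum_{i\in[k]}\sum_{t=1}^{n-1}P^t(i)=n-1$, into the two $D_{\text{KL}}$ bounds above --- the sharp $C_k/m$ one when the observed count $m$ is large, the $\log(2m+k)$ one only in the small-$m$ range where the return-probability factors are themselves small --- and summing the self-loop factor $M_{ii}^{\ell-1}$ geometrically, collapses the whole double sum to $\mathcal{O}_k(1/n)$, uniformly over $P\in\mathds{M}^k$. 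It is exactly the $\overline{\mathscr{K}_n}$ restriction that forces the genuine excursion and hence the $1/(\text{elapsed time})$ gain; without it the analogous quantity is only $\mathcal{O}_k(\log n/n)$, as a chain that is absorbed in a freshly visited state near time $n$ shows.

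Finally, maximizing over $P\in\mathds{M}^k$ gives $\max_{P\in\mathds{M}^k}\rho_n^{\text{KL}}(P,\hat{P}^{+\frac12};\overline{\mathscr{K}_n})=\mathcal{O}_k(1/n)$, and since $\rho_n^{\text{KL}}(\mathds{M}^k;\overline{\mathscr{K}_n})=\min_{\hat{P}}\max_{P\in\mathds{M}^k}\rho_n^{\text{KL}}(P,\hat{P};\overline{\mathscr{K}_n})\le\max_{P\in\mathds{M}^k}\rho_n^{\text{KL}}(P,\hat{P}^{+\frac12};\overline{\mathscr{K}_n})$, the second display follows. The conceptual obstacle to keep in mind throughout is that conditioning on $\{X_n=i\}$ distorts the law of the recorded transitions, and the argument only closes because a bona-fide excursion has just occurred --- which is what $\overline{\mathscr{K}_n}$ together with the hitting-time lemma supplies.
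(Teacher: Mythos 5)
The paper does not actually prove Lemma~\ref{lemma7}: it gives a one--paragraph sketch, saying the argument is ``a combination of the upper bounds' proofs in \cite{Moein16} and in Section~\ref{sec4},'' that one should ``partition $\overline{\mathscr{K}_n}$ into different subsets according to how close the counts are to their expected values, the number of times the last appearing state transitioning to itself, and the number of times the last appearing state transitioning to other states,'' and then explicitly ``omit[s] the proof for the sake of brevity.'' Your proposal takes a genuinely different organizing principle: rather than a concentration--based partition of $\overline{\mathscr{K}_n}$, you decompose by the last state $i=X_n$, pass to the regeneration / excursion structure ($Y_r=X_{\tau_r+1}$), further condition on $r^\ast$ (the last non--self--loop), and trade Braess--Sauer's $C_k/m$ against a crude $\log(2m+k)$ bound with Lemma~\ref{lemma6} controlling the first hitting time and the last genuine excursion. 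What your route buys is that it is self--contained and makes the role of $\overline{\mathscr{K}_n}$ transparent (it supplies a completed excursion, hence a hitting--time factor of order $k/(\text{elapsed time})$); what the paper's route buys, at least in spirit, is that it recycles the already--proved concentration machinery of Section~\ref{sec4} and sidesteps the conditional--law difficulty below. It is also worth noting the paper's quoted partition variables --- self--loops of the last state, and transitions of the last state to others --- are essentially your $\ell-1$ and $r^\ast$, so the two sketches do overlap in the variables they track even though they differ in what they do with them.

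That said, as a proof your proposal has two real gaps. First, the invocation of the iid Braess--Sauer bound $\Exp[D_{\text{KL}}(p,\hat p^{+1/2})]\le C_k/m$ is not justified after conditioning. You flag that $Y_{1:N_i}$ is ``no longer an iid sample'' and offer the $r^\ast$ conditioning as a fix, but conditioning on $\{X_n=i,\ N_i=m,\ r^\ast=r\}$ forces $Y_{r+1}=\cdots=Y_m=i$ \emph{and} ties $Y_{1:r}$ to the timing constraint $\tau_{r+1}=n-(m-r)$: given the excursion lengths, the non--self--loop $Y_s$'s are distributed as the first step of a length--$T_s$ excursion rather than as fresh draws from $M(i,\cdot)$, and the number of self--loops among $Y_{1:r-1}$ is determined by the $\tau$'s, not random. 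So the conditional law of the type of $Y_{1:m}$ is not that of an iid $m$--sample, and $C_k/m$ cannot be applied to it as stated; you need either a change--of--measure argument comparing the conditional type law to the iid one, or the paper's route of first showing the counts concentrate. Second, the ``collapses to $\mathcal{O}_k(1/n)$'' step is asserted, not computed, and the most naive reading is off by a $\log n$ factor: stacking Lemma~\ref{lemma6} twice as $\Pr(\tau_1=s)\le k/s$ and $\Pr(\text{return at } n-s-1)\le k/(n-s-1)$ and summing over $s$ gives $\Theta(k^2\log n/n)$, not $\mathcal{O}_k(1/n)$. The correct bookkeeping is to observe that at least one of $s$ and $n-s-1$ exceeds $n/2$, bound only that factor by $2k/n$, and sum the other factor to at most $1$; a similar case split (on which of $\tau_1$, the excursion length, and $\ell$ is of order $n$) has to be repeated uniformly over $m$ and $r^\ast$, with the geometric factor $M_{ii}^{m-r^\ast}$ absorbing the $\ell$--large case. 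These steps are where the work lies and they are not in the proposal.
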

\begin{proof}
The proof of this lemma is essentially a combination of the upper bounds' proofs in~\cite{Moein16} and in Section~\ref{sec4}. Instead of using the fact that $M_{ij}$ are bounded away from $0$ (see Section~\ref{sec4}), we partition $\overline{\mathscr{K}_n}$ into different subsets according to how close the counts are to their expected values, the number of times that the last appearing state transitioning to itself, and the number of times that the last appearing state transitioning to other states.  Then, we bound the estimator's expected loss over each set of the partition by ${\mathcal{O}_k\left({1}/{n}\right)}$. We omit the proof for the sake of brevity.
\end{proof}

Recall the following lower bound, 
\[
\rho_n^{\text{KL}}({\mathds{M}}^k)=\Omega_k\Paren{\frac{\log\log n}{n}}.
\]
This together with Lemma~\ref{lemma3} and the triangle inequality above shows that an upper bound on $\rho_n^{\text{KL}}({\mathds{M}}^k; \mathscr{K}_n)$ also suffices to bound the leading term of $\rho_n^{\text{KL}}({\mathds{M}}^k)$. The following lemma provides such an upper bound. Recall that for any $i\in[k]$, $K_\ell(i)$ is defined as $\{x^n\in[k]^n: x^n={\bar{i}}^{n-\ell} i^\ell\}$.
\begin{Lemma}\label{lemma8}
For any $x^n\in \mathscr{K}_n$, there exists a unique pair $(\ell, i)$ such that $x^n\in K_\ell(i)$. Consider the following estimator 
\[ 
\hat{P}_{x^n}(i)
:=
\begin{cases}  
      1-\frac{1}{\ell\log n}& \ell\leq\frac{n}{2}\\ 
      1-\frac{1}{\ell}&  \ell>\frac{n}{2}
   \end{cases}
\]
and 
\[ 
\hat{P}_{x^n}(j)
:=\frac{1-\hat{P}_{x^n}(i)}{k-1},\ \forall j\in[k]\setminus\{i\},
\]
then we have 
\[
\rho_n^{\text{KL}}({\mathds{M}}^k;  \mathscr{K}_n)\leq \max_{P\in{\mathds{M}}^k}\rho_n^{\text{KL}}(P,\hat{P}; \mathscr{K}_n)\lesssim\frac{2k^2 \log\log n}{n}.
\]
\end{Lemma}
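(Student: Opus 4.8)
The plan is to estimate $\rho_n^{\text{KL}}(P,\hat P;\mathscr{K}_n)$ for an arbitrary $P=(M)\in\mathds{M}^k$ block by block. For $x^n\in K_\ell(i)$ the last symbol is $i$, so $P_{x^n}=M(i,\cdot)$, while $\hat P_{x^n}$ equals a fixed $\hat P_\ell$ on all of $K_\ell(i)$; moreover $x^n$ occurs iff the chain first reaches $i$ at time $n-\ell+1$ and then stays there, so $\Pr_{X^n\sim P}(X^n\in K_\ell(i))=\Pr(\tau(i)=n-\ell+1)\,M_{ii}^{\,\ell-1}$. Hence
\[
\rho_n^{\text{KL}}(P,\hat P;\mathscr{K}_n)=\sum_{i\in[k]}\ \sum_{\ell=1}^{n-1}\Pr(\tau(i)=n-\ell+1)\,M_{ii}^{\,\ell-1}\,D_{\text{KL}}\bigl(M(i,\cdot)\,\|\,\hat P_\ell\bigr).
\]
Writing $s_i:=1-M_{ii}$ and $\gamma_\ell:=1-\hat P_\ell(i)$ (so $\gamma_\ell=1/(\ell\log n)$ for $\ell\le n/2$ and $\gamma_\ell=1/\ell$ for $\ell>n/2$), the first step is the pointwise bound: because $\hat P_\ell$ is uniform off the diagonal and $\sum_{j\ne i}M_{ij}\log M_{ij}\le s_i\log s_i$ (convexity of $t\log t$ on the simplex),
\[
D_{\text{KL}}\bigl(M(i,\cdot)\,\|\,\hat P_\ell\bigr)\ \le\ d(s_i\,\|\,\gamma_\ell)+s_i\log(k-1)\ \le\ 2\gamma_\ell+s_i\bigl(\log\tfrac{s_i}{\gamma_\ell}\bigr)_++s_i\log(k-1),
\]
where $d(\cdot\,\|\,\cdot)$ is the binary KL divergence, $(\cdot)_+$ the positive part, and the second inequality is elementary.

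Next I would split the inner sum at $\ell=n/2$. In the range $\ell\le n/2$ we have $n-\ell+1>k$, so Lemma~\ref{lemma6} gives $\Pr(\tau(i)=n-\ell+1)\le k/(n-\ell+1)\le 2k/n$; pulling this factor out and noting that $\{M_{ii}^{\ell-1}s_i\}_{\ell\ge1}$ is the law of a geometric variable $L$ with $\EE[L]=1/s_i$, the block contributes, per state $i$, at most $\tfrac{2k}{n}$ times
\[
\tfrac{2}{\log n}\textstyle\sum_{\ell\le n/2}\tfrac1\ell\ +\ \EE\bigl[\bigl(\log(s_iL\log n)\bigr)_+\bigr]\ +\ \log(k-1)\ \le\ \mathcal{O}(1)+\log\bigl(1+\log n\bigr)+\log(k-1),
\]
the middle term bounded by Jensen applied to $x\mapsto\log(1+x)$, using $\EE[s_iL\log n]=\log n$. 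Since $\log(1+\log n)=\log\log n\,(1+o(1))$ and $\log(k-1)=\mathcal{O}_k(1)$, summing over the $k$ states gives $\tfrac{2k^2}{n}\log\log n\,(1+o(1))$ from this range. In the range $\ell>n/2$ I would instead use $\sum_{\ell>n/2}\Pr(\tau(i)=n-\ell+1)\le 1$ together with the termwise bounds $M_{ii}^{\ell-1}\gamma_\ell\le 1/\ell\le 2/n$, $M_{ii}^{\ell-1}s_i\le 1/\ell\le 2/n$, and $M_{ii}^{\ell-1}s_i(\log(s_i\ell))_+\le C/\ell\le 2C/n$ (valid since $x\mapsto xe^{-x/2}(\log x)_+$ is bounded), so this range contributes only $\mathcal{O}_k(1/n)$ per state, hence $o(k^2\log\log n/n)$ in total. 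Combining, $\rho_n^{\text{KL}}(P,\hat P;\mathscr{K}_n)\le\tfrac{2k^2\log\log n}{n}(1+o(1))$ uniformly in $P$, proving the lemma.

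The crux is the range $\ell\le n/2$ with $s_i$ comparable to $\gamma_\ell$: a crude bound $\log(s_i/\gamma_\ell)\le\log n$ there would leave a spurious $\tfrac{k^2\log n}{n}$ term and destroy the stated constant. The $1/\log n$ safety factor in $\hat P_\ell$ makes $s_i/\gamma_\ell=s_i\ell\log n$, and since the run length $L$ is geometric with mean $1/s_i$, the product $s_iL$ concentrates around $1$, collapsing $\EE[(\log(s_iL\log n))_+]$ to $\log\log n$ after Jensen. The remaining effort is bookkeeping: checking that the $2\gamma_\ell$ pieces, the $s_i\log(k-1)$ pieces, and the whole $\ell>n/2$ regime are all $o(k^2\log\log n/n)$, so that the leading constant is exactly $2$.
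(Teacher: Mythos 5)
Your proof is correct and follows essentially the same route as the paper's: the same block decomposition over $K_\ell(i)$, the same split of the inner sum at $\ell=n/2$, the same use of Lemma~\ref{lemma6} to bound the hitting probabilities by $2k/n$ on the short-run range, and the same exploitation of the geometric law of the run length. The one place you diverge is in extracting the $\log\log n$ factor: you first reduce $D_{\text{KL}}(M(i,\cdot)\,\|\,\hat P_\ell)$ to a binary divergence plus $s_i\log(k-1)$ and then apply $(\log x)_+\le\log(1+x)$ together with Jensen to get $\EE[(\log(s_iL\log n))_+]\le\log(1+\log n)$, whereas the paper splits $\log(\ell(k-1)(1-p_i(i))\log n)$ additively and uses $\log x\le x$ plus the geometric identity $\sum_\ell \ell\,p^{\ell-1}(1-p)^2=1$; both give a leading term $\tfrac{2k^2\log\log n}{n}$, though your subleading error $\log(k-1)$ is a touch tighter than the paper's $k-1$.
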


\begin{proof}
Let $i\in [k]$ be an arbitrary state. For simplicity of illustration, we use the following notation: for any $x^n={\bar{i}}^{n-\ell} i^\ell$, denote $\hat{p}_\ell:=\hat{P}_{x^n}$; for any $(M)\in{\mathds{M}}^k$, denote $p_i:=M(i,\cdot)$; for any $\ell\leq{n}$, denote $h_{i,\ell}:=\Pr(\tau(i)=\ell)$. By Lemma~\ref{lemma6}, the hitting probability $h_{i,\ell}$ is upper bounded by $k/\ell$ for all $\ell>k$. We can write
\[
\rho_n^{\text{KL}}(P,\hat{P}; \mathscr{K}_n)=\sum_{i\in[k]}\sum_{\ell=1}^{n-1} h_{i,n-\ell}(p_i(i))^{\ell-1} D_{\text{KL}}(p_i, \hat{p}_\ell).
\]
Now, we break the right hand side into two sums according to whether $\ell$ is greater than $n/2$ or not.
For $\ell> n/2$, we have
\begin{align*}
&\sum_{i\in[k]}\sum_{\ell=\frac{n}{2}+1}^{n-1} h_{i,n-\ell}(p_i(i))^{\ell-1} D_{\text{KL}}(p_i, \hat{p}_\ell)\\
&\leq \sum_{i\in[k]}\sum_{\ell=\frac{n}{2}+1}^{n-1} h_{i,n-\ell}(p_i(i))^{\ell-1} \Paren{p_i(i)\log \Paren{\frac{p_i(i)}{1-\frac{1}{\ell}}}+\sum_{j\not=i}p_i(j)\log\Paren{\frac{\sum_{j\not=i}p_i(j)}{\frac{1}{\ell(k-1)}}}}\\
&\leq \sum_{i\in[k]}\sum_{\ell=\frac{n}{2}+1}^{n-1} h_{i,n-\ell}(p_i(i))^{\ell-1} \Paren{\log \Paren{\frac{1}{1-\frac{1}{\ell}}}+(1-p_i(i))\log\Paren{{\ell(k-1)(1-p_i(i))}}}\\
&\leq \sum_{i\in[k]}\sum_{\ell=\frac{n}{2}+1}^{n-1} h_{i,n-\ell}(p_i(i))^{\ell-1} \Paren{\frac{\frac{1}{\ell}}{1-\frac{1}{\ell}}+(1-p_i(i))^2{\ell(k-1)}}\\
&\leq \sum_{i\in[k]}\sum_{\ell=\frac{n}{2}+1}^{n-1} h_{i,n-\ell}\Paren{\frac{2}{n}+(p_i(i))^{\ell-1}(1-p_i(i))^2{\ell(k-1)}}\\
&\leq \sum_{i\in[k]}\sum_{\ell=\frac{n}{2}+1}^{n-1} h_{i,n-\ell}\Paren{\frac{2}{n}+\frac{1}{(\ell+1)^2}{\ell(k-1)}}\\
&\leq \sum_{i\in[k]}\sum_{\ell=\frac{n}{2}+1}^{n-1} h_{i,n-\ell}\Paren{\frac{2k}{n}}\\
&= \sum_{i\in[k]}\frac{2k}{n}\Pr(\tau(i)\in[1, n/2-1])
\leq \frac{2k^2}{n}.
\end{align*}
Similarly, for $\ell\leq n/2$, we have
\begin{align*}
&\sum_{i\in[k]}\sum_{\ell=1}^{\frac{n}{2}} h_{i,n-\ell}(p_i(i))^{\ell-1} D_{\text{KL}}(p_i, \hat{p}_\ell)\\
&\leq \sum_{i\in[k]}\sum_{\ell=1}^{\frac{n}{2}} h_{i,n-\ell} (p_i(i))^{\ell-1} \Paren{\log \Paren{\frac{1}{1-\frac{1}{\ell\log n}}}+(1-p_i(i))\log\Paren{{\ell(k-1)(1-p_i(i))\log n}}}\\
&\leq \sum_{i\in[k]}\sum_{\ell=1}^{\frac{n}{2}} \frac{2k}{n}(p_i(i))^{\ell-1} \Paren{\frac{2}{\ell\log n}+(1-p_i(i))^2{\ell(k-1)}+(1-p_i(i))\log\log n}\\
&\leq \sum_{i\in[k]} \frac{2k}{n}\Paren{\sum_{\ell=1}^{\frac{n}{2}} \frac{2}{\ell\log n}+\sum_{\ell=1}^{\frac{n}{2}} \ell(p_i(i))^{\ell-1}(1-p_i(i))^2{(k-1)}+\sum_{\ell=1}^{\frac{n}{2}}(p_i(i))^{\ell-1}(1-p_i(i))\log\log n}\\
&\leq \sum_{i\in[k]} \frac{2k}{n}\Paren{2+(k-1)+\log\log n}\\
&\asymp \frac{2k^2 \log\log n}{n}.
\end{align*}
This completes the proof.
\end{proof}

\section{Minimax estimation: lower bound}
The proof of the lower bound makes use of the following concentration inequality, which upper bounds the probability that a binomial random variable exceeds its mean.
\begin{Lemma}~\cite{com06}\label{lemma9}
Let $Y$ be a binomial random variable with parameters $m\in\mathds{N}$ and $p\in[0,1]$, then for any $\epsilon\in(0,1)$,
\[
\textrm{Pr}{(Y\geq(1+\epsilon) mp)}\leq \exp\left(-\epsilon^2mp/3\right).
\]
\end{Lemma}

\subsection{Prior construction}
Again we use the following standard argument to lower bound the minimax risk,
\[
\varepsilon_n^L(\mathscr{M})=\min_{\hat{M}}\max_{(M)\in \mathscr{M}} \varepsilon_n^L(M,\hat{M})\geq{ \min_{\hat{M}}\EE_{(M)\sim U(\mathscr{M}_S)} [\varepsilon_n^L(M,\hat{M})]},
\]
where $\mathscr{M}_S\subset \mathscr{M}$ and $U(\mathscr{M}_S)$ is the uniform distribution over $\mathscr{M}_S$. Setting $\mathscr{M}={\mathds{M}}^k_{\delta, \pi^*}$, we outline the construction of $\mathscr{M}_S$ as follows.

We adopt the notation in~\cite{KamathOPS15} and denote the $L_{\infty}$ ball of radius $r$ around $u_{k-1}$, the uniform distribution over $[k-1]$, by
\[
B_{k-1}(r):=\{p\in \Delta_{k-1}: {L_\infty}(p, u_{k-1})<r \},
\]
where ${L_\infty}(\cdot,\cdot)$ is the $L_\infty$ distance between two distributions.
For simplicity, define
\[
p':=(p_1,\ p_2,\ \ldots,\ p_{k-1}),
\]
\[
p^*:= \Paren{\nor,\ \nor,\  \dots\ \nor,\ \pi^*},
\]
and
\[
M_n(p') 
:=
\begin{bmatrix}
    \nor & \nor  &  \dots  & \nor  & \pi^*\\
    \nor & \nor  &  \dots  & \nor & \pi^*\\
    \vdots & \vdots  & \ddots & \vdots &\vdots\\
    \nor & \nor   & \dots  & \nor & \pi^*\\
   \bar{\pi}^*p_1 &  \bar{\pi}^*p_2 & \dots  & \bar{\pi}^*p_{k-1} & \pi^*
\end{bmatrix},
\]
where $\bar{\pi}^*=1-\pi^*$ and $\sum_{i=1}^{k-1} p_i=1$.

Given $n$ and $\epsilon\in(0,1)$, let $n':=(n(1+\epsilon)\pi^*)^{1/5}$. We set 
\[
\mathscr{M}_S =\{(M)\in{\mathds{M}}^k_{\delta, \pi^*}: \mu= p^* \text{ and }M=M_n({p'}), \text{ where } p'\in B_{k-1}(1/n')\}.
\]
Noting that the uniform distribution over $\mathscr{M}_S$, $U(\mathscr{M}_S)$, is induced by $U(B_{k-1}(1/n'))$, the uniform distribution over $B_{k-1}(1/n')$ and thus is well-defined.

An important property of the above construction is that for a sample sequence $X^n\sim (M)\in \mathscr{M}_S$, $N_k$,  the number of times that state $k$ appears in $X^n$, is a binomial random variable with parameters $n$ and $\pi^*$. Therefore, Lemma~\ref{lemma9} implies that $N_k$ is highly concentrated around its mean $n\pi^*$. 

\subsection{\texorpdfstring{$L_2$}{L2}-divergence lower bound}
Let us first consider the $L_2$-distance. Similar to Lemma~\ref{lemma1}, $\hat{M}^*$, the estimator that minimizes $\EE_{(M)\sim U(\mathscr{M}_S)} [\varepsilon_n^{L_2}(M,\hat{M})]$, can be computed exactly. In particular, we have the following lemma.

\begin{Lemma}\label{lemma10}
There exists an estimator $\hat{M}^*$ with
\[
\hat{M}^*_{x^n}(i,\cdot) = p^*, \forall i\in[k-1],
\]
and
\[
\hat{M}^*_{x^n}(k,k) = \pi^*,
\]
such that $\hat{M}^*$ minimizes $\EE_{(M)\sim U(\mathscr{M}_S)} [\varepsilon_n^{L_2}(M,\hat{M})]$.
\end{Lemma}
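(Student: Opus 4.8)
\noindent\emph{Proof plan.}
The plan is to exploit the fact that under the prior $U(\mathscr{M}_S)$ the only randomness in $M=M_n(p')$ is the choice of $p'\in B_{k-1}(1/n')$, which affects only the first $k-1$ entries of the last row: every $(M)\in\mathscr{M}_S$ has its first $k-1$ rows equal to the fixed distribution $p^*$, and has $M(k,k)=\pi^*$ regardless of $p'$. Hence one expects the Bayes estimator to reproduce these known quantities exactly, and the substance of the lemma is just that (i) matching the known rows is genuinely optimal and (ii) the Bayes estimator of the last row keeps its $k$-th coordinate equal to $\pi^*$.

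First I would reduce the problem to estimating the last row. Given any estimator $\hat M$, let $\hat M'$ be obtained by replacing, for every $x^n\in[k]^n$, the first $k-1$ rows of $\hat M_{x^n}$ by the constant distribution $p^*$ and leaving the $k$-th row unchanged; this is again a valid estimator. For every $(M)\in\mathscr{M}_S$ and every $i\in[k-1]$ we have $\EE_{X^n\sim(M)}[L_2(M(i,\cdot),\hat M'_{X^n}(i,\cdot))]=L_2(p^*,p^*)=0$, while the $i=k$ term is unchanged; since $\varepsilon_n^{L_2}(M,\cdot)$ is the maximum over $i\in[k]$ of these terms, we get $\varepsilon_n^{L_2}(M,\hat M')\le\varepsilon_n^{L_2}(M,\hat M)$ for all $(M)\in\mathscr{M}_S$, and therefore $\EE_{(M)\sim U(\mathscr{M}_S)}[\varepsilon_n^{L_2}(M,\hat M')]\le\EE_{(M)\sim U(\mathscr{M}_S)}[\varepsilon_n^{L_2}(M,\hat M)]$. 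So it suffices to optimize over estimators whose first $k-1$ rows are identically $p^*$; for any such estimator the first $k-1$ terms in the maximum vanish and the $k$-th is nonnegative, whence $\varepsilon_n^{L_2}(M,\hat M)=\EE_{X^n\sim(M)}[L_2(M(k,\cdot),\hat M_{X^n}(k,\cdot))]$ and the Bayes risk equals $\EE_{(M)\sim U(\mathscr{M}_S)}\EE_{X^n\sim(M)}[L_2(M(k,\cdot),\hat M_{X^n}(k,\cdot))]$.

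Next I would invoke the standard posterior-mean characterization of the Bayes estimator under squared loss. Since $[k]^n$ is finite there are no measurability concerns, and the displayed Bayes risk is minimized by choosing, for each $x^n$, the distribution $\hat M^*_{x^n}(k,\cdot)\in\Delta_k$ that minimizes $\EE[L_2(M(k,\cdot),q)\mid X^n=x^n]$ over $q$. Expanding the square, the minimizer over all of $\mathds{R}^k$ is the posterior mean $q=\EE[M(k,\cdot)\mid X^n=x^n]$, which, being a convex combination of points of $\Delta_k$, already lies in $\Delta_k$ and is thus the minimizer over $\Delta_k$ as well. Because $M(k,k)=\pi^*$ identically under the prior, its posterior mean is $\pi^*$, i.e.\ $\hat M^*_{x^n}(k,k)=\pi^*$. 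Combined with $\hat M^*_{x^n}(i,\cdot)=p^*$ for $i\in[k-1]$ from the reduction, this yields an estimator of exactly the asserted form attaining the minimum of $\EE_{(M)\sim U(\mathscr{M}_S)}[\varepsilon_n^{L_2}(M,\hat M)]$.

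I do not expect a serious obstacle. The only points needing care are checking that zeroing the first $k-1$ terms while leaving the $k$-th term intact does not increase the maximum over states, so the reduction is lossless, and that the posterior mean of $M(k,\cdot)$ automatically satisfies the simplex constraint, so the unconstrained and constrained Bayes estimators coincide. The explicit values of the first $k-1$ coordinates of $\hat M^*_{x^n}(k,\cdot)$ --- namely the posterior means $\EE[\bar{\pi}^* p_j\mid X^n=x^n]$ --- are not needed here and are left for the subsequent risk computation.
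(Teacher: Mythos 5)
Your proof is correct, and indeed the paper states Lemma~\ref{lemma10} without a proof (the text only gestures at an analogy with Lemma~\ref{lemma1}), so your argument fills in exactly what was left implicit. The two steps you identify are the right ones: the pointwise reduction $\varepsilon_n^{L_2}(M,\hat M')\le\varepsilon_n^{L_2}(M,\hat M)$, which is the non-obvious part because the maximum over states sits \emph{inside} the expectation over the prior, and the posterior-mean characterization for the remaining last-row estimation, which both forces $\hat M^*_{x^n}(k,k)=\pi^*$ (since $M(k,k)\equiv\pi^*$ under the prior) and guarantees feasibility since a convex combination of rows in $\Delta_k$ stays in $\Delta_k$. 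This matches the spirit of Lemma~\ref{lemma1} in the paper and is precisely what is later used in Lemma~\ref{lemma11}, which evaluates only the state-$k$ term of the risk and takes $\hat M^*(k,k)=\pi^*$ for granted.
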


Based on the above lemma, we can relate the minimax estimation risk of Markov chains to the minimax prediction risk of \iid\ processes. For simplicity, denote $\mathscr{B}_{\iid}:=\{(p)\in \mathds{IID}^{k-1}: p\in B_{k-1}(1/n')\}$. The following lemma holds. 
\begin{Lemma}\label{lemma11}
For any $x^n\in[k]^n$, let $\mathds{I}(x^n)$ be the collection of indexes $j\in[n]$ such that $x_j=k$. Then, 
\begin{align*}
&{\EE}_{(M)\sim U(\mathscr{M}_S)} [\EE_{X^n\sim (M)}[L_2(M(k,\cdot), \hat{M}^*_{X^n}(k,\cdot))\indic_{\mathds{I}(X^n)=\mathds{I}_0}]]\\
&=C{(\mathds{I}_0, \pi^*, p^*, n)}\min_{\hat{P}}\EE_{P\sim U(\mathscr{B}_{\iid})} [\rho_{|\mathds{I}_0|}^{L_2}(P,\hat{P})],
\end{align*}
where $\mathds{I}_0$ is an arbitrary non-empty subset of $[n]$ and $C{(\mathds{I}_0, \pi^*, p^*, n)}$ is a constant whose value only depends on $\mathds{I}_0, \pi^*, p^*, \text{ and } n$.
\end{Lemma}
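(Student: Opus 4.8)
The plan is to turn the left-hand side into a Bayes risk for i.i.d.\ distribution estimation over $[k-1]$ by conditioning on $\mathds{I}_0$ and exploiting that, in our construction, the only part of the sample carrying information about $p'$ is the list of transitions out of state $k$. First I would invoke Lemma~\ref{lemma10}: we may take the optimal $\hat{M}^*$ with $\hat{M}^*_{x^n}(i,\cdot)=p^*$ for $i<k$ and $\hat{M}^*_{x^n}(k,k)=\pi^*$. Since $M(k,\cdot)=(\bar{\pi}^*p_1,\dots,\bar{\pi}^*p_{k-1},\pi^*)$, the loss collapses to the first $k-1$ coordinates:
\[
L_2\big(M(k,\cdot),\hat{M}^*_{x^n}(k,\cdot)\big)=\sum_{j<k}\big(\bar{\pi}^*p_j-\hat{M}^*_{x^n}(k,j)\big)^2=(\bar{\pi}^*)^2\,L_2\big(p',\hat{P}_{x^n}\big),
\]
where $\hat{P}_{x^n}(j):=\hat{M}^*_{x^n}(k,j)/\bar{\pi}^*$; by the $L_2$ analogue of Lemma~\ref{lemma1} (the Bayes-optimal $L_2$ estimator is the posterior mean), $\hat{M}^*_{x^n}(k,j)=\EE[\bar{\pi}^*p_j\mid X^n=x^n]$, so $\hat{P}_{x^n}$ is a bona fide distribution on $[k-1]$.

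Next I would fix a nonempty $\mathds{I}_0\subseteq[n]$ and expand the sequence probability $P(x^n)=\mu(x_1)\prod_{j<n}M(x_j,x_{j+1})$ under the model $(M_n(p'))$ for sequences with $\mathds{I}(x^n)=\mathds{I}_0$. The key point is that every factor here is a constant determined by $(\mathds{I}_0,\pi^*,p^*,n)$ — transitions into $k$ and $k\!\to\!k$ transitions each contribute $\pi^*$, transitions between two non-$k$ states contribute $\bar{\pi}^*/(k-1)$, and $\mu(x_1)\in\{\pi^*,\bar{\pi}^*/(k-1)\}$ — with the sole exception of the factors $M(k,x_{j+1})=\bar{\pi}^*p_{x_{j+1}}$ produced by the transitions out of $k$ into $[k-1]$. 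Hence $P(x^n)=D(\mathds{I}_0)\prod_{j<k}p_j^{c_j(x^n)}$, where $c_j(x^n)$ is the number of $k\!\to\!j$ transitions and $D(\mathds{I}_0)$ depends only on $(\mathds{I}_0,\pi^*,p^*,n)$; write $m:=\sum_{j<k}c_j(x^n)$ for the number of out-transitions from $k$ that $x^n$ records, which is fixed once $\mathds{I}_0$ is fixed. Both $P(x^n)$ and $\hat{M}^*_{x^n}(k,\cdot)$ then depend on $x^n$ only through the count vector $c(x^n)$, and for each composition $c$ of $m$ the number of $x^n$ with $\mathds{I}(x^n)=\mathds{I}_0$ and that count vector is $\binom{m}{c_1,\dots,c_{k-1}}(k-1)^{q}$, where $q=n-|\mathds{I}_0|-m$ counts the ``free'' non-$k$ positions (those not immediately following a $k$), whose symbols do not enter the likelihood.

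Assembling these observations, pulling the $(k-1)^{q}$ and the $p'$-free constants out of the sum, and recognizing
\[
\sum_{c:\,\sum_j c_j=m}\binom{m}{c_1,\dots,c_{k-1}}\prod_{j<k}p_j^{c_j}\,L_2\big(p',\hat{P}^{(m)}_{c}\big)=\rho_m^{L_2}\big((p'),\hat{P}^{(m)}\big)
\]
as the $L_2$ prediction risk of a $[k-1]$-valued i.i.d.\ process with $m$ samples, where $\hat{P}^{(m)}$ maps a sample to the posterior mean of its count vector and is therefore the Bayes-optimal $L_2$ estimator under the prior $U(B_{k-1}(1/n'))$, I obtain
\[
\EE_{(M)\sim U(\mathscr{M}_S)}\EE_{X^n\sim(M)}\big[L_2(M(k,\cdot),\hat{M}^*_{X^n}(k,\cdot))\indic_{\mathds{I}(X^n)=\mathds{I}_0}\big]=(\bar{\pi}^*)^2D(\mathds{I}_0)(k-1)^{q}\min_{\hat{P}}\EE_{P\sim U(\mathscr{B}_{\iid})}\big[\rho_m^{L_2}(P,\hat{P})\big],
\]
which is the asserted identity with $C(\mathds{I}_0,\pi^*,p^*,n)=(\bar{\pi}^*)^2D(\mathds{I}_0)(k-1)^{q}$.

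The main obstacle is purely the bookkeeping needed to make this an exact identity: isolating precisely which likelihood factors are free of $p'$, computing $D(\mathds{I}_0)$ and the multiplicities $\binom{m}{c}(k-1)^{q}$ correctly (handling the boundary cases where $X^n$ ends at state $k$ or contains runs of consecutive $k$'s), counting the recorded out-transitions $m$ and relating it to $|\mathds{I}_0|$, and verifying that the i.i.d.\ estimator extracted from $\hat{M}^*$ is genuinely the Bayes minimizer. There is no deep idea, but each of these must be tracked exactly for an equality rather than a bound. It is worth noting that this goes through cleanly only because the loss scales exactly by $(\bar{\pi}^*)^2$ and the $L_2$ Bayes estimator is the posterior mean, which decouples across coordinates — features that fail for general $f$-divergences and are precisely why those are treated separately.
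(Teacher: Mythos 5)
Your proof is correct and follows essentially the same route as the paper's: pin down $\hat{M}^*$ via Lemma~\ref{lemma10}, factor out all likelihood terms that are constant given $\mathds{I}_0$, and recognize the surviving sum over the out-transitions from state $k$ as a Bayes-averaged i.i.d.\ $L_2$ prediction risk over $[k-1]$. The only cosmetic difference is bookkeeping: you reduce to the count vector $c(x^n)$ and use the posterior-mean characterization of the $L_2$ Bayes estimator to see that $\hat{M}^*_{x^n}(k,\cdot)$ is a function of $c(x^n)$ alone, whereas the paper partitions $\{x^n:\mathds{I}(x^n)=\mathds{I}_0\}$ by the full non-transition subsequence $y$, sums over the out-transition string $z^s$, and then invokes a decoupling argument (min of a sum of decoupled pieces equals the sum of mins) to conclude $q^*$ is Bayes-optimal for each i.i.d.\ subproblem; the two arguments are interchangeable. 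One small point worth flagging: both your derivation and the paper's own proof actually produce $\rho_m^{L_2}$ with $m=s$ the number of \emph{recorded} out-transitions from $k$ (which is at most $|\mathds{I}_0|$ and can be strictly smaller when $k$'s are consecutive or $x_n=k$), whereas the stated lemma writes $\rho_{|\mathds{I}_0|}^{L_2}$; this discrepancy is benign downstream because Lemma~\ref{lemma12} only uses the $\geq$ direction and the i.i.d.\ minimax risk is non-increasing in the sample size, but your version is the precise one.
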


\begin{proof}
We first consider the inner expectation on the left-hand side of the equality. For any $(M)\in \mathscr{M}_S$, we have
\begin{align*}
&\EE_{X^n\sim (M)}[{L_2}(M(k,\cdot), \hat{M}^*_{X^n}(k,\cdot))\indic_{\mathds{I}(X^n)=\mathds{I}_0}]\\
&=\sum_{x^n: {\mathds{I}(x^n)=\mathds{I}_0}}P(x^n){L_2}(M(k,\cdot), \hat{M}^*_{x^n}(k,\cdot))\\
&=\sum_{x^n: {\mathds{I}(x^n)=\mathds{I}_0}}\mu(x_1)\prod_{t=1}^{n-1} M(x_t, x_{t+1}) {L_2}(M(k,\cdot), \hat{M}^*_{x^n}(k,\cdot)).
\end{align*}
Let us partition $\mathds{I}_0$ into two parts: the collection of indexes $m\in\mathds{I}_0\cap[n-1]$ such that $m\in \mathds{I}_0$ and $m+1\not\in \mathds{I}_0$, say $\{m_1,\ldots,m_s\}$, and the remaining elements in $\mathds{I}_0$. By the construction of $\mathscr{M}_S$, we have
\begin{align*}
&\sum_{x^n: {\mathds{I}(x^n)=\mathds{I}_0}}\mu(x_1)\prod_{t=1}^{n-1} M(x_t, x_{t+1}) {L_2}(M(k,\cdot), \hat{M}^*_{x^n}(k,\cdot))\\
&=(\pi^*)^{|\mathds{I}_0|}\Paren{\frac{\bar{\pi}^*}{k-1}}^{n-s-|\mathds{I}_0|} \sum_{x^n: {\mathds{I}(x^n)=\mathds{I}_0}}\prod_{t=1}^{s} M(k, x_{m_{t}+1}) {L_2}(M(k,\cdot), \hat{M}^*_{x^n}(k,\cdot)).
\end{align*}
For any $x^n$, let $x^n\setminus \mathds{I}_0$ denote the subsequence $x_{j_1},\ldots, x_{j_{n-|\mathds{I}_0|-s}}$ such that $j_1<j_2\ldots<j_{n-|\mathds{I}_0|-s}$, $j_t\not\in \mathds{I}_0 \text{ and } j_t-1\not\in \{m_1,\ldots,m_s\}, \forall t$. We can further partition the last summation according to $x^n\setminus \mathds{I}_0$ as follows.
\begin{align*}
& \sum_{x^n: {\mathds{I}(x^n)=\mathds{I}_0}}\prod_{t=1}^{s} M(k, x_{m_{t}+1}) {L_2}(M(k,\cdot), \hat{M}^*_{x^n}(k,\cdot))\\
&=\sum_{y^{n-|\mathds{I}_0|-s}\in[k-1]^{n-|\mathds{I}_0|-s}}\Paren{\sum_{\substack{x^n: {x_{j}=k, \forall j\in\mathds{I}_0}\\ \text{ and } x^n\setminus \mathds{I}_0=y^{n-|\mathds{I}_0|-s}}}\prod_{t=1}^{s} M(k, x_{m_{t}+1}) {L_2}(M(k,\cdot), \hat{M}^*_{x^n}(k,\cdot))}.
\end{align*}
Fixing $y^{n-|\mathds{I}_0|-s}\in [k-1]^{n-|\mathds{I}_0|-s}$, there is a bijective mapping from $S(\mathds{I}_0,y^{n-|\mathds{I}_0|-s}):=\{x^n: {x_{j}=k, \forall j\in\mathds{I}_0} \text{ and } x^n\setminus \mathds{I}_0=y^{n-|\mathds{I}_0|-s}\}$ to $[k-1]^{s}$, say $g(\cdot)$. Furthermore, we have $\hat{M}^*(k,k)=\pi^*$. Hence, we can denote $q^*_{g(x^n)}:=\frac{\hat{M}^*_{x^n}(k,[k-1])}{\bar{\pi}^*}$ for $x^n\in S(\mathds{I}_0,y^{n-|\mathds{I}_0|-s})$ and treat it as a mapping from $[k-1]^s$ to $\Delta_{k-1}$.
Also, $(M)\in \mathscr{M}_S$ implies that $M(k,[k-1])=p'$ for some $p'\in B_{k-1}(1/n')$. Thus, 
\[
{L_2}(M(k,\cdot), \hat{M}^*_{x^n}(k,\cdot))=(\bar{\pi}^*)^2 L_2(p', q^*_{g(x^n)}),
\]
\[
\prod_{t=1}^{s} M(k, x_{m_{t}+1}) L_2(M(k,\cdot), \hat{M}^*_{x^n}(k,\cdot))= \prod_{t=1}^{s} p'(x_{m_{t}+1}) (\bar{\pi}^*)^2 L_2(p', q^*_{g(x^n)}),
\]
and
\begin{align*}
&\sum_{x^n\in S(\mathds{I}_0,y^{n-|\mathds{I}_0|-s})}\prod_{t=1}^{s} M(k, x_{m_{t}+1}) L_2(M(k,\cdot), \hat{M}^*_{x^n}(k,\cdot))\\
&=\sum_{x^n\in S(\mathds{I}_0,y^{n-|\mathds{I}_0|-s})}\prod_{t=1}^{s} p'(x_{m_{t}+1}) (\bar{\pi}^*)^2 L_2(p', q^*_{g(x^n)})\\
&=\sum_{z^s\in[k-1]^s}\prod_{t=1}^{s} p'(z_t) (\bar{\pi}^*)^2 L_2(p', q^*_{z^s})\\
&=\EE_{Z^s\sim (p')} [(\bar{\pi}^*)^2 L_2(p', q^*_{Z^s})],
\end{align*}
where $(p')$ is an \iid\ process whose underlying distribution is $p'$. 

By definition, $\hat{M}^*$ minimizes $\EE_{(M)\sim U(\mathscr{M}_S)} [\varepsilon_n^{L_2}(M,\hat{M})]$ and for each $x^n\in[k]^n$, its value $\hat{M}^*_{x^n}$ is completely determined by $x^n$. Besides, $\{S(\mathds{I}_0,y^{n-|\mathds{I}_0|-s}): \mathds{I}_0\subset[n] \text{ and } y^{n-|\mathds{I}_0|-s}\in [k-1]^{n-|\mathds{I}_0|-s}\}$ forms a partition of $[k]^n$. Therefore, by the linearity of expectation and the definition of $q^*$, the estimator $q^*$ also minimizes $\EE_{p'\sim U(B_{k-1}(1/n'))}[\EE_{Z^s\sim (p')} [(\bar{\pi}^*)^2 L_2(p', q_{Z^s})]]$, where the minimization is over all the possible mappings $q$ from $[k-1]^s$ to $\Delta_{k-1}$. Equivalently, we have
\[
\EE_{p'\sim U(B_{k-1}(1/n'))}[ \EE_{Z^n\sim (p')} [(\bar{\pi}^*)^2 L_2(p', q^*_{Z^n})]]=\min_{\hat{P}}\EE_{P\sim U(\mathscr{B}_{\iid})} (\bar{\pi}^*)^2 [\rho_{s}^{L_2}(P,\hat{P})].
\]
This immediately yields the lemma.
\end{proof}

For any $(M)\in \mathscr{M}_S$, denote by $N_k((M),n)$ the number of times that state $k$ appears in $X^{n}\sim (M)$, which is a random variable induced by $(M)$ and $n$. Lemma~\ref{lemma11}, we can deduce that 
\begin{Lemma}\label{lemma12}
\[
\min_{\hat{M}}\EE_{(M)\sim U(\mathscr{M}_S)} [\varepsilon_n^{L_2}(M,\hat{M})]\geq{\EE_{(M)\sim U(\mathscr{M}_S)}\left[ (\bar{\pi}^*)^2\min_{\hat{P}}\EE_{P'\sim U(\mathscr{B}_{\iid})} [\rho_{N_k((M),n)}^{L_2}(P',\hat{P})]\right]}.
\]
\end{Lemma}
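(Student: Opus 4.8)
The plan is to strip away every state except $k$ using Lemma~\ref{lemma10}, condition on which positions of the sample equal $k$, apply Lemma~\ref{lemma11} on each such event, and finally reassemble the resulting weighted sum of $\iid$ minimax risks into $(\bar{\pi}^*)^2\,\EE[g(N_k)]$, where I abbreviate $g(s):=\min_{\hat P}\EE_{P'\sim U(\mathscr{B}_{\iid})}[\rho_s^{L_2}(P',\hat P)]$. First I would note that every chain $(M)\in\mathscr{M}_S$ has $M(i,\cdot)=p^*$ for $i\in[k-1]$, while by Lemma~\ref{lemma10} the minimizing estimator $\hat M^*$ has $\hat M^*_{x^n}(i,\cdot)=p^*$ for those same rows; hence states $1,\dots,k-1$ contribute zero loss and the maximum over states defining $\varepsilon_n^{L_2}(M,\hat M^*)$ is attained at state $k$. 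Invoking Lemma~\ref{lemma10} once more,
\[
\min_{\hat M}\EE_{(M)\sim U(\mathscr{M}_S)}[\varepsilon_n^{L_2}(M,\hat M)]=\EE_{(M)\sim U(\mathscr{M}_S)}\,\EE_{X^n\sim(M)}\bigl[L_2(M(k,\cdot),\hat M^*_{X^n}(k,\cdot))\bigr].
\]

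Next I would decompose this expectation over the set $\mathds{I}_0=\mathds{I}(X^n)$ of positions equal to $k$. For each fixed nonempty $\mathds{I}_0$, Lemma~\ref{lemma11} identifies the corresponding piece as $C(\mathds{I}_0,\pi^*,p^*,n)\,g(|\mathds{I}_0|)$, and the same computation applies to $\mathds{I}_0=\emptyset$ with $g(0)$ the no-sample minimax risk, since when no transition out of $k$ is observed the squared-loss Bayes estimate of $M(k,\cdot)$ is the prior mean $p^*$. Summing over all $\mathds{I}_0\subseteq[n]$ thus rewrites the right-hand side above as $\sum_{\mathds{I}_0}C(\mathds{I}_0,\pi^*,p^*,n)\,g(|\mathds{I}_0|)$. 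To finish, I would read off from the proof of Lemma~\ref{lemma11} that $C(\mathds{I}_0,\pi^*,p^*,n)=(\bar{\pi}^*)^2\,\Pr_{X^n\sim(M)}(\mathds{I}(X^n)=\mathds{I}_0)$, and that — this being exactly the structural feature that makes $N_k$ binomial — $\Pr_{X^n\sim(M)}(\mathds{I}(X^n)=\mathds{I}_0)=(\pi^*)^{|\mathds{I}_0|}(\bar{\pi}^*)^{n-|\mathds{I}_0|}$ is the same for every $(M)\in\mathscr{M}_S$. Grouping the $\binom{n}{s}$ index sets of each size $s$ then gives $\sum_{|\mathds{I}_0|=s}C(\mathds{I}_0,\pi^*,p^*,n)=(\bar{\pi}^*)^2\Pr(N_k=s)$, so
\[
\sum_{\mathds{I}_0}C(\mathds{I}_0,\pi^*,p^*,n)\,g(|\mathds{I}_0|)=(\bar{\pi}^*)^2\sum_{s=0}^{n}\Pr(N_k=s)\,g(s),
\]
which equals $\EE_{(M)\sim U(\mathscr{M}_S)}\bigl[(\bar{\pi}^*)^2\min_{\hat P}\EE_{P'\sim U(\mathscr{B}_{\iid})}[\rho_{N_k((M),n)}^{L_2}(P',\hat P)]\bigr]$ because the law of $N_k((M),n)$ is the same for all $(M)\in\mathscr{M}_S$. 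This proves the lemma, in fact with equality, hence a fortiori with ``$\geq$''.

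I expect the main obstacle to be the bookkeeping in the last step: one has to extract the exact value of $C(\mathds{I}_0,\pi^*,p^*,n)$ from the proof of Lemma~\ref{lemma11} and check that, after pulling out $(\bar{\pi}^*)^2$, it is precisely the chain-independent probability that state $k$ occupies exactly the positions in $\mathds{I}_0$, so that the double sum collapses cleanly to $(\bar{\pi}^*)^2\,\EE[g(N_k)]$. A secondary point requiring care is the empty-index term: one can either handle it via the posterior-mean form of $\hat M^*_{x^n}(k,\cdot)$ when $k\notin x^n$, or simply lower bound it by $(\bar{\pi}^*)^2 g(0)\Pr(N_k=0)$ by minimizing over the no-sample estimate, which already suffices for the stated inequality.
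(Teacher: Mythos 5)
Your plan --- reduce to state $k$ via Lemma~\ref{lemma10}, condition on $\mathds{I}(X^n)=\mathds{I}_0$, apply Lemma~\ref{lemma11} to each slice, and reassemble --- is the natural reading of the paper's terse ``By Lemma~\ref{lemma11}, we can deduce.'' However, the bookkeeping step you flagged as an obstacle conceals a genuine gap, and your claim of equality is wrong. Tracing the proof of Lemma~\ref{lemma11}, the \iid minimax risk that actually emerges has sample size $s$, where $s$ is the number of indices $m\in\mathds{I}_0\cap[n-1]$ with $m+1\notin\mathds{I}_0$, i.e.\ the number of observed transitions $k\to[k-1]$; this is \emph{not} $|\mathds{I}_0|$, since a visit to $k$ immediately followed by another $k$, or a visit at the final time $n$, yields no sample from the unknown row $p'$, so in general $s<|\mathds{I}_0|$. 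Once you also restore a dropped $(\bar{\pi}^*)^s$ factor in that proof (in the step rewriting $\prod_t M(k,x_{m_t+1})$ as $\prod_t p'(x_{m_t+1})$), you do obtain the constant you asked for, namely $C(\mathds{I}_0,\pi^*,p^*,n)=(\bar{\pi}^*)^2(\pi^*)^{|\mathds{I}_0|}(\bar{\pi}^*)^{n-|\mathds{I}_0|}=(\bar{\pi}^*)^2\Pr(\mathds{I}(X^n)=\mathds{I}_0)$ --- but it multiplies the minimax risk at $s$ samples, $\rho^{L_2}_s$, not $\rho^{L_2}_{|\mathds{I}_0|}$. To pass to the stated $\rho^{L_2}_{N_k((M),n)}$ you must still add the elementary but essential observation that $m\mapsto\min_{\hat P}\EE_{P\sim U(\mathscr{B}_{\iid})}[\rho^{L_2}_m(P,\hat P)]$ is non-increasing in $m$ (a Bayes estimator may discard samples), so $\rho^{L_2}_s\ge\rho^{L_2}_{|\mathds{I}_0|}$ on each conditioning event. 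That monotonicity is exactly what produces the ``$\ge$'' in Lemma~\ref{lemma12}; your derivation omits it and asserts equality, which is false.
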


By Lemma~\ref{lemma9} and our construction of $\mathscr{M}_S$, the probability that $N_k((M),n)\geq (1+\epsilon) n\pi^*$ is at most $\exp(-\epsilon^2n\pi^*/3)$ for any $(M)\in \mathscr{M}_S$ and $\epsilon\in(0,1)$. This together with Lemma~\ref{lemma12} and 
\[
\min_{\hat{P}}\EE_{P\sim U(\mathscr{B}_{\iid})} [\rho_{m}^{L_2}(P,\hat{P})]\gtrsim{\frac{1-\frac{1}{k-1}}{(1+\epsilon)n\pi^*}}, \forall m< (1+\epsilon)n\pi^*,
\]
from~\cite{KamathOPS15} yields
\begin{Lemma}\label{lemma13}
For all $\epsilon\in(0,1)$,
\[
\varepsilon_n^{L_2}(\mathscr{M})=\varepsilon_n^{L_2}({\mathds{M}}^k_{\delta, \pi^*})\gtrsim {\frac{(1-\frac{1}{k-1})(1-\pi^*)^2}{n\pi^*(1+\epsilon)}}.
\]
\end{Lemma}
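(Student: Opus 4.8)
The plan is to chain together the Markov-to-i.i.d.\ reduction of Lemma~\ref{lemma12}, the binomial concentration of Lemma~\ref{lemma9}, and the $L_2$ minimax-prediction lower bound for i.i.d.\ processes from~\cite{KamathOPS15}. No new idea is needed beyond these three ingredients; the proof is essentially an assembly step.

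First I would start from the usual Bayes-risk bound $\varepsilon_n^{L_2}(\mathds{M}^k_{\delta,\pi^*})\ge\min_{\hat M}\EE_{(M)\sim U(\mathscr{M}_S)}[\varepsilon_n^{L_2}(M,\hat M)]$ (the construction ensures $\mathscr{M}_S\subseteq\mathds{M}^k_{\delta,\pi^*}$, so the prior $U(\mathscr{M}_S)$ is admissible). Applying Lemma~\ref{lemma12} bounds the right side below by $\EE_{(M)\sim U(\mathscr{M}_S)}\big[(\bar\pi^*)^2\min_{\hat P}\EE_{P'\sim U(\mathscr{B}_{\iid})}[\rho_{N_k((M),n)}^{L_2}(P',\hat P)]\big]$. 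Since every $(M)\in\mathscr{M}_S$ makes each sample equal $k$ independently with probability exactly $\pi^*$, the count $N_k((M),n)$ has the same law $\mathrm{Bin}(n,\pi^*)$ for all such $(M)$, so (choosing a Bayes-optimal predictor separately for each sample length) this quantity equals $(\bar\pi^*)^2\,\EE_{N\sim\mathrm{Bin}(n,\pi^*)}\!\big[\min_{\hat P}\EE_{P'}[\rho_N^{L_2}(P',\hat P)]\big]$.

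Next I would discard the contribution of the rare event $N\ge(1+\epsilon)n\pi^*$, which by Lemma~\ref{lemma9} has probability at most $\exp(-\epsilon^2 n\pi^*/3)=o(1)$ for fixed $k,\pi^*,\epsilon$. On the complementary event the realized sample size satisfies $N<(1+\epsilon)n\pi^*$, so the~\cite{KamathOPS15} lower bound $\min_{\hat P}\EE_{P'\sim U(\mathscr{B}_{\iid})}[\rho_m^{L_2}(P',\hat P)]\gtrsim(1-\tfrac1{k-1})/((1+\epsilon)n\pi^*)$ applies to every realized value $m=N$ (this is why no monotonicity-in-sample-size argument is needed). Combining,
\[
\varepsilon_n^{L_2}(\mathds{M}^k_{\delta,\pi^*})\ \ge\ (\bar\pi^*)^2\,\Pr\big(N<(1+\epsilon)n\pi^*\big)\cdot\min_{m<(1+\epsilon)n\pi^*}\min_{\hat P}\EE_{P'}[\rho_m^{L_2}(P',\hat P)]\ \gtrsim\ \frac{(1-\tfrac1{k-1})(1-\pi^*)^2}{n\pi^*(1+\epsilon)},
\]
using $(\bar\pi^*)^2=(1-\pi^*)^2$ and $\Pr(N<(1+\epsilon)n\pi^*)=1-o(1)$; all lost factors are absorbed into the $1+o(1)$ hidden in $\gtrsim$.

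The substantive work was already carried out in Lemmas~\ref{lemma11} and~\ref{lemma12}, so what remains is bookkeeping rather than a real obstacle. The one point that deserves care is (i) justifying that the right-hand side of Lemma~\ref{lemma12} is genuinely an average of i.i.d.-prediction Bayes risks indexed by $N\sim\mathrm{Bin}(n,\pi^*)$ — which holds because the marginal law of $N_k$ is identical across $\mathscr{M}_S$ and the optimal predictor may be taken length-dependent — and secondarily (ii) confirming that both $\exp(-\epsilon^2 n\pi^*/3)$ and the $o(1)$ implicit in the~\cite{KamathOPS15} bound vanish as $n\to\infty$, so that restricting to $N<(1+\epsilon)n\pi^*$ and discarding the complement costs only a $1+o(1)$ factor.
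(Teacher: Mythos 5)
Your proposal is correct and takes essentially the same route as the paper: it assembles Lemma~\ref{lemma12} (Markov-to-\iid\ reduction), Lemma~\ref{lemma9} (binomial concentration of $N_k \sim \mathrm{Bin}(n,\pi^*)$), and the \cite{KamathOPS15} $L_2$-prediction lower bound applied uniformly over realized sample lengths $m<(1+\epsilon)n\pi^*$, then discards the low-probability tail event. The paper's own proof of Lemma~\ref{lemma13} is a single sentence citing exactly these three ingredients; your write-up merely spells out the bookkeeping (why $N_k$ has the same binomial law for every $(M)\in\mathscr{M}_S$, and why a length-dependent Bayes predictor may be used), which is in the spirit of what the paper leaves implicit.
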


\subsection{Lower bound for ordinary \texorpdfstring{$f$}{f}-divergences}
Now we proceed from the $L_2$-distance to ordinary $f$-divergences. The following lemma from~\cite{KamathOPS15} shows that $D_f(p,q)$ decreases if we move $q$ closer to $p$.
\begin{Lemma}\label{lemma14}
For $p_1>q_1$, $p_2<q_2$ and $d\leq\min\{p_1-q_1,q_2-p_2\}$,
\[
q_1 f\Paren{\frac{p_1}{q_1}}+q_2 f\Paren{\frac{p_2}{q_2}}\geq (q_1+d)f\Paren{\frac{p_1}{q_1+d}}+(q_2-d)f\Paren{\frac{p_2}{q_2-d}}.
\]
\end{Lemma}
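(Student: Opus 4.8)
The plan is to recast the two–term inequality as a monotonicity statement for a one–parameter curve and then invoke the convexity of $f$. Fix $p_1,p_2,q_1,q_2$ as in the hypothesis (all positive; degenerate coordinates are handled by the usual limiting conventions for $f$-divergences), and for $0\le d\le \min\{p_1-q_1,\,q_2-p_2\}$ set
\[
g(d):=(q_1+d)\,f\!\Paren{\frac{p_1}{q_1+d}}+(q_2-d)\,f\!\Paren{\frac{p_2}{q_2-d}}.
\]
The asserted inequality is precisely $g(0)\ge g(d)$, so it suffices to show that $g$ is non-increasing on $[0,d]$. Since $f$ is thrice continuously differentiable on $(0,\infty)$ (it is an ordinary $f$-divergence) and $q_1+d'>0$, $q_2-d'>0$ throughout $[0,d]$, the curve $g$ is differentiable, and I would bound $g'$.

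First I would differentiate the perspective term: for fixed $p>0$, $\frac{d}{dq}\big(q\,f(p/q)\big)=f(p/q)-\tfrac{p}{q}f'(p/q)$. Introducing $\phi(t):=f(t)-t\,f'(t)$, the chain rule gives
\[
g'(d)=\phi\!\Paren{\frac{p_1}{q_1+d}}-\phi\!\Paren{\frac{p_2}{q_2-d}}.
\]
The key observation is that $\phi'(t)=-t\,f''(t)\le 0$ for $t>0$, because $f$ is convex; hence $\phi$ is non-increasing on $(0,\infty)$. Next I would unpack the two hypotheses on $d$: from $d\le p_1-q_1$ we get $q_1+d\le p_1$, i.e.\ $p_1/(q_1+d)\ge 1$, and from $d\le q_2-p_2$ we get $q_2-d\ge p_2$, i.e.\ $p_2/(q_2-d)\le 1$. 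Combining these with the monotonicity of $\phi$ yields $\phi(p_1/(q_1+d))\le\phi(1)\le\phi(p_2/(q_2-d))$, so $g'(d)\le 0$. The same bracketing $p_1/(q_1+d')\ge 1\ge p_2/(q_2-d')$ holds at every intermediate $d'\in[0,d]$, hence $g$ is non-increasing on $[0,d]$ and $g(0)\ge g(d)$, which is the claim.

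I do not expect a genuine obstacle here; the proof is short, and the only points needing care are (i) the differentiability of the perspective map and of $\phi$ on the relevant range — immediate from the smoothness of $f$ for ordinary $f$-divergences, with the boundary/degenerate cases ($p_2=0$, or $d$ at the endpoint forcing $q_2-d=0$) dispatched by a continuity argument — and (ii) the sign bookkeeping that converts the two hypotheses on $d$ into the sandwich $p_1/(q_1+d)\ge 1\ge p_2/(q_2-d)$, which is exactly what lets the monotone function $\phi$ be applied in the correct direction. If one wanted the statement for a general, possibly non-differentiable, convex $f$, the same argument survives with $f'$ replaced by a subgradient and $\phi$ by the corresponding non-increasing map $t\mapsto f(t)-t f'_{+}(t)$, working with one-sided derivatives of $g$.
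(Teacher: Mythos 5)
Your proof is correct. Note, however, that the paper itself does not prove this lemma: it states it as a fact borrowed from~\cite{KamathOPS15}, so there is no in-paper derivation to compare against. Your route --- parametrizing the right-hand side as a curve $g(d)$, computing $g'(d)=\phi\bigl(p_1/(q_1+d)\bigr)-\phi\bigl(p_2/(q_2-d)\bigr)$ with $\phi(t)=f(t)-tf'(t)$, observing $\phi'(t)=-tf''(t)\le 0$ from convexity, and then using the sandwich $p_1/(q_1+d)\ge 1\ge p_2/(q_2-d)$ forced by $d\le\min\{p_1-q_1,\,q_2-p_2\}$ --- is the standard and shortest calculus argument, and the sign bookkeeping checks out. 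Two details are worth retaining in a write-up: the lemma is stated only under convexity of $f$, not smoothness, so your remark about replacing $f'$ by a one-sided derivative is necessary for full generality (although the paper only invokes the lemma for ordinary, hence $C^3$, $f$-divergences); and the hypotheses implicitly require $q_1,q_2>0$ and $d\ge 0$, with the endpoint case $q_2-d=0$ (possible only when $p_2=0$) handled by the usual limiting conventions, as you note.
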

Based on the above lemma, we show that for any $x^n\in[k]^n$, the value of the optimal estimator is always close to $(u_{k-1}\bar{\pi}^*,\pi^*)$.

Let $\hat{p}^*_{x^n}:=\hat{M}^*_{x^n}(k,\cdot)$. For any $x^n\in[k]^n$, we claim that either $\hat{p}^*_{x^n}(j)\geq (\frac{1}{k-1}-\frac{1}{n'})\bar{\pi}^*$, $\forall j\in[k-1]$ and $\hat{p}^*_{x^n}(k)\geq{\pi^*}$ OR $\hat{p}^*_{x^n}(j)\leq (\frac{1}{k-1}+\frac{1}{n'})\bar{\pi}^*$, $\forall j\in[k-1]$ and $\hat{p}^*_{x^n}(k)\leq{\pi^*}$. Otherwise, Lemma~\ref{lemma14} implies that we can reduce the estimation risk by moving $\hat{p}^*_{x^n}$ closer to $(u_{k-1}\bar{\pi}^*, \pi^*)$.

If $\hat{p}^*_{x^n}(j)\geq (\frac{1}{k-1}-\frac{1}{n'})\bar{\pi}^*$, $\forall j\in[k-1]$ and $\hat{p}^*_{x^n}(k)\geq{\pi^*}$, then $\hat{p}^*_{x^n}(j)\leq (\frac{1}{k-1}+\frac{k-2}{n'})\bar{\pi}^*$, $\forall j\in[k-1]$ and $\hat{p}^*_{x^n}(k)\leq{\pi^*+\frac{k-1}{n'}\bar{\pi}^*}$. Similarly, if $\hat{p}^*_{x^n}(j)\leq (\frac{1}{k-1}+\frac{1}{n'})\bar{\pi}^*$, $\forall j\in[k-1]$ and $\hat{p}^*_{x^n}(k)\leq{\pi^*}$, then $\hat{p}^*_{x^n}(j)\geq (\frac{1}{k-1}-\frac{k-2}{n'})\bar{\pi}^*$, $\forall j\in[k-1]$ and $\hat{p}^*_{x^n}(k)\geq{\pi^*-\frac{k-1}{n'}\bar{\pi}^*}$. 

Now we relate $D_f(p,\hat{p}^*)$ to $L_2(p,\hat{p}^*)$. For simplicity, denote $p:=M(k,\cdot)$ and drop $x^n$ from $\hat{p}^*_{x^n}$. 

\begin{Lemma}\label{lemma15}
For sufficiently large $n$,
\[
D_f(p,\hat{p}^*)\asymp \frac{(k-1)f''(1)}{2} L_2(p,\hat{p}^*).
\]
\end{Lemma}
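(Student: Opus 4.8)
The plan is to Taylor-expand $f$ to second order about $1$ and extract the leading term. I would first record two elementary reductions. Since $p(k)=M(k,k)=\pi^*$ and, by Lemma~\ref{lemma10}, $\hat{p}^*(k)=\hat{M}^*_{x^n}(k,k)=\pi^*$, the $k$-th coordinates agree, so the $k$-th summand of $D_f(p,\hat{p}^*)$ equals $\pi^*f(1)=0$ and the $k$-th summand of $L_2(p,\hat{p}^*)$ vanishes; both quantities thus reduce to sums over $j\in[k-1]$. Moreover $p(j)=\bar{\pi}^*p'(j)$ with $p'\in B_{k-1}(1/n')$ gives $p(j)=\frac{\bar{\pi}^*}{k-1}(1+o(1))$ uniformly, and the dichotomy established just above the lemma (via Lemma~\ref{lemma14}) pins $\hat{p}^*(j)$ into the interval $[(\tfrac1{k-1}-\tfrac1{n'})\bar{\pi}^*,(\tfrac1{k-1}+\tfrac{k-2}{n'})\bar{\pi}^*]$, so $\hat{p}^*(j)=\frac{\bar{\pi}^*}{k-1}(1+o(1))$ uniformly as well, since $n'=(n(1+\epsilon)\pi^*)^{1/5}\to\infty$. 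Hence each $u_j:=p(j)/\hat{p}^*(j)-1\to 0$ uniformly, and all arguments of $f$ eventually lie in a fixed compact subinterval of $(0,\infty)$ on which $f$ is $C^3$.

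Then I would apply Taylor's theorem with Lagrange remainder, $f(1+u)=f'(1)u+\tfrac12 f''(1)u^2+\tfrac16 f'''(\xi)u^3$ with $\xi$ between $1$ and $1+u$, at $u=u_j$, multiply by $\hat{p}^*(j)$ (so $\hat{p}^*(j)u_j=p(j)-\hat{p}^*(j)$), and sum over $j\in[k-1]$. The linear term contributes $f'(1)\sum_{j\in[k-1]}(p(j)-\hat{p}^*(j))=f'(1)(\bar{\pi}^*-\bar{\pi}^*)=0$, since $p$ and $\hat{p}^*$ both sum to $1$ and agree at coordinate $k$. The quadratic term contributes $\tfrac{f''(1)}{2}\sum_{j\in[k-1]}\frac{(p(j)-\hat{p}^*(j))^2}{\hat{p}^*(j)}$; substituting $\hat{p}^*(j)=\frac{\bar{\pi}^*}{k-1}(1+o(1))$ replaces the weights by a common constant and turns this weighted sum of squares into $\tfrac{(k-1)f''(1)}{2}(1+o(1))\,L_2(p,\hat{p}^*)$ up to the bookkeeping of the shared $\bar{\pi}^*$ factors (concretely, $p=(\bar{\pi}^*p',\pi^*)$ and $\hat{p}^*=(\bar{\pi}^*q^*,\pi^*)$ give $D_f(p,\hat{p}^*)=\bar{\pi}^*D_f(p',q^*)$ and $L_2(p,\hat{p}^*)=(\bar{\pi}^*)^2L_2(p',q^*)$, reducing the claim to $D_f(p',q^*)\asymp\tfrac{(k-1)f''(1)}{2}L_2(p',q^*)$). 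Lastly I would bound the cubic remainder by $\tfrac16\big(\sup_{[1/2,3/2]}|f'''|\big)\sum_{j\in[k-1]}\hat{p}^*(j)|u_j|^3\le(\max_j|u_j|)\cdot O\big(\sum_{j\in[k-1]}\hat{p}^*(j)u_j^2\big)=o(1)\cdot(\text{quadratic term})$, which is genuinely lower order because $f''(1)\neq0$ keeps the quadratic term $\asymp L_2(p,\hat{p}^*)$.

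I expect the main obstacle to be uniformity bookkeeping rather than anything conceptual: one must check that every $(1+o(1))$ above is uniform over all $x^n\in[k]^n$ and all $(M)\in\mathscr{M}_S$, so the resulting $o(1)$ is a bona fide $n\to\infty$ quantity in the sense of $\asymp$. This is exactly why the construction fixes $n'=(n(1+\epsilon)\pi^*)^{1/5}$, which forces $n'\to\infty$ and $k/n'\to0$ in the relevant regime $n\pi^*\to\infty$. A secondary check is that $\hat{p}^*(j)$ stays bounded away from $0$, so that $1/\hat{p}^*(j)$ and the ratios $p(j)/\hat{p}^*(j)$ are controlled; this is immediate since $\pi^*\le 1/k$ forces $\bar{\pi}^*\ge 1-1/k$, hence $\hat{p}^*(j)\gtrsim\tfrac1{2(k-1)}$ for every fixed $k$. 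The degenerate case $p=\hat{p}^*$ makes both sides zero and is covered trivially.
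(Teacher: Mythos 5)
Your overall strategy is the paper's: expand $f$ to second order at $1$, observe the linear term telescopes to zero because both $p$ and $\hat{p}^*$ sum to $1$, extract the quadratic term, and absorb the cubic remainder using $\max_j|u_j|=o(1)$. The paper does exactly this and bounds the third derivative over the interval $I=\big[\tfrac{n'-k/\delta}{n'+k/\delta},\tfrac{n'+k/\delta}{n'-k/\delta}\big]$. Two things in your write-up are off, and the second one matters.

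First, a misattribution: Lemma~\ref{lemma10} establishes $\hat{M}^*_{x^n}(k,k)=\pi^*$ only for the estimator minimizing the Bayes $L_2$-risk, whereas the $\hat{p}^*$ in Lemma~\ref{lemma15} is the Bayes-optimal estimator for the $f$-divergence loss. For that estimator the paper only establishes (via Lemma~\ref{lemma14}) the two-sided bound $\hat{p}^*(k)=\pi^*\pm\tfrac{k-1}{n'}\bar{\pi}^*$, so the $k$-th summand does not vanish exactly; it is merely negligible and must be argued as such rather than dropped by citing Lemma~\ref{lemma10}.

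Second, and more substantively, your quadratic-term bookkeeping drops a factor $1/\bar{\pi}^*$. With $\hat{p}^*(j)=\tfrac{\bar{\pi}^*}{k-1}(1+o(1))$ for $j\in[k-1]$, the weighted sum becomes
\[
\frac{f''(1)}{2}\sum_{j\in[k-1]}\frac{\bigl(p(j)-\hat{p}^*(j)\bigr)^2}{\hat{p}^*(j)}
=\frac{(k-1)f''(1)}{2\,\bar{\pi}^*}\,(1+o(1))\,L_2(p,\hat{p}^*),
\]
not $\tfrac{(k-1)f''(1)}{2}L_2(p,\hat{p}^*)$. Your own parenthetical reduction confirms this: $D_f(p,\hat{p}^*)=\bar{\pi}^*D_f(p',q^*)$ and $L_2(p,\hat{p}^*)=(\bar{\pi}^*)^2L_2(p',q^*)$, so if $D_f(p',q^*)\asymp\tfrac{(k-1)f''(1)}{2}L_2(p',q^*)$ then $D_f(p,\hat{p}^*)\asymp\tfrac{(k-1)f''(1)}{2\bar{\pi}^*}L_2(p,\hat{p}^*)$; the two statements you assert are mutually inconsistent. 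The paper's own displayed computation likewise ends at the constant $\tfrac{(k-1)f''(1)}{2\bar{\pi}^*}$, and this is the version actually plugged into Lemma~\ref{lemma13} to obtain the $(1-\pi^*)$ (rather than $(1-\pi^*)^2$) factor in Lemma~\ref{lemma16}. So the discrepancy traces back to the statement of Lemma~\ref{lemma15} itself, which appears to be missing the $\bar{\pi}^*$ in the denominator; your proof should derive the $\tfrac{(k-1)f''(1)}{2\bar{\pi}^*}$ form, not massage the algebra to match the printed constant.
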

\begin{proof}
By the previous lemma, $\hat{p}^*_{x^n}(j)= (\frac{1}{k-1}\pm\frac{k-2}{n'})\bar{\pi}^*$, $\forall j\in[k-1]$ and $\hat{p}^*_{x^n}(k)={\pi^*\pm\frac{k-1}{n'}\bar{\pi}^*}$. Therefore,
\[
\frac{p(i)}{\hat{p}^*(i)}\in \left[ \frac{n'-\frac{k}{\delta}}{n'+\frac{k}{\delta}}, \frac{n'+\frac{k}{\delta}}{n'-\frac{k}{\delta}}\right], \forall i\in[k].
\]
Let us denote the interval on the right hand side by $I$.

For sufficiently large $n$, we can apply the second-order Taylor expansion to $f$ at point $1$.
\begin{align*}
D_f(p,\hat{p}^*)
&=\sum_{i\in [k]} \hat{p}^*(i)f \Paren{\frac{p(i)}{\hat{p}^*(i)}}\\
&=\sum_{i\in [k]} \left(\hat{p}^*(i)\Paren{\frac{p(i)}{\hat{p}^*(i)}-1}f'(1) + \frac{\hat{p}^*(i)}{2}\Paren{\frac{p(i)}{\hat{p}^*(i)}-1}^2 f''(1)\right.\\&\left.\pm\frac{\hat{p}^*(i)}{6}\left|\frac{p(i)}{\hat{p}^*(i)}-1\right|^3 \max_{z\in I}|f'''(z)|\right)\\
&=\sum_{i\in [k]} \left(\frac{\hat{p}^*(i)}{2}\Paren{\frac{p(i)}{\hat{p}^*(i)}-1}^2 f''(1)\pm\frac{1}{6}\frac{k}{n'}\Paren{\frac{p(i)}{\hat{p}^*(i)}-1}^2 \max_{z\in I}|f'''(z)|\right)\\
&\gtrsim \frac{f''(1)}{2} \sum_{i\in [k-1]} \hat{p}^*(i)\Paren{\frac{p(i)}{\hat{p}^*(i)}-1}^2\\
& \asymp \frac{(k-1)f''(1)}{2\bar{\pi}^*} L_2(p,\hat{p}^*).
\end{align*}
\end{proof}

Lemma~\ref{lemma15} together with Lemma~\ref{lemma13} yields
\begin{Lemma}\label{lemma16}
For sufficiently large $n$,
\[
\varepsilon_n^{f}({\mathds{M}}^k_{\delta, \pi^*})\gtrsim {(1-\pi^*)\frac{(k-2)f''(1)}{2n\pi^*}}.
\]
\end{Lemma}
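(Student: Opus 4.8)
The plan is to bootstrap the $L_2$ lower bound of Lemma~\ref{lemma13} to an arbitrary ordinary $f$-divergence, re-using the very same prior $\mathscr{M}_S$ and the machinery of Lemmas~\ref{lemma14} and~\ref{lemma15}. First I would reduce to the single state $k$: since $\varepsilon_n^f(M,\hat M)=\max_{i\in[k]}\EE_{X^n\sim(M)}[D_f(M(i,\cdot),\hat M_{X^n}(i,\cdot))]\ge \EE_{X^n\sim(M)}[D_f(M(k,\cdot),\hat M_{X^n}(k,\cdot))]$, the standard averaging argument over $U(\mathscr{M}_S)$ gives
\[
\varepsilon_n^f(\mathds{M}^k_{\delta,\pi^*})\ \ge\ \min_{\hat M}\EE_{(M)\sim U(\mathscr{M}_S)}\EE_{X^n\sim(M)}\big[D_f(M(k,\cdot),\hat M_{X^n}(k,\cdot))\big].
\]
For each $x^n$ the inner optimization over the row-$k$ estimate is a posterior-weighted average of the convex maps $q\mapsto D_f(M(k,\cdot),q)$, hence has a minimizer $\hat p^*_{x^n}$; call the resulting estimator $\hat M^*$ and write $p:=M(k,\cdot)$.

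Second, I would invoke the sandwich argument already spelled out between Lemmas~\ref{lemma14} and~\ref{lemma15}: every $(M)\in\mathscr{M}_S$ has $M(k,k)=\pi^*$ and $M(k,j)\in(\tfrac{1}{k-1}\pm\tfrac{1}{n'})\bar{\pi}^*$ for $j\in[k-1]$, so Lemma~\ref{lemma14} forces $\hat p^*_{x^n}(j)=(\tfrac{1}{k-1}\pm\tfrac{k-2}{n'})\bar{\pi}^*$ and $\hat p^*_{x^n}(k)=\pi^*\pm\tfrac{k-1}{n'}\bar{\pi}^*$, uniformly in $x^n$. This places $\hat p^*_{x^n}$ in the regime where Lemma~\ref{lemma15} applies, giving, pointwise in $(M)\in\mathscr{M}_S$ and $x^n$,
\[
D_f(M(k,\cdot),\hat p^*_{x^n})\ \gtrsim\ \frac{(k-1)f''(1)}{2\bar{\pi}^*}\,L_2(M(k,\cdot),\hat p^*_{x^n}),
\]
with a $1+o(1)$ factor depending only on $\max_{z\in I}|f'''(z)|$, $k$, $\delta$, and $n'=(n(1+\epsilon)\pi^*)^{1/5}$.

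Third, I would assemble the pieces: taking $\EE_{(M)\sim U(\mathscr{M}_S)}\EE_{X^n}$ of the pointwise inequality, and then replacing $\hat p^*$ on the $L_2$-side by the $L_2$-minimizing row-$k$ estimator (which only decreases that term), yields
\[
\min_{\hat M}\EE_{(M)\sim U(\mathscr{M}_S)}\EE_{X^n}\big[D_f(M(k,\cdot),\hat M_{X^n}(k,\cdot))\big]\ \gtrsim\ \frac{(k-1)f''(1)}{2\bar{\pi}^*}\;\min_{\hat M}\EE_{(M)\sim U(\mathscr{M}_S)}\EE_{X^n}\big[L_2(M(k,\cdot),\hat M_{X^n}(k,\cdot))\big].
\]
The right-hand minimum is exactly the quantity Lemmas~\ref{lemma10}--\ref{lemma13} lower bound by $\tfrac{(1-1/(k-1))(1-\pi^*)^2}{n\pi^*(1+\epsilon)}$; multiplying and using $(k-1)(1-\tfrac{1}{k-1})=k-2$ and $\bar{\pi}^*=1-\pi^*$ gives $\gtrsim\tfrac{(k-2)f''(1)(1-\pi^*)}{2n\pi^*(1+\epsilon)}$. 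Letting $\epsilon\to0$ slowly (e.g. $\epsilon=1/\log n$, so $\epsilon^2 n\pi^*\to\infty$ and the binomial concentration of Lemma~\ref{lemma9} underlying Lemma~\ref{lemma13} is preserved) absorbs the $(1+\epsilon)^{-1}$ into a $1+o(1)$ factor and produces the claimed bound $\varepsilon_n^f(\mathds{M}^k_{\delta,\pi^*})\gtrsim(1-\pi^*)\tfrac{(k-2)f''(1)}{2n\pi^*}$.

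The routine bookkeeping aside, the delicate point — and the natural main obstacle — is to verify that the $1+o(1)$ slack in Lemma~\ref{lemma15} is genuinely \emph{uniform} over all $(M)\in\mathscr{M}_S$ and all $x^n$, so that it survives the double expectation; this rests on the interval $I=[\tfrac{n'-k/\delta}{n'+k/\delta},\tfrac{n'+k/\delta}{n'-k/\delta}]$ contracting to $\{1\}$ and on $f$ being thrice continuously differentiable near $1$, both guaranteed once $n$ is large. A secondary subtlety worth stating explicitly is that the $D_f$-optimal estimator $\hat p^*$ need not equal the $L_2$-optimal one, which is precisely why the comparison is routed through the pointwise inequality of Lemma~\ref{lemma15} \emph{before} passing to the infimum, rather than by identifying the two optimal estimators.
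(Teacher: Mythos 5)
Your proposal is correct and follows essentially the same route the paper takes: reduce to the Bayes risk at row $k$ under the prior $U(\mathscr{M}_S)$, constrain the $D_f$-optimal row-$k$ estimate via Lemma~\ref{lemma14}, apply the pointwise comparison of Lemma~\ref{lemma15} to drop to the $L_2$ Bayes risk, invoke Lemma~\ref{lemma13}, and send $\epsilon\to0$. Two small remarks: you correctly use the form $D_f(p,\hat p^*)\gtrsim \frac{(k-1)f''(1)}{2\bar{\pi}^*}L_2(p,\hat p^*)$ that the \emph{proof} of Lemma~\ref{lemma15} actually establishes (the displayed statement of that lemma appears to have dropped the $\bar{\pi}^*$, and without it the constants in Lemma~\ref{lemma16} do not come out right), and your explicit observation that the $D_f$-optimal and $L_2$-optimal estimators need not coincide — so one must pass through the pointwise inequality before taking the infimum — is a subtlety the paper leaves implicit in the one-line derivation of Lemma~\ref{lemma16}.
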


\section{Minimax estimation: upper bound}\label{sec4}
\subsection{Concentration of the counts}
The proof of the upper bound relies on the following concentration inequality, which shows that for any Markov chain in ${\mathds{M}}^k_{\delta}$ and any state $i\in[k]$, with high probability $N_i$ stays close to $(n-1)\pi_i$, for sufficiently large $n$.
\begin{Lemma}\label{lemma17}
Given a sample sequence $X^n$ from any Markov chain $(M)\in{\mathds{M}}^k_{\delta}$, let $N_{i}$ denote the number of times that symbol $i$ appears in $X^{n-1}$. Then for any $t\geq 0$,
\[
\textrm{Pr}(|N_i-(n-1)\pi_i|>t)\leq \sqrt{\frac{2}{\delta}}\exp\left( \frac{-t^2/C(\delta)}{4((n-1)+2C(\delta))+40t} \right),
\]
where $\pi$ is the stationary distribution of $(M)$ and
\[
C(\delta):= \Ceil{-\frac{\ln4}{\ln{(1-\delta)}}+1}.
\]
\end{Lemma}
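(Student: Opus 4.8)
The plan is to derive Lemma~\ref{lemma17} from a Bernstein-type concentration inequality for additive functionals of Markov chains --- the kind proved via spectral and Marton-coupling methods in~\cite{conc15} --- after bounding every chain-dependent quantity that enters such an inequality uniformly over $\mathds{M}^k_\delta$, and after correcting for the (arbitrary) initial distribution by a change of measure. Write $N_i=\sum_{t=1}^{n-1}\indic_{X_t=i}$, so that $N_i-(n-1)\pi_i=\sum_{t=1}^{n-1}f(X_t)$ for the centered function $f(x):=\indic_{x=i}-\pi_i$, which satisfies $\EE_\pi f=0$, $\lVert f\rVert_\infty\le 1$, and $\mathrm{Var}_\pi f=\pi_i(1-\pi_i)\le 1/4$.

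First I would record the structural consequences of the hypothesis $M_{xj}\ge\delta$. For any two states $x,y$ we have $\sum_j\min(M_{xj},M_{yj})\ge k\delta$, so the Dobrushin contraction coefficient satisfies $\tau(M)\le 1-k\delta\le 1-\delta$; iterating gives $\sup_x L_1(M^t(x,\cdot),\pi)\le 2(1-\delta)^t$, and $C(\delta)$ is chosen precisely so that $(1-\delta)^{C(\delta)}\le 1/4$, i.e.\ the chain contracts toward stationarity within $C(\delta)$ steps. Feeding this into the standard mixing-time-to-pseudo-spectral-gap comparison from~\cite{conc15} yields a lower bound $\gamma_{ps}\ge 1/(2C(\delta))$ on the pseudo-spectral gap (the precise constant is to be fixed by the bookkeeping below). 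I would also note $\pi_j=\sum_i\pi_i M_{ij}\ge\delta$ for every $j$. Now apply the stationary Markov Bernstein inequality of~\cite{conc15} to $f$: for a chain started from $\pi$ it gives a bound of the shape
\[
\Pr_\pi\bigl(|N_i-(n-1)\pi_i|>t\bigr)\le 2\exp\left(-\frac{\gamma_{ps}\,t^2}{c_1\bigl(n-1+1/\gamma_{ps}\bigr)\mathrm{Var}_\pi f+c_2\,t\,\lVert f\rVert_\infty}\right)
\]
for absolute constants $c_1,c_2$; substituting $\gamma_{ps}\ge 1/(2C(\delta))$, $\mathrm{Var}_\pi f\le 1/4$, $\lVert f\rVert_\infty\le1$ puts the exponent in the form $-t^2/\bigl(C(\delta)(4(n-1+2C(\delta))+40t)\bigr)$.

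The bound just obtained is for a stationary start, but Lemma~\ref{lemma17} centers at $(n-1)\pi_i$, not at $\EE_\mu N_i$, so I must transport it to the actual initial distribution $\mu$. Since $\pi_j\ge\delta$, the likelihood ratio obeys $\lVert \mathrm d\mu/\mathrm d\pi\rVert_{L^2(\pi)}^2=\sum_j\mu_j^2/\pi_j\le\tfrac1\delta\sum_j\mu_j^2\le\tfrac1\delta$. For the event $A:=\{|N_i-(n-1)\pi_i|>t\}$, which depends only on $X_1,\dots,X_{n-1}$, Cauchy--Schwarz gives
\[
\Pr_\mu(A)=\EE_\pi\bigl[(\mathrm d\mu/\mathrm d\pi)(X_1)\,\indic_A\bigr]\le\lVert \mathrm d\mu/\mathrm d\pi\rVert_{L^2(\pi)}\sqrt{\Pr_\pi(A)}\le\sqrt{\tfrac{2}{\delta}}\,e^{-g/2},
\]
where $2e^{-g}$ denotes the stationary bound from the previous paragraph. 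The prefactor becomes $\sqrt{2/\delta}$ and the exponent is halved, which is exactly the claimed inequality; for $t\ge n-1$ the left-hand side is $0$, so no separate small-$n$ case is needed.

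The conceptual skeleton is short, so I expect the main obstacle to be the constant bookkeeping: choosing the precise version of the Bernstein inequality in~\cite{conc15} and the precise form of the mixing-time-to-gap comparison so that, after the exponent is halved by the change-of-measure step, the denominator comes out as exactly $4((n-1)+2C(\delta))+40t$ with the $\mathrm{Var}_\pi f\le1/4$, $\lVert f\rVert_\infty\le1$, and $\gamma_{ps}\ge 1/(2C(\delta))$ inputs. Secondary points to check are that the results of~\cite{conc15} apply verbatim to our time-homogeneous chains and that the Dobrushin estimate is invoked with the same convention for the mixing quantity as in that reference.
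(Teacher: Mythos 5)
Your proposal follows essentially the same route as the paper: bound the $L_1$-distance to stationarity by $2(1-\delta)^t$, deduce $t_{\mathrm{mix}}\le C(\delta)$, then invoke the mixing-time-to-pseudo-spectral-gap bound, the stationary Bernstein inequality, and the change-of-measure step from~\cite{conc15} (which the paper cites as Proposition 3.4, Theorem 3.4, and Proposition 3.10). The one substantive difference is at the very first step: the paper gets the $(1-\delta)^t$ contraction from the explicit minorization decomposition $M=\delta\Pi+(1-\delta)M_\delta$ with $\Pi$ the stationary-rows matrix, whereas you get it from the Dobrushin coefficient via $\sum_j\min(M_{xj},M_{yj})\ge k\delta$; both give the same conclusion (your bound is even slightly sharper, $1-k\delta$, but you only use $1-\delta$). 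You also make explicit what the paper only gestures at, namely the $\pi_j\ge\delta$ bound driving the $\|\mathrm{d}\mu/\mathrm{d}\pi\|_{L^2(\pi)}\le\delta^{-1/2}$ estimate and the Cauchy--Schwarz change of measure that produces the $\sqrt{2/\delta}$ prefactor and halves the exponent; that is exactly the content of Paulin's Proposition 3.10. Your caveat about constant bookkeeping is well placed: running the substitutions $\gamma_{\mathrm{ps}}\ge 1/(2C(\delta))$, $\mathrm{Var}_\pi f\le 1/4$, $\|f\|_\infty\le 1$ through the standard form of Paulin's Theorem 3.4 appears to leave an extra factor of $2$ in the numerator compared with the stated $t^2/C(\delta)$, but since the paper itself omits this bookkeeping and defers to~\cite{conc15}, this does not reflect a conceptual gap in your argument.
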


\begin{proof}
Given $(M)\in{\mathds{M}}^k_{\delta}$, recall that $P^{n+1}$ denotes the distribution of $X_{n+1}$ if we draw $X^{n+1}\sim (M)$. First, we show that 
\[
D_{L_1}(P^{n+1},\pi)\leq 2(1-\delta)^n. 
\]
Let $\Pi$ be the $k\times k$ matrix such that $\Pi(i,\cdot)=\pi$ for all $i\in[k]$. Noting that $M(i,j)\geq\delta\Pi(i,j)$, we can 
define
\[
M_\delta := \frac{M-\delta\Pi}{1-\delta},
\]
which is also a valid transition matrix.

By induction, we can show
\[
M^n = (1-(1-\delta)^n)\Pi + (1-\delta)^n M_{\delta}^n.
\]

Let us rearrange the terms:
\[
M^n-\Pi = (1-\delta)^n (M_{\delta}^n-\Pi).
\]
Hence, let $|\cdot|$ denote the $L_1$ norm, we have
\[
D_{L_1}(P^{n+1},\pi) = |\mu(M^n-\Pi)|= |(1-\delta)^n \mu(M_{\delta}^n-\Pi)|\leq 2(1-\delta)^n.
\]

This implies that we can upper bound $t_{mix}$ by $C(\delta)$.

The remaining proof follows from Proposition 3.4, Theorem 3.4, and Proposition 3.10 of~\cite{conc15} and is omitted here for the sake of brevity. 
\end{proof}

Noting that $\textrm{Pr}(|N_i-(n-1)\pi_i|>(n-1))=0$, we have
\[
\textrm{Pr}(|N_i-(n-1)\pi_i|>t)\leq \sqrt{\frac{2}{\delta}}\exp\left( \frac{-t^2}{4C(\delta)(11(n-1)+2C(\delta))} \right).
\]
Informally, we can express the above inequality as
\[
\textrm{Pr}(|N_i-(n-1)\pi_i|>t)\leq \Theta_\delta(\exp(\Theta_\delta(-t^2/n))),
\]
which is very similar to the Hoeffding's inequality for the \iid\ processes. As an important implication, the following lemma bounds the moments of $|N_i-(n-1)\pi_i|$.
\begin{Lemma}\label{lemma18}
For $N_i$ defined in Lemma~\ref{lemma17} and any $m\in{\mathds{Z}^+}$,
\[
\EE[|N_i-(n-1)\pi_i|^m]\leq \frac{m\Gamma(m/2)}{\sqrt{2\delta}} (4C(\delta)(11(n-1)+2C(\delta)))^{m/2} .
\]
\end{Lemma}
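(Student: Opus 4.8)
The plan is to combine the layer-cake (tail-integral) representation of moments with the simplified sub-Gaussian tail bound displayed immediately before the lemma. Write $Z:=|N_i-(n-1)\pi_i|$ and $v:=4C(\delta)(11(n-1)+2C(\delta))$, so that the preceding inequality reads $\Pr(Z>t)\le\sqrt{2/\delta}\,e^{-t^2/v}$ for every $t\ge 0$.

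First I would express the $m$-th moment as a tail integral, $\EE[Z^m]=\int_0^\infty\Pr(Z^m>s)\,ds$, and then substitute $s=t^m$ (so $ds=m\,t^{m-1}\,dt$) to get $\EE[Z^m]=m\int_0^\infty t^{m-1}\Pr(Z>t)\,dt$. Inserting the tail bound yields $\EE[Z^m]\le m\sqrt{2/\delta}\int_0^\infty t^{m-1}e^{-t^2/v}\,dt$.

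Next I would evaluate the remaining integral by the change of variables $u=t^2/v$, under which $t^{m-1}\,dt=\tfrac12\,v^{m/2}u^{m/2-1}\,du$, so that $\int_0^\infty t^{m-1}e^{-t^2/v}\,dt=\tfrac12\,v^{m/2}\int_0^\infty u^{m/2-1}e^{-u}\,du=\tfrac12\,v^{m/2}\Gamma(m/2)$. Substituting back and using $\tfrac12\sqrt{2/\delta}=1/\sqrt{2\delta}$ gives $\EE[Z^m]\le \frac{m\Gamma(m/2)}{\sqrt{2\delta}}\,v^{m/2}$, which is exactly the asserted bound.

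I do not expect any genuine obstacle here: the argument is entirely routine once the tail bound is in hand. The only points requiring a little care are the two substitutions (in particular getting the Jacobian $\tfrac12\,v^{m/2}u^{m/2-1}$ right) and recognizing the resulting integral as the Gamma function $\Gamma(m/2)$; one should also note that the cruder observation $\Pr(Z>t)=0$ for $t\ge n-1$, used only to pass from Lemma~\ref{lemma17} to the simplified tail, plays no further role, so the integral may be taken over all of $[0,\infty)$ without loss.
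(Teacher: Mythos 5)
Your argument is correct and follows essentially the same route as the paper: a layer-cake representation of the $m$-th moment, insertion of the simplified sub-Gaussian tail bound, and a change of variables that identifies the remaining integral as $\tfrac12 v^{m/2}\Gamma(m/2)$. The only cosmetic difference is that you split the substitution into two stages (first $s=t^m$, then $u=t^2/v$), whereas the paper performs a single substitution from $\int_0^\infty \Pr(Z>t^{1/m})\,dt$ directly to the Gamma integral; in fact your version also tracks the $11(n-1)$ factor correctly, whereas the paper's proof loosens it to $11n$ mid-computation.
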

\begin{proof}
The statement follows from
\begin{align*}
\EE[|N_i-(n-1)\pi_i|^m]
&= \int\limits_0^{\infty} \Pr(|N_i-(n-1)\pi_i|^m>t)\,dt\\
&= \int\limits_0^{\infty} \Pr(|N_i-(n-1)\pi_i|>t^{1/m})\,dt\\
&\leq \sqrt{\frac{2}{\delta}}\int\limits_0^{\infty} \exp\left( \frac{-t^{2/m}}{4C(\delta)(11n+2C(\delta))} \right) \,dt\\
&=  \frac{m}{\sqrt{2\delta}} (4C(\delta)(11n+2C(\delta)))^{m/2}\int\limits_0^{\infty}e^{-y} y^{m/2-1} \,dy\\
&= \frac{m\Gamma(m/2)}{\sqrt{2\delta}} (4C(\delta)(11n+2C(\delta)))^{m/2}.
\end{align*}
\end{proof}
\subsection{A modified add-\texorpdfstring{$\beta$}{b} estimator}
The difficulty with analyzing the performance of the original add-$\beta$ estimator is that the chain's initial distribution could be far away from its stationary distribution and finding a simple expression for $\EE[N_i]$ and $\EE[N_{ij}]$ could be hard. To overcome such difficulty, we ignore the first few sample points and construct a new add-$\beta$ estimator based on the remaining sample points. To be more specific, let $X^n$ be a length-$n$ sample sequence drawn from the Markov chain $(M)$. Removing the first $m$ sample points, $X^n_{m+1}:=X_{m+1},\ldots, X_n$ can be viewed as a length-$(n\!\!-\!\!m)$ sample sequence drawn from $(M)$ whose initial distribution $\mu'$ satisfies
\[
{L_1}(\mu', \pi)<2(1-\delta)^{m-1}.
\]
Setting $m=\sqrt{n}$, we have ${L_1}(\mu', \pi)\lesssim1/n^2$. Noting that $\sqrt{n}\ll n$ for sufficiently large $n$, without loss of generality, we assume that the original distribution $\mu$ already satisfies ${L_1}(\mu, \pi)<1/n^2$. If not, we can simply replace $X^n$ by $X^n_{\sqrt{n}+1}$.

To prove the upper bound, we consider the following (modified) add-$\beta$ estimator:
\[
\hat{M}^{+\beta}_{X^n}(i,j) := \frac{N_{ij}+{\beta}}{N_{i}+k\beta},\ \forall i,j\in[k],
\]
where $\beta>0$ is a fixed constant.

We can compute the expected values of these counts as
\begin{align*}
\EE[N_i] &= (n-1)\pi_i+\sum_{t=1}^{n-1} (\EE[\indic_{X_t=i}]-\pi_i)\\&=(n-1)\pi_i\pm \mathcal{O}(1/(n^2\delta))
\end{align*}
and 
\begin{align*}
\EE[N_{ij}] &= (n-1)\pi_i M_{ij}+\sum_{t=1}^{n-1} (\EE[\indic_{X_t=i}\indic_{X_{t+1}=j}]-\pi_i M_{ij})\\
&= (n-1)\pi_i M_{ij}+\sum_{t=1}^{n-1} (\EE[\indic_{X_t=i}]-\pi_i)M_{ij}\\
&= (n-1)\pi_i M_{ij}\pm \mathcal{O}( 1/(n^2\delta)).
\end{align*}

\subsection{Analysis}
For notational convenience, let us re-denote $n':=n-1$.

By Lemma~\ref{lemma17},
\[
\textrm{Pr}\left(\left|N_i-{n'\pi_i}\right|>t\right)\leq \Theta_\delta(\exp(\Theta_\delta(-t^2/n)))
\]
and
\[
\textrm{Pr}\left(\left|N_{ij}-{n'\pi_iM_{ij}}\right|>t\right)\leq \Theta_\delta(\exp(\Theta_\delta(-t^2/n))).
\]
The second inequality follows from the fact that $N_{ij}$ can be viewed as the sum of counts from the following two Markov chains over $[k]\times[k]$ whose transition probabilities are greater than $\delta^2$:
\[
(X_{1},X_{2}), (X_{3},X_{4}), \ldots
\]
and
\[
(X_{2},X_{3}), (X_{4},X_{5}), \ldots.
\]

In other words, $N_i$ and $N_{ij}$ are highly concentrated around ${n'\pi_i}$ and $n'\pi_iM_{ij}$, respectively. Let $A_{i}$ denote the event that $N_i=n'\pi_i(1\pm\delta/2)$ and $N_{ij}=n'\pi_i M_{ij}(1\pm\delta/2)$, $\forall j\in[k]$. Let $A_i^C$ denote the event that $A_i$ does not happen. Applying the union bound, we have
\[
\EE[\indic_{A_i^C}]=\Pr(A_i^C)\leq \Theta_\delta(\exp(\Theta_\delta(-n))).
\]

Now consider
\[
D_f(p, q)=\sum_{i\in[k]}q(i)f\Paren{\frac{p(i)}{q(i)}},
\]
the corresponding estimation risk of $\hat{M}^{+\beta}$ over a particular sate $i\in[k]$ can be decomposed as
\[
\EE[D_f(M({i,\cdot}),\hat{M}^{+\beta}_{X^n}(i,\cdot))\indic_{A_{i}}]+\EE[D_f(M({i,\cdot}),\hat{M}^{+\beta}_{X^n}(i,\cdot))\indic_{A_{i}^C}].
\]
Noting that 
\[
\hat{M}^{+\beta}_{X^n}(i,j) = \frac{N_{ij}+{\beta}}{N_{i}+k\beta}\in \left[\frac{\beta}{n+k\beta}, 1\right]
\]
 and $M_{ij}\in[\delta,1]$, we have
\[
|D_f(M({i,\cdot}),\hat{M}^{+\beta}_{X^n}(i,\cdot))|\leq k \cdot \frac{n+\beta}{k\beta}\cdot\max_{y\in [\delta, k+n/\beta]} f(y).
\]
Hence, we can bound the second term as
\begin{align*}
\EE[D_f(M({i,\cdot}),\hat{M}^{+\beta}_{X^n}(i,\cdot))\indic_{A_{i}^C}]
&\leq \frac{n+\beta}{\beta}\cdot\max_{y\in [\delta, k+n/\beta]} f(y) \cdot \EE[\indic_{A_{i}^C}]
\\&\leq  \frac{n+\beta}{\beta}\cdot\max_{y\in [\delta, k+n/\beta]} f(y) \cdot \Theta_\delta(\exp(\Theta_\delta(-n)))
\\&=\frac{o(1)}{n},
\end{align*}
where the last step follows from our assumption that $f$ is sub-exponential.

By the definition of $D_f$ and $\hat{M}^{+\beta}$, 
\[
\EE\left[D_f(M({i,\cdot}),\hat{M}^{+\beta}_{X^n}(i,\cdot))\indic_{A_{i}}\right]
 = \EE\left[ \sum_{j\in[k]}\frac{N_{ij}+{\beta}}{N_{i}+k\beta} f\Paren{\frac{M_{ij}}{\frac{N_{ij}+{\beta}}{N_{i}+k\beta}}}\indic_{A_{i}}\right].
\]
Let  $h(x):=f\Paren{\frac{1}{1+x}}$, then $h$ is thrice continuously differentiable around some neighborhood of point $0$ and
\[
f(x)=h\Paren{\frac{1}{x}-1}.
\] 
We apply Taylor expansion to $h$ at point $0$ and rewrite the expectation on the right-hand side as
\begin{align*}
\EE \sum_{j\in[k]}\frac{N_{ij}+{\beta}}{N_{i}+k\beta} f\Paren{\frac{M_{ij}}{\frac{N_{ij}+{\beta}}{N_{i}+k\beta}}}\indic_{A_{i}}
&=
\EE \sum_{j\in[k]}\frac{N_{ij}+{\beta}}{N_{i}+k\beta} h\Paren{\frac{(N_{ij}-M_{ij}N_{i})+{\beta(1-kM_{ij})}}{M_{ij}(N_{i}+k\beta)}}\indic_{A_{i}}\\
&=
\EE \sum_{j\in[k]} \frac{N_{ij}+{\beta}}{N_{i}+k\beta} \left[h'(0)\frac{(N_{ij}-M_{ij}N_{i})+{\beta(1-kM_{ij})}}{M_{ij}(N_{i}+k\beta)}\right.\\& \left.+\frac{h''(0)}{2}\Paren{\frac{(N_{ij}-M_{ij}N_{i})+{\beta(1-kM_{ij})}}{M_{ij}(N_{i}+k\beta)}}^2\right.\\& \left.
\pm\frac{M(\delta)}{6}\left| \frac{(N_{ij}-M_{ij}N_{i})+{\beta(1-kM_{ij})}}{M_{ij}(N_{i}+k\beta)} \right|^3\right]\indic_{A_{i}},
\end{align*}
where by our definition of $A_i$, we set
\[
M(\delta):=\max_{z\in\left[-\frac{2\delta}{1-\delta}, \frac{2\delta}{1-\delta}\right]}|h'''(z)|.
\]

Now, we bound individual terms.
Taking out $h'(0)$, the first term evaluates to:
\begin{align*}
&\EE\sum_{j\in[k]}\frac{N_{ij}+{\beta}}{N_{i}+k\beta} \frac{(N_{ij}-M_{ij}N_{i})+{\beta(1-kM_{ij})}}{M_{ij}(N_{i}+k\beta)}\indic_{A_{i}}\\
&=
\EE\sum_{j\in[k]}((N_{ij}-n'\pi_i M_{ij})+(n'\pi_i M_{ij}+{\beta})) \frac{(N_{ij}-M_{ij}N_{i})+{\beta(1-kM_{ij})}}{M_{ij}(N_{i}+k\beta)^2}\indic_{A_{i}}\\
&=
 \EE\sum_{j\in[k]} \frac{(N_{ij}-n'\pi_i M_{ij})}{M_{ij}} \frac{N_{ij}-n'\pi_i M_{ij})+(n'\pi_i M_{ij}-M_{ij}N_{i})+{\beta(1-kM_{ij})}}{(N_{i}+k\beta)^2}\indic_{A_{i}}\\
&+
\frac{(n'\pi_i M_{ij}+{\beta})}{M_{ij}} \frac{(N_{ij}-M_{ij}N_{i})+{\beta(1-kM_{ij})}}{(N_{i}+k\beta)^2}\indic_{A_{i}}\\
&=
 \EE\sum_{j\in[k]} \frac{(N_{ij}-n'\pi_i M_{ij})}{M_{ij}} \frac{(N_{ij}-n'\pi_i M_{ij})}{(N_{i}+k\beta)^2}+\frac{(N_{ij}-n'\pi_i M_{ij})(n'\pi_i -N_{i})}{(N_{i}+k\beta)^2}\\
&+
n'\pi_i \frac{(N_{ij}-M_{ij}N_{i})+{\beta(1-kM_{ij})}}{(N_{i}+k\beta)^2}+\frac{o(1)}{n}\\
&=-\EE\frac{(N_{i}-n'\pi_i)^2}{(N_{i}+k\beta)^2}
 +\EE\sum_{j\in[k]}\frac{1}{M_{ij}}\frac{(N_{ij}-n'\pi_i M_{ij})^2}{(N_{i}+k\beta)^2}
 +\frac{o(1)}{n}\\
&=-\EE\frac{(N_{i}-n'\pi_i)^2}{(n'\pi_i+k\beta)^2}
 +\EE\sum_{j\in[k]}\frac{1}{M_{ij}}\frac{(N_{ij}-n'\pi_i M_{ij})^2}{(n'\pi_i+k\beta)^2}
 +\frac{o(1)}{n}.
\end{align*}

Taking out $h''(0)/2$, the second term evaluates to: 
\begin{align*}
&\EE\sum_{j\in[k]}\frac{N_{ij}+{\beta}}{N_{i}+k\beta} \Paren{\frac{(N_{ij}-M_{ij}N_{i})+{\beta(1-kM_{ij})}}{M_{ij}(N_{i}+k\beta)}}^2\indic_{A_{i}}\\
&=
\EE\sum_{j\in[k]}((N_{ij}-M_{ij}N_{i})+(M_{ij}N_{i}+{\beta})) \frac{\Paren{(N_{ij}-M_{ij}N_{i})+{\beta(1-kM_{ij})}}^2}{M_{ij}^2(N_{i}+k\beta)^3}\indic_{A_{i}}\\
&=
\EE\sum_{j\in[k]}(N_{ij}-M_{ij}N_{i}) \frac{\Paren{(N_{ij}-M_{ij}N_{i})+{\beta(1-kM_{ij})}}^2}{M_{ij}^2(N_{i}+k\beta)^3}\indic_{A_{i}}\\
&+(M_{ij}N_{i}+{\beta}) \frac{\Paren{(N_{ij}-M_{ij}N_{i})+{\beta(1-kM_{ij})}}^2}{M_{ij}^2(N_{i}+k\beta)^3}\indic_{A_{i}}\\
&=
\EE\sum_{j\in[k]}(M_{ij}N_{i}+{\beta}) \frac{\Paren{(N_{ij}-M_{ij}N_{i})+{\beta(1-kM_{ij})}}^2}{M_{ij}^2(N_{i}+k\beta)^3}+
\frac{o(1)}{n}\\
&=\EE\sum_{j\in[k]}\frac{1}{M_{ij}}\frac{(N_{ij}-M_{ij}N_{i})^2}{(N_{i}+k\beta)^2}+2\EE\sum_{j\in[k]}(M_{ij}N_{i}+{\beta}) \frac{(N_{ij}-M_{ij}N_{i}){\beta(1-kM_{ij})}}{M_{ij}^2(N_{i}+k\beta)^3}+\frac{o(1)}{n}\\
&=\EE\sum_{j\in[k]}\frac{1}{M_{ij}}\frac{(N_{ij}-n'M_{ij}\pi_i+n'M_{ij}\pi_i-M_{ij}N_{i})^2}{(N_{i}+k\beta)^2}+\frac{o(1)}{n}\\
&=-\EE\frac{(N_{i}-n'\pi_i)^2}{(N_{i}+k\beta)^2}+\EE\sum_{j\in[k]}\frac{1}{M_{ij}}\frac{(N_{ij}-n'M_{ij}\pi_i)^2}{(N_{i}+k\beta)^2}+\frac{o(1)}{n}\\
&=-\EE\frac{(N_{i}-n'\pi_i)^2}{(n'\pi_i+k\beta)^2}
 +\EE\sum_{j\in[k]}\frac{1}{M_{ij}}\frac{(N_{ij}-n'\pi_i M_{ij})^2}{(n'\pi_i+k\beta)^2}
 +\frac{o(1)}{n}.
\end{align*}
\vspace{+2em}

Finally, taking out ${M(\delta)}/{6}$, the last term can be bounded as
\begin{align*}
&\EE \sum_{j\in[k]} \frac{N_{ij}+{\beta}}{N_{i}+k\beta}\left| \frac{(N_{ij}-M_{ij}N_{i})+{\beta(1-kM_{ij})}}{M_{ij}(N_{i}+k\beta)} \right|^3\indic_{A_{i}}\\
&\leq 4 \sum_{j\in[k]} \frac{ \EE\left|N_{ij}-M_{ij}N_{i} \right|^3+\left|{\beta(1-kM_{ij})} \right|^3}{M_{ij}^3(n'\pi_i(1-\delta/2)+k\beta)^3}\indic_{A_{i}}\\
&\leq4 \sum_{j\in[k]} \frac{ 4\EE\left|N_{ij}-M_{ij}n'\pi_i\right|^3+4M_{ij}^3\EE\left|n'\pi_i-N_{i} \right|^3+\left|{\beta(1-kM_{ij})} \right|^3}{M_{ij}^3(n'\pi_i(1-\delta/2)+k\beta)^3}\indic_{A_{i}}\\
&=\frac{o(1)}{n},
\end{align*}
where we have used the ineuqality $(a+b)^3\leq 4(|a|^3+|b|^3)$ twice.

By the definition of $h(\cdot)$, we have
\[
h'(0)=-f'(0)
\]
and
\[
\frac{h''(0)}{2}=f'(0)+\frac{f''(0)}{2}.
\]
Hence, consolidating all the previous results,
\begin{align*}
&\EE[D_f(M({i,\cdot}),\hat{M}^{+\beta}_{X^n}(i,\cdot))]\\
&=\frac{f''(0)}{2(n'\pi_i+k\beta)^2}\EE\Paren{-(N_{i}-n'\pi_i)^2
 +\sum_{j\in[k]}\frac{1}{M_{ij}}(N_{ij}-n'\pi_i M_{ij})^2}+\frac{o(1)}{n}\\
&=\frac{f''(0)}{2(n'\pi_i+k\beta)^2}\Paren{-\EE N_{i}^2
 +\sum_{j\in[k]}\frac{1}{M_{ij}}\EE N_{ij}^2}+\frac{o(1)}{n}.
\end{align*}

It remains to analyze $\EE N_{i}^2$ and $\EE N_{ij}^2$.

For $\EE N_{i}^2$, we have
\begin{align*}
\EE N_{i}^2
&=\EE\Paren{\sum_{t< n} \indic_{X_{t}=i}}^2\\
&=\EE\Paren{\sum_{t< n} \indic_{X_{t}=i} } +2\EE\Paren{\sum_{t_1<t_2< n} \indic_{X_{t_1}=i}\indic_{X_{t_2}=i}}\\
&=\sum_{t< n} \Pr(X_{t}=i)+2\sum_{t_1<t_2< n} \Pr(X_{t_1}=i)\Pr(X_{t_2}=i|X_{t_1}=i)\\
&=n'\pi_i+\mathcal{O}(1)+2\sum_{t_1<t_2<n} \Paren{\pi_i \pm \mathcal{O}\Paren{\frac{1}{n^2}}}\Pr(X_{t_2}=i|X_{t_1}=i)\\
&=n'\pi_i+\mathcal{O}(1)+2\pi_i\sum_{t_1<t_2< n} \Pr(X_{t_2}=i|X_{t_1}=i)\\
&=n'\pi_i+\mathcal{O}(1)+2\pi_i\sum_{t_1<t_2< n} \sum_{j\in[k]}\Pr(X_{t_2}=i|X_{t_1+1}=j)\Pr(X_{t_1+1}=j|X_{t_1}=i)\\
&=n'\pi_i+\mathcal{O}(1)+2\pi_i\sum_{j\in[k]}\sum_{t_1<t_2< n} \Pr(X_{t_2}=i|X_{t_1+1}=j)M_{ij}.
\end{align*}

For $\EE N_{ij}^2$, we have
\begin{align*}
\EE N_{ij}^2
&=\EE\Paren{\sum_{t<n} \indic_{X_{t}=i} \indic_{X_{t+1}=j}}^2\\
&=\EE\Paren{\sum_{t<n} \indic_{X_{t}=i} \indic_{X_{t+1}=j}} +2\EE\Paren{\sum_{t_1<t_2<n} \indic_{X_{t_1}=i}\indic_{X_{t_1+1}=j}\indic_{X_{t_2}=i}\indic_{X_{t_2+1}=j}}\\
&=M_{i,j}\sum_{t<n} \Pr(X_{t}=i)+2\sum_{t_1<t_2<n} \Pr(X_{t_1}=i)M_{ij}\Pr(X_{t_2}=i|X_{t_1+1}=j)M_{ij}\\
&=M_{i,j}n'\pi_i+\mathcal{O}(1)+2\sum_{t_1<t_2<n} \Paren{\pi_i \pm \mathcal{O}\Paren{\frac{1}{n^2}}}\Pr(X_{t_2}=i|X_{t_1+1}=j)M_{ij}^2\\
&=M_{i,j}n'\pi_i+\mathcal{O}(1)+2\pi_iM_{ij}^2\sum_{t_1<t_2<n} \Pr(X_{t_2}=i|X_{t_1+1}=j).
\end{align*}

Thus, the desired quantity evaluates to
\begin{align*}
-\EE N_{i}^2 +\sum_{j\in[k]}\frac{1}{M_{ij}}\EE N_{ij}^2
 &=\sum_{j\in[k]}\Paren{n'\pi_i+\mathcal{O}(1)+2\pi_iM_{ij}\sum_{t_1<t_2<n} \Pr(X_{t_2}=i|X_{t_1+1}=j)}\\
 &-\Paren{n'\pi_i+\mathcal{O}(1)+2\pi_i\sum_{j\in[k]}\sum_{t_1<t_2< n} \Pr(X_{t_2}=i|X_{t_1+1}=j)M_{ij}}\\
 &\leq (k-1)n'\pi_i+\mathcal{O}(k).
 \end{align*}

The above inequality yields
\begin{align*}
&\EE[D_f(M({i,\cdot}),\hat{M}^{+\beta}_{X^n}(i,\cdot))]\\
&=\frac{f''(0)}{2(n'\pi_i+k\beta)^2}\EE\Paren{-(N_{i}-n'\pi_i)^2
 +\sum_{j\in[k]}\frac{1}{M_{ij}}(N_{ij}-n'\pi_i M_{ij})^2}+\frac{o(1)}{n}\\
&\lesssim\frac{(k-1)f''(0)}{2n\pi_i}.
\end{align*}
This completes our proof for ordinary $f$-divergences.

\subsection{\texorpdfstring{$L_2$}{L2}-divergence upper bound}
Finally, we consider the $L_2$-divergence. Again, we assume that the sample sequence $X^n\sim (M)$ and $\mu$ satisfies
\[
D_{L_1}(\pi, \mu)<\frac{1}{n^2}.
\]
Instead of using an add-constant estimator, we use the following add-${\sqrt{N_{i}}}/{k}$ estimator:
\[
\hat{M}^{+{\sqrt{N_{i}}}/{k}}_{X^n}(i,j) := \frac{N_{ij}+{\sqrt{N_{i}}}/{k}}{N_{i}+{\sqrt{N_{i}}}},\ \forall i,j\in[k]\times [k].
\]
Now, consider the expected loss for a particular state $i\in[k]$.
\begin{align*}
\EE\sum_{j\in[k]}\Paren{M_{ij}-\frac{N_{ij}+{\sqrt{N_{i}}}/{k}}{N_{i}+{\sqrt{N_{i}}}}}^2
&=\sum_{j\in[k]}\EE\Paren{\frac{(M_{ij}N_i-N_{ij})+\sqrt{N_{i}}(M_{ij}-{1}/{k})}{N_{i}+{\sqrt{N_{i}}}}}^2\\
&=\sum_{j\in[k]}\EE\Paren{\frac{M_{ij}N_i-N_{ij}}{N_{i}+{\sqrt{N_{i}}}}}^2+\Paren{\frac{\sqrt{N_{i}}(M_{ij}-{1}/{k})}{N_{i}+{\sqrt{N_{i}}}}}^2\\&+2\EE\frac{(M_{ij}N_i-N_{ij})(\sqrt{N_{i}}(M_{ij}-{1}/{k}))}{\Paren{N_{i}+{\sqrt{N_{i}}}}^2}.
\end{align*}

We first show that the last term is negligible. Noting that
\begin{align*}
\EE\sum_{j\in[k]}\frac{(M_{ij}N_i-N_{ij})(\sqrt{N_{i}}(M_{ij}-{1}/{k}))}{\Paren{N_{i}+{\sqrt{N_{i}}}}^2}
&=\EE\sum_{j\in[k]}\frac{(M_{ij}N_i-N_{ij})M_{ij}}{\sqrt{N_{i}}\Paren{\sqrt{N_{i}}+1}^2},
\end{align*}
we can apply Taylor expansion to the function 
\[
f(x):=\frac{1}{\sqrt{x}(\sqrt{x}+1)^2}
\]
at point $x=\EE[N_{i}]$ and set $x=N_i$:
\[
f(x)=f(\EE[N_{i}])+f'(N_{i}')(N_i-\EE[N_{i}]),
\]
where $N_i'\in[\EE[N_{i}], N_i]$. Hence,
\begin{align*}
&\EE\sum_{j\in[k]}\frac{(M_{ij}N_i-N_{ij})M_{ij}}{\Paren{N_{i}+{\sqrt{N_{i}}}}\Paren{\sqrt{N_{i}}+1}}\\
&=\EE\sum_{j\in[k]}\Paren{f(\EE[N_{i}])+f'(N_{i}')(N_i-\EE[N_{i}])}(M_{ij}N_i-N_{ij})M_{ij}\\
&=\EE\sum_{j\in[k]}\frac{(M_{ij}N_i-N_{ij})M_{ij}}{\sqrt{\EE[N_{i}]}(\sqrt{\EE[N_{i}]}+1)^2}+\frac{-3\sqrt{N_{i}'}-1}{2(\sqrt{N_{i}'}+1)^3(N_{i}')^{3/2}}(N_i-\EE[N_{i}])(M_{ij}N_i-N_{ij})M_{ij}\\
&\leq\EE\sum_{j\in[k]}\mathcal{O}\Paren{\frac{1}{n^{7/2}}}+\frac{-3\sqrt{N_{i}'}-1}{2(\sqrt{N_{i}'}+1)^3(N_{i}')^{3/2}}M_{ij}\sqrt{\EE(N_i-\EE[N_{i}])^2\EE(M_{ij}N_i-N_{ij})^2}\\
&=\Theta\Paren{\frac{1}{n^{3/2}}}.
\end{align*}
where the last step follows from Lemma~\ref{lemma18}. It remains to consider
\[
\EE\Paren{\frac{M_{ij}N_i-N_{ij}}{N_{i}+{\sqrt{N_{i}}}}}^2={\frac{\EE(M_{ij}N_i-N_{ij})^2}{(n\pi_i+{\sqrt{n\pi_i}})^2}}+\frac{o(1)}{n}.
\]
According to the previous derivations, for $M_{ij}^2\EE N_{i}^2$, we have
\begin{align*}
M_{ij}^2\EE N_{i}^2
&=M_{ij}^2\sum_{t< n} \Pr(X_{t}=i)+2M_{ij}^2\sum_{t_1<t_2< n} \Pr(X_{t_1}=i)\Pr(X_{t_2}=i|X_{t_1}=i).
\end{align*}

For $\EE N_{ij}^2$, we have
\begin{align*}
\EE N_{ij}^2
&=M_{ij}\sum_{t<n} \Pr(X_{t}=i)+2M_{ij}^2\sum_{t_1<t_2<n} \Pr(X_{t_1}=i)\Pr(X_{t_2}=i|X_{t_1+1}=j).
\end{align*}

For 2$M_{ij}\EE N_{ij} N_i$, we have
\begin{align*}
2M_{ij}\EE N_{ij}N_i
&=2M_{ij}\EE\Paren{\sum_{t<n} \indic_{X_{t}=i} \indic_{X_{t+1}=j}}\Paren{\sum_{t< n} \indic_{X_{t}=i}}\\
&=2M_{ij}\EE\Paren{\sum_{t<n} \indic_{X_{t}=i} \indic_{X_{t+1}=j}} + 2M_{ij}\EE\Paren{\sum_{t_1<t_2< n} \indic_{X_{t_1}=i} \indic_{X_{t_1+1}=j}\indic_{X_{t_2}=i}}\\
&+2M_{ij}\EE\Paren{\sum_{t_2<t_1<n} \indic_{X_{t_1}=i} \indic_{X_{t_1+1}=j}\indic_{X_{t_2}=i}}\\
&=2M_{ij}^2\sum_{t<n} \Pr(X_{t}=i)+2M_{ij}^2\sum_{t_1<t_2< n}\Pr(X_{t_2}=i|X_{t_1+1}=j)\Pr(X_{t_1}=i)\\
&+2M_{ij}^2\sum_{t_2<t_1<n}\Pr(X_{t_1}=i|X_{t_2}=i)\Pr(X_{t_2}=i).
\end{align*}

Therefore,
\[
\EE\Paren{M_{ij}N_i-N_{ij}}^2=M_{ij}(1-M_{ij})n\pi_i+\frac{o(1)}{n}.
\]

Finally,
\[
\EE\sum_{j\in [k]}\Paren{\frac{\sqrt{N_{i}}(M_{ij}-{1}/{k})}{N_{i}+{\sqrt{N_{i}}}}}^2=\frac{o(1)}{n}+{\frac{-\frac{1}{k}\EE[N_i]+\EE[N_i]\sum_{j\in[k]}M_{ij}^2}{(n\pi_i+{\sqrt{n\pi_i}})^2}}.
\]
We have
\begin{align*}
\EE\sum_{j\in[k]}\Paren{M_{ij}-\frac{N_{ij}+{\sqrt{N_{i}}}/{k}}{N_{i}+{\sqrt{N_{i}}}}}^2
&=\Paren{1-\frac{1}{k}} \frac{1}{n\pi_i}+\frac{o(1)}{n}.
\end{align*}
This completes our proof for the $L_2$-divergence.
\end{document}